\begin{document}
\title{On Tail Decay Rate Estimation of Loss Function Distributions }
\author{\name Etrit Haxholli \email etrit.haxholli@inria.fr\\
       \addr Inria\\
       Univesity Côte d'Azur\\
       2004 Rte des Lucioles, 06902 Valbonne, France
       \AND
       \name Marco Lorenzi \email marco.lorenzi@inria.fr \\
       \addr Inria\\
       Univesity Côte d'Azur\\
       2004 Rte des Lucioles, 06902 Valbonne, France}

\editor{}

\maketitle

\begin{abstract}



The study of loss function distributions is critical to characterize a model's behaviour on a given machine learning problem. For example, while the quality of a model is commonly determined by the average loss assessed on a testing set, this quantity does not reflect the existence of the true mean of the loss distribution. Indeed, the finiteness of the statistical moments of the loss distribution is related to the thickness of its tails, which are generally unknown.



Since typical cross-validation schemes determine a family of testing loss distributions conditioned on the training samples, the total loss distribution must be recovered by marginalizing over the space of training sets.
As we show in this work, the finiteness of the sampling procedure negatively affects the reliability and efficiency of classical tail estimation methods from the Extreme Value Theory, such as the Peaks-Over-Threshold approach.
In this work we tackle this issue by developing a novel general theory for estimating the tails of marginal distributions, when there exists a large variability between locations of the individual conditional distributions underlying the marginal. To this end, we demonstrate that under some regularity conditions, the shape parameter of the marginal distribution is the maximum tail shape parameter of the family of conditional distributions. We term this estimation approach as \emph{cross-tail estimation (CTE)}. 

We test cross-tail estimation in a series of experiments on simulated and real data\footnote{The code is available at https://github.com/ehaxholli/CTE}, showing the improved robustness and quality of tail estimation as compared to classical approaches, and providing evidence for the relationship between overfitting and loss distribution tail thickness.  

\begin{keywords}
  Extreme Value Theory, Tail Modelling, Loss Function Distributions, Peaks-Over-Threshold, Cross-Tail-Estimation, Model Ranking
\end{keywords}

\end{abstract}

\section{Introduction}

Loss function distributions form critical subjects of analysis, serving as barometers for machine learning model performance. In the context of a particular model and associated machine learning task, the authentic distribution of the loss function is typically elusive; we predominantly have access to a finite sample set, borne from diverse choices of training and testing sets. To facilitate performance comparisons across different models based on the underlying loss function distributions, a spectrum of methodologies has been established. Traditional strategies derive from information criteria such as the Akaike Information Criterion (AIC) \cite{akaike1973information, 1100705}, an asymptotic approximation of the Kullback-Leibler divergence between the true data distribution and the fitting candidate, and its corrected version (AICc) \cite{doi:10.1080/03610927808827599, 10.1093/biomet/76.2.297}, in addition to the Bayesian Information Criterion (BIC) \cite{10.1214/aos/1176344136}. The application of these information criteria, especially the AIC, is often constrained by the multiple inherent approximations and assumptions \cite{burnham2007model}, making them less feasible in certain scenarios. However, it warrants mention that more recent penalized criteria have considerably expanded their suitability for realistic setups \cite{10.1214/aos/1176324452, JMLR:v10:arlot09a}. Simultaneously, other methodologies, termed splitting/resampling methods, have been devised, wherein a subset of the data is deployed to assess the performance of the trained model. This group of methodologies is expansive, predicated on a diverse range of partitioning and evaluation tactics devised to address data heterogeneity and imbalance \cite{10.2307/2342192, cochran2007sampling}.

In the domain of cross-validation strategies, the common metric employed for gauging model performance is the sample mean of the loss function distribution. This practice, though invariably providing a finite numerical value, does not assure the existence of the first statistical moment or those of higher order.
Moreover, this metric, in spite of its prevalence, should not be construed as a sole indicator of the model's performance robustness or reliability, as it potentially overlooks the nuances and intricacies inherent to the underlying data distribution and model architecture. While it is true that MSE (or AIC) allow to rank models according to their relative performance on a given dataset, these scores still have limited
value in quantifying the overall stability of a model. From a theoretical perspective, there is a strong correlation between the uppermost existing moment of a distribution and the thickness of its tail. This underscores the significance of examining the behavioural traits and decay rate of the tails of loss function distributions.

In order to proceed, we first must be able to model the tails of distributions and to quantify their "thickness". Extreme Value Theory (EVT) is an established field concerned with modelling the tails of distributions. One of the fundamental results in EVT is the Pickands–Balkema–De Haan Theorem, which states that the tails of a large class of distributions can be approximated with generalized Pareto ones \cite{10.1214/aos/1176343003, de2007extreme}. In practice, the shape and scale parameter of the generalized Pareto are approximated from a finite sample, while its location parameter is always zero. It is the shape parameter which quantifies tail thickness, with larger values corresponding to heavier tails. The resulting estimation method is called Peaks-Over-Threshold (POT).

In the context of distributions of loss functions, for each training set, there is a corresponding conditional loss function distribution over points in the sample space. The actual total loss function distribution, the entity of our interest, is the weighted sum (integral) of all such conditional distributions, that is, it is the distribution created after marginalizing across the space of training datasets. In practice, we have a finite number of conditional distributions, as we have a finite number of training sets. Furthermore, for each of these conditional distributions, we only possess an approximation of them, derived from the samples in the testing set. The empirical approximation of the total loss function distribution therefore consists of the union of the sample sets of conditional distributions. Within this setting, the estimation of the tail shape of the total loss function distribution could be ideally carried out by applying POT on this union of samples.

In theory, as we show in this work, the role of the thickest conditional tails in determining the decay rate of the marginal is preserved, since the marginal and conditional distributions are defined everywhere, which allows the assessment of tails at extreme locations. Unfortunately, in practice, the finiteness of the sampling affects the estimation of the tail of the marginal distribution, as the tails may be poorly or not even represented across different conditional distributions. To be more specific, during marginalization, samples from the tails of heavy tailed distributions can be overshadowed by the samples from the non-tail part of individual thin tailed ones. This suggests that modelling the tails of a marginal distribution by the usual application of POT can give inaccurate results in practice.

In this paper, we develop a general method to mitigate the issue of estimating the tails of marginal distributions, when there exists a large variability between locations of the individual conditional distributions underlying the marginal. The proposed solution enables a reduction in the sample size requirements, in the experiments we conducted. To this end, we demonstrate that under some regularity conditions, the shape parameter of the marginal distribution is precisely the maximum tail shape parameter of the family of conditional distributions. We refer to the method constructed from this result as \emph{cross tail estimation}, due to similarities that it shares with Monte Carlo cross validation. 
Furthermore, we show evidence of polynomial decay of tails of distributions of model predictions, and empirically demonstrate a relationship between the thickness of such tails and overfitting. An additional benefit of using the approach proposed here instead of the standard POT, is the reduced computational time in the case that the marginal is estimated from many conditional distributions.

The following is a summary of the structure of the paper:  In Section 2 we recall some of the main concepts and results from Extreme Value Theory. In Section 3, we state and generalize the main problem, which we tackle in Section 4, by  building our theory. We conclude Section 4, by proving three statements which are useful for the experimental part, and by highlighting the relation between the tail of a distribution and its moments.  In the final section, we show experimentally that our method can improve estimation in practice, as compared to the standard use of POT.

\section{Related Work and Background}

This section initially provides a succinct overview of Monte Carlo cross validation, given its conceptual similarities with the proposed method, 'cross tail estimation'. The subsequent subsection outlines standard results and definitions from extreme value analysis, forming the foundational bedrock for the proofs presented in Section \ref{theory}.

\subsection{Monte Carlo Cross Validation}
Let $D=\{(x_1,y_1),...(x_n,y_n)\}$, be a set of data samples drawn form the same distribution. During each iteration $i$ we sample $k$ samples $D_i=\{(x_{\pi(1)},y_{\pi(1)}),...,(x_{\pi(k)},y_{\pi(k)})\}$ without repetition from the original dataset $D$, and consider it as the training set for that iteration. The set $D\setminus D_i$ is then used as the testing set. The quantity of interest, during iteration $i$ is the sample mean of the loss of the model trained on $D_i$, namely $\hat{f}_{D_i}$, over the points of the testing set:
\begin{equation}\label{crossval1}
   \tilde{M}^L_i:=\frac{1}{|D\setminus D_i|} \sum\limits_{j\in D\setminus D_i} L(\hat{f}_{D_i}(x_j),y_j),
\end{equation}
for a given loss function $L$.
\newline
We evaluate the total performance of the model, based on its average performance over different choices of the training/testing sets, that is, the true evaluation metric is:
\begin{equation}\label{crossval2}
   \tilde{M}^L:=\frac{1}{m} \sum\limits_{i \in [m]} \tilde{M}^L_i=\frac{1}{m} \sum\limits_{i \in [m]} \frac{1}{|D\setminus D_i|} \sum\limits_{j\in D\setminus D_i} L(\hat{f}_{D_i}(x_j),y_j),
\end{equation} where $m$ is the number of iterations.
\newline
 A detailed discussion on cross validation, elucidating its similarities with our proposed method for tail estimation in marginal loss function distributions, namely 'cross tail estimation', is presented in Subsection \ref{paralcrossval}.

\subsection{Extreme Value Theory}

Extreme value theory (EVT) or extreme value analysis (EVA) is a branch of statistics dealing with the extreme deviations from the median of probability distributions. Extreme value theory is closely related to failure analysis and dates back to 1923, when Richard von Mises discovered that the Gumbell distribution is the limiting distribution of the maximum of an iid sequence, sampled from a Gaussian distribution. In 1928, Ronald A. Fisher and Leonard H. C. Tippett  in \cite{1928PCPS...24..180F}, characterized the only three possible non-degenerate limiting distributions of the maximum in the general case: Frechet, Gumbel and Weibull. In 1943, Boris V. Gnedenko, gave a rigorous proof of this fact in \cite{10.2307/1968974}. This result is known  Fisher–Tippett–Gnedenko theorem, and forms the foundation of EVT.  The three aforementioned limiting distributions of the maximum can be written in compact form and they are known as the class of extreme value distributions:
\newline
\begin{definition}\label{def1}The Generalized Extreme Value Distribution is defined as follows:
\begin{equation}\label{gevd}
    G_{\xi,a,b}(x)=e^{-(1+\xi (ax+b))^{-\frac{1}{\xi}}}, \ \ \  1+\xi(a x+b)>0,
\end{equation}
where $b \in \mathbb{R}$, $\xi \in \mathbb{R}\setminus \{0\}$ and  $a>0$. For $\xi=0$, we define the generalized Extreme Value Distribution as the limit when $\xi \rightarrow 0$, that is 
\begin{equation}\label{gumbell}
    G_{0,a,b}(x)=e^{-e^{-ax-b}}.
\end{equation}
\end{definition}

\begin{theorem}[Fisher–Tippett–Gnedenko]\label{ftg}: Let $X$ be a real random variable with distribution $F_X$. Denote by $\{X_1, X_2,...,X_n\}$ a set of iid samples from the distribution $F_X$, and define $M_n=\max\{X_1,...,X_n\}$. If there exist two sequences $\{c_i>0\}_{i \in \mathbb{N}}$ and $\{d_i \in \mathbb{R}\}_{i \in \mathbb{N}}$, such that
\begin{equation}\label{2.2}
c_n^{-1}(M_n-d_n)\xrightarrow{d} F \text{ as } n\rightarrow\infty,
\end{equation}
for some non-degenerate distribution $F$, then we must have $F(x)=G_{\xi,a,b}(x)$, for some $b, \xi \in \mathbb{R}, a>0$.
\end{theorem}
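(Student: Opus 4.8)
To prove Theorem~\ref{ftg}, the plan is to follow the classical route through \emph{max-stability} and then to classify all max-stable laws. First I would iterate the defining convergence along arithmetic subsequences: writing $M_{nk}$ as the maximum of $k$ independent blocks of size $n$ gives $F_X^{nk}(c_n x + d_n) = \bigl(F_X^n(c_n x + d_n)\bigr)^k \to F(x)^k$, while applying the hypothesis directly along the subsequence $nk$ gives $F_X^{nk}(c_{nk}x + d_{nk}) \to F(x)$. Thus $M_{nk}$, centered and scaled in two different ways, converges to the two non-degenerate laws $F^k$ and $F$. The convergence of types theorem (Khinchin's lemma) then forces $c_{nk}/c_n \to \alpha_k \in (0,\infty)$ and $(d_{nk}-d_n)/c_n \to \beta_k \in \mathbb{R}$, with the two limit laws related by
\[
F^k(\alpha_k x + \beta_k) = F(x), \qquad k = 1, 2, \ldots
\]
This max-stability identity is the crux; the rest of the argument is devoted to showing that it forces $F = G_{\xi,a,b}$.

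Next I would linearize. Setting $\psi := -\log F \ge 0$, a nonincreasing function on the support of $F$, max-stability becomes $k\,\psi(\alpha_k x + \beta_k) = \psi(x)$; passing to the generalized inverse $U$ of $1/\psi$ turns this into $U(kt) = \alpha_k\, U(t) + \beta_k$. Applying this identity twice yields $\alpha_{k\ell} = \alpha_k \alpha_\ell$, so $k \mapsto \alpha_k$ is multiplicative; since $U$ is monotone, the relation can be interpolated from the integers to all real $s > 0$, and the only monotone solutions of the multiplicative Cauchy equation are $\alpha_s = s^{\gamma}$ for a single real $\gamma$, the shape. Substituting back, the relation $U(st) = s^{\gamma} U(t) + \beta_s$ is solved --- for instance by differentiating in $s$ at $s = 1$ and integrating the resulting first-order ODE --- by $U(t) = A\,(t^{\gamma}-1)/\gamma + C$ when $\gamma \ne 0$ and $U(t) = A \log t + C$ when $\gamma = 0$, with $A > 0$ and $C \in \mathbb{R}$. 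Inverting $U$ recovers $F$ up to an affine change of its argument, and matching constants produces exactly the two forms in Definition~\ref{def1}: $\gamma \ne 0$ gives $G_{\xi,a,b}$ with $\xi$ a signed multiple of $\gamma$, while $\gamma = 0$ gives the Gumbel law $G_{0,a,b}$. The residual affine freedom is precisely the freedom in the normalizing sequences $c_n, d_n$, and is carried by the parameters $a > 0$ and $b \in \mathbb{R}$.

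The main obstacle is twofold. First, the convergence of types theorem has to be available: without the fact that a non-degenerate weak limit pins down its centering and scaling up to an affine map, the constants $\alpha_k, \beta_k$ are not well defined, so this lemma must be proved or carefully cited. Second, promoting the functional equation from the discrete parameter $k \in \mathbb{N}$ to a continuum of exponents $s$ and excluding the pathological, non-measurable solutions of the Cauchy equation genuinely uses the monotonicity of $F$; this regularity step, rather than any single calculation, is where the argument needs care. A minor additional chore is keeping track of the three sign regimes $\gamma > 0$ (Fréchet type, infinite right endpoint), $\gamma < 0$ (reversed-Weibull type, finite right endpoint) and $\gamma = 0$ (Gumbel type), all of which appear uniformly inside the single family $G_{\xi,a,b}$ of Definition~\ref{def1}.
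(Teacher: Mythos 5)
The paper does not prove this theorem: it is quoted as classical background (Fisher--Tippett 1928, Gnedenko 1943) and no argument for it appears in the text or the appendix, so there is no in-paper proof to compare against. Your outline is the standard modern route (as in de Haan--Ferreira or Resnick): block the maximum to get $F_X^{nk}(c_nx+d_n)\to F^k(x)$, invoke the convergence of types theorem to extract the max-stability identity $F^k(\alpha_kx+\beta_k)=F(x)$, pass to $\psi=-\log F$ and the generalized inverse $U$ of $1/\psi$ to linearize it as $U(kt)=\alpha_kU(t)+\beta_k$, use multiplicativity plus monotonicity to force $\alpha_s=s^{\gamma}$, solve the resulting functional equation for $U$, and invert. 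This is correct in structure, and you have identified the two places where the real work lies: the convergence of types theorem (note that it applies here because $F$ non-degenerate implies $F^k$ non-degenerate) and the promotion of the Cauchy-type equation from integer $k$ to real $s>0$, where monotonicity of $U$ is what rules out pathological solutions. What you have written is a proof plan rather than a proof: the convergence of types theorem is used but not established, the interpolation step is asserted rather than carried out, and the final matching of the three regimes $\gamma>0$, $\gamma=0$, $\gamma<0$ to the single parametrization $G_{\xi,a,b}$ of Definition~\ref{def1} (including the domain restriction $1+\xi(ax+b)>0$) is left implicit. None of these gaps is a wrong turn --- each is a known, fillable step --- but a complete proof would need them written out or precisely cited.
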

If $X$ is a random variable as in Theorem \ref{ftg}, such that $F(x)=G_{\xi,a,b}(x)$, we say that $F_X$ is in the Maximum Domain of Attraction of $G_{\xi,a,b}(x)$, and we write $F_X \in MDA(\xi)$. Depending on whether $\xi>0$, $\xi=0$, $\xi<0$, we say that $F_X$ is in the MDA of a Frechet, Gumbell, or Weibull distribution respectively. 
\newline
\begin{definition} A Generalized Pareto distribution with location parameter zero is defined as below:
\begin{align}\label{d1_1}
    G_{\xi,\sigma}(w)=
    \begin{cases}
      1-(1+\xi \frac{w}{\sigma}))^{-\frac{1}{\xi}} & \text{for $\xi\neq 0$}
      \\
      1-e^{-\frac{w}{\sigma}} & \text{for $\xi=0$}
    \end{cases}  ,
\end{align}
where $w >0$ when $\xi>0$ and $0<w<-\frac{\sigma}{\xi}$ for $\xi<0$. The shape parameter is denoted by $\xi$, while the scale parameter by $\sigma$.
\end{definition}
 \cite{10.1214/aos/1176343003}, and \cite{10.1214/aop/1176996548} proved that the limiting distribution of samples larger than a threshold is a Generalized Pareto distribution, whose location parameter is zero. 
\begin{theorem}[Pickands–Balkema–De Haan]: Let $X$ be a random variable with distribution $F_X$ and $x_F\leq\infty$ such that $\forall x>x_F,\  \bar{F}_X(x)=0$. Then $F_X\in MDA(\xi) \iff \exists g: (0,\infty)\rightarrow(0,\infty)$ such that 
\begin{equation}\label{pickands}
\lim_{u \rightarrow x_F}\sup_{y\in[0,x_F-u]}|\bar{F}^X_u(y)-\bar{G}_{\xi,g(u)}(y)|=0,
\end{equation} where $\bar{F}^X_u(y)=\frac{1-F_X(y+u)}{1-F_X(u)}.$
\end{theorem}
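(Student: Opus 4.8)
The plan is to prove the two implications separately, doing the bulk of the work in the forward direction, and to reduce everything to a single \emph{pointwise} limit relation for the tail $\bar{F}_X$ which is then promoted to the uniform statement \eqref{pickands} by a monotonicity argument.

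First I would record the reformulation. Since $\bar{F}^X_u(y) = \bar{F}_X(y+u)/\bar{F}_X(u)$ and $\bar{G}_{\xi,g(u)}(y) = (1+\xi y/g(u))^{-1/\xi}$, the substitution $y = x\,g(u)$ together with writing $a := g$ shows that \eqref{pickands} is equivalent to
\[
\lim_{u\to x_F}\frac{\bar{F}_X\bigl(u + x\,a(u)\bigr)}{\bar{F}_X(u)} = (1+\xi x)^{-1/\xi}
\]
holding \emph{locally uniformly} in $x$ on $\{1+\xi x>0\}$, for a suitable positive function $a$. So it suffices to show: $F_X\in MDA(\xi)$ iff such an $a$ exists. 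For the forward implication I would split into the three regimes using the classical analytic characterizations of the domains of attraction. For $\xi>0$ (Fréchet), $\bar{F}_X$ is regularly varying at $x_F=\infty$ with index $-1/\xi$; taking $a(u)=\xi u$ and applying Karamata's uniform convergence theorem for regularly varying functions gives the displayed limit. For $\xi<0$ (Weibull), $x_F<\infty$ and $s\mapsto\bar{F}_X(x_F - 1/s)$ is regularly varying at $\infty$ with index $1/\xi$; choosing $a(u) = -\xi(x_F-u)$ and again invoking Karamata yields the limit, now on the truncated range $0<x<1/|\xi|$, which matches the support of $G_{\xi,\sigma}$. The case $\xi=0$ (Gumbel) is the delicate one: here I would use the von Mises / de Haan representation of $\bar{F}_X$ near $x_F$ in terms of an auxiliary (self-neglecting) function $a$ — e.g. $a(u)=\int_u^{x_F}\bar{F}_X(t)\,dt/\bar{F}_X(u)$ when finite — and show $\bar{F}_X(u+x\,a(u))/\bar{F}_X(u)\to e^{-x}$ locally uniformly, which is exactly the $\xi\to 0$ form of the right-hand side.

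For the converse I would assume the displayed limit for some positive $a$ and deduce the regular (resp. rapid, resp. reflected-tail) variation of $\bar{F}_X$, which by the same classical characterizations places $F_X$ in $MDA(\xi)$; alternatively one can construct the normalizing constants $c_n,d_n$ for $M_n$ directly from $a$ and a sequence $u_n\uparrow x_F$ with $n\bar{F}_X(u_n)\to1$, and verify \eqref{2.2}. In every case the pointwise limit must finally be upgraded to the uniform statement: since for each fixed $u$ the map $x\mapsto \bar{F}_X(u+x\,a(u))/\bar{F}_X(u)$ is non-increasing and the limit $x\mapsto(1+\xi x)^{-1/\xi}$ is continuous and strictly decreasing, a Pólya–Dini type lemma (pointwise convergence of monotone functions to a continuous monotone limit is locally uniform) gives uniformity on the relevant intervals; undoing the substitution $y = x\,a(u)$ recovers \eqref{pickands} with $g = a$.

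The main obstacle I anticipate is the $\xi=0$ case. In the Fréchet and Weibull regimes regular variation and Karamata's theorem do almost all the work, but in the Gumbel domain one must choose the auxiliary function correctly, establish its self-neglecting property ($a(u+x\,a(u))/a(u)\to1$), and verify that the requisite von Mises-type representation is available under the bare hypothesis $F_X\in MDA(0)$ rather than under the stronger von Mises condition — which needs the full strength of de Haan's theory of $\Pi$-variation. The reformulation, the monotonicity upgrade, and the converse are comparatively routine.
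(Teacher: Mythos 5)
This theorem is not proved in the paper: it is quoted as classical background, with the proof deferred to the cited works of Pickands and of Balkema--de Haan, so there is no in-paper argument to compare yours against. Judged on its own terms, your outline is the standard route and is sound as a plan: the reformulation of \eqref{pickands} as $\bar{F}_X(u+x\,a(u))/\bar{F}_X(u)\to(1+\xi x)^{-1/\xi}$, the case split via the analytic characterizations of the three domains of attraction (recorded in the paper as Theorem \ref{mda_rv}), Karamata's uniform convergence theorem for $\xi\neq 0$, and the monotonicity upgrade to uniformity are exactly the ingredients of the textbook proof. Two remarks. First, the supremum in \eqref{pickands} runs over the full range $[0,x_F-u]$, not over compacts, so you need the full P\'olya argument: both $\bar F^X_u$ and $\bar G_{\xi,g(u)}$ are monotone and vanish at the right endpoint, which is what lets pointwise convergence to a continuous limit become uniform on the whole (possibly unbounded) range; your sketch gestures at this but that is the step where the unboundedness is absorbed. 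Second, the $\xi=0$ case is less forbidding than you suggest if Theorem \ref{mda_rv} is taken as given: its third bullet already supplies the representation $\bar F_X(x)=c(x)e^{-\int_w^x dt/a(t)}$ with $a'(x)\to 0$, from which the self-neglecting property $a(u+x\,a(u))/a(u)\to 1$ (hence $\int_u^{u+xa(u)}dt/a(t)\to x$) follows by bounding $|a(u+xa(u))-a(u)|\le x\,a(u)\sup_{t\ge u}|a'(t)|$; the heavy $\Pi$-variation machinery is needed only if one insists on proving that representation itself. The converse remains the thinnest part of your sketch, but the alternative you name --- constructing $c_n,d_n$ from $a$ and a sequence $u_n$ with $n\bar F_X(u_n)\to 1$ and verifying \eqref{2.2} directly --- is the clean way to close it.
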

This result forms the basis of the well-known Peak-Over-Threshold (POT) method which is used in practice to model the tails of distributions. 
The shape parameter can be estimated via different estimators such as the Pickands Estimator or the Deckers-Einmahl-de Haan Estimator (DEdH), \cite{10.1214/aos/1176347397}.

\begin{definition} Let $X_1$, $X_2$,...,$X_n$ be iid samples from the distribution $F_X$. If we denote with $X_{1,n}$, $X_{2,n}$,..., $X_{n,n}$ the samples sorted in descending order, then the Pickands estimator is defined as follows:
\begin{equation}\label{picks_estimator}
\hat{\xi}^{(P)}_{k,n}=\frac{1}{\ln{2}}\ln{\frac{X_{k,n}-X_{2k,n}}{X_{2k,n}-X_{4k,n}}}.
\end{equation}

\end{definition} 

\begin{definition}\label{dedh_estimator_def} Let $X_1$, $X_2$,...,$X_n$ be iid samples from the distribution $F_X$. If we denote with $X_{1,n}$, $X_{2,n}$,..., $X_{n,n}$ the samples sorted in descending order, then the DEdH estimator is defined as follows:
\begin{equation}\label{dedh_estimator}
\hat{\xi}^{(H)}_{k,n}=1+H_{k,n}^{(1)}+\frac{1}{2}\left(\frac{(H_{k,n}^{(1)})^2}{H_{k,n}^{(2)}} -1\right)^{-1},
\end{equation}
where
\begin{equation}
H^{(1)}_{k,n}=\frac{1}{k}\sum_{j=1}^k (\ln{X_{j,n}}-\ln{X_{{k+1},n}})
\end{equation}
and 
\begin{equation}
H^{(2)}_{k,n}=\frac{1}{k}\sum_{j=1}^k (\ln{X_{j,n}}-\ln{X_{{k+1},n}})^2.
\end{equation}
\end{definition}
An important result which we are going to use frequently in our proofs is Theorem \ref{mda_rv}, which can be found in \cite{embrechts2013modelling, de2007extreme}, and gives the connection between the maximum domain of attraction and slowly varying functions. 
\begin{definition} A positive measurable function L is called slowly varying if it is defined in some neighborhood of infinity and if:
\begin{equation}
    \lim_{x\rightarrow \infty}\frac{L(ax)}{L(x)}=1, \text{for all } a>0.
\end{equation}
\end{definition}
\begin{theorem}[Representation Theorem, see \cite{10.2307/2038824}]:
A positive measurable function L on $[x_0,\infty]$ is slowly varying if and only if it can be written in the form:
\begin{equation}\label{slovar_rep}
L(x)=e^{c(x)}e^{\int_{x_0}^x \frac{u(t)}{t} dt},
\end{equation} where $c(t)$ and $u(t)$, are measurable bounded functions such that $\lim_{x\rightarrow\infty}c(t)=c_0\in (0,\infty)$ and $u(t)\rightarrow0$ as $t\rightarrow\infty$.
\end{theorem}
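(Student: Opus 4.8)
The plan is to treat the two directions separately: the forward implication is a short computation, and the converse is the real work, handled by reducing to an additive statement, proving a uniform‑convergence property, and then manufacturing the representation by local averaging. For the ``if'' direction: given the representation, for any $a>0$ one has $L(ax)/L(x)=\exp(c(ax)-c(x))\cdot\exp\big(\int_x^{ax} u(t)/t\,dt\big)$; the first factor tends to $1$ because $c$ has a finite limit, and after the substitution $t=xs$ the second exponent becomes $\int_1^a u(xs)/s\,ds$, which tends to $0$ by dominated convergence since $u$ is bounded and $u(xs)\to0$ for each fixed $s>0$. For the converse I would first reduce to an additive problem: put $g(y):=\log L(e^y)$ for $y\ge y_0':=\log x_0$; since $L$ is positive and measurable, $g$ is real‑valued and measurable, and slow variation of $L$ is exactly the statement that $g(y+t)-g(y)\to0$ as $y\to\infty$ for every fixed $t\in\mathbb R$. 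It then suffices to write $g(y)=\gamma(y)+\int_{y_0'}^y\varepsilon(s)\,ds$ with $\gamma,\varepsilon$ bounded and measurable, $\gamma(y)\to\gamma_0$ finite and $\varepsilon(s)\to0$: undoing $t=e^s$ yields $\log L(x)=\gamma(\log x)+\int_{x_0}^x \varepsilon(\log t)/t\,dt$, i.e.\ the stated form with $c(x)=\gamma(\log x)$, $u(t)=\varepsilon(\log t)$ and $e^{c(x)}\to e^{\gamma_0}\in(0,\infty)$; if one wants $c_0\in(0,\infty)$ itself, shift $\gamma$ by a constant and compensate by subtracting from $\varepsilon$ a compactly supported bump, which alters neither limit.

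\textbf{The crux: uniform convergence.}
The key step is to upgrade $g(y+t)-g(y)\to0$ to convergence that is uniform for $t$ in compact sets, and this is where measurability must be exploited. By a chaining argument it is enough to treat $t\in[0,1]$. Fix $\epsilon>0$ and, for $y\ge y_0'$, set $E_y:=\{t\in[0,3]:|g(y+t)-g(y)|\le\epsilon\}$, which is measurable because $g$ is. For each fixed $t$, $t\in E_y$ once $y$ is large, so $\mathbf 1_{[0,3]\setminus E_y}\to0$ pointwise and dominated convergence gives $|E_y|\to3$; fix $Y$ with $|E_y|>5/2$ for all $y\ge Y$. Now take $y\ge Y$ and $t\in[0,1]$. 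Both $E_y\cap[t,3]$ and $(t+E_{y+t})\cap[t,3]$ have measure greater than $5/2-t$, which exceeds half the length $3-t$ of $[t,3]$ (since $5/2-t>(3-t)/2$ for $t<2$), so the two traces overlap in a set of positive measure; choosing $s$ in it gives $s\in E_y$ and $s-t\in E_{y+t}$, whence $|g(y+s)-g(y)|\le\epsilon$ and $|g(y+s)-g(y+t)|\le\epsilon$, so $|g(y+t)-g(y)|\le2\epsilon$ for all $y\ge Y$, $t\in[0,1]$. Iterating this bound over successive unit intervals shows, as a byproduct, that $g$ is bounded on every bounded subinterval of $[Y,\infty)$.

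\textbf{Building the representation.}
With uniform convergence on $[0,1]$ and local boundedness of $g$ on $[Y,\infty)$ — hence local integrability there — define $H(y):=\int_y^{y+1}g$. A direct computation gives $\int_Y^y\big(g(s+1)-g(s)\big)\,ds=H(y)-H(Y)$, so
\[
g(y)=\big(g(y)-H(y)+H(Y)\big)+\int_Y^y\big(g(s+1)-g(s)\big)\,ds .
\]
Set $\varepsilon(s):=g(s+1)-g(s)$, which is measurable, bounded on $[Y,\infty)$ (locally bounded and $\to0$), and tends to $0$; and $\gamma(y):=g(y)-H(y)+H(Y)$, which is measurable and bounded on $[Y,\infty)$, being locally bounded ($g$ minus the continuous function $H$ plus a constant) with $g(y)-H(y)=\int_0^1\big(g(y)-g(y+s)\big)\,ds\to0$ by uniform convergence, so that $\gamma(y)\to H(Y)=:\gamma_0$. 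This is exactly the additive decomposition, valid on $[Y,\infty)$; translating back as in the first paragraph gives the claimed representation for $x\ge e^Y$, and on the bounded initial segment one invokes the standard convention that $L$ is locally bounded there (or simply states the result for large $x$, which is all the subsequent arguments use). The only genuinely delicate ingredient is the uniform‑convergence step: for a merely pointwise‑slowly‑varying, non‑measurable $g$ the representation fails (such $g$ can be unbounded on every interval), so measurability cannot be dispensed with, and the dominated‑convergence / overlapping‑translates device above is precisely what brings it to bear; everything downstream — the averaging, the elementary identity for $H$, and the change of variables — is routine bookkeeping.
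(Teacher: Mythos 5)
Your argument is correct, but note that the paper does not prove this statement at all: it is quoted as a known result with a citation to the literature, so there is no in-paper proof to compare against. What you have written is essentially the classical textbook proof (as in Seneta or Bingham--Goldie--Teugels): the ``if'' direction by the substitution $t=xs$ and boundedness of $u$; the converse by passing to $g(y)=\log L(e^y)$, proving the Uniform Convergence Theorem via the measure-theoretic overlapping-translates device (your measure bookkeeping on $[t,3]$ checks out: each trace has measure exceeding half of $3-t$ for $t\le 1$, so they intersect), and then extracting the representation from the averaging identity $\int_Y^y\bigl(g(s+1)-g(s)\bigr)\,ds=H(y)-H(Y)$ with $H(y)=\int_y^{y+1}g$. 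The two caveats you flag yourself are the right ones and are handled correctly: the representation is obtained only for $x\ge e^Y$, with the integral's lower limit moved to $x_0$ only under a local-integrability convention on the initial segment (the paper's statement is slightly loose on this point, not your proof), and the normalization $c_0\in(0,\infty)$ is achieved by the constant shift of $\gamma$ compensated by a compactly supported bump subtracted from $\varepsilon$. I see no gap.
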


\begin{proposition}\label{prop1}\cite{mikosch1999regular} If $L$ is slowly varying then for every $\epsilon>0$: 
\begin{equation}\label{slvarpoly}
\lim_{x \rightarrow \infty}x^{- \epsilon} L(x)=0.
\end{equation}
\end{proposition}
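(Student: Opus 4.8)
The plan is to invoke the Representation Theorem stated just above, since it renders $L$ in an explicit multiplicative exponential form that makes the comparison with a polynomial transparent. Write $L(x)=e^{c(x)}\,e^{\int_{x_0}^x \frac{u(t)}{t}\,dt}$ with $c$ bounded, $c(x)\to c_0\in(0,\infty)$, and $u(t)\to 0$ as $t\to\infty$. The first step is to observe that $x^{-\epsilon}$ itself can be put into the same exponential-integral shape: since $\ln x-\ln x_0=\int_{x_0}^x t^{-1}\,dt$, we have $x^{-\epsilon}=x_0^{-\epsilon}\exp\!\left(-\epsilon\int_{x_0}^x \tfrac{1}{t}\,dt\right)$. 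Multiplying the two representations gives $x^{-\epsilon}L(x)=x_0^{-\epsilon}\,e^{c(x)}\exp\!\left(\int_{x_0}^x \frac{u(t)-\epsilon}{t}\,dt\right)$, so the whole problem is reduced to showing that this single exponential factor tends to $0$ while $e^{c(x)}$ stays bounded.

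Next I would exploit $u(t)\to 0$ directly. Fix $T\ge x_0$ such that $|u(t)|<\epsilon/2$ for all $t\ge T$. Splitting the integral at $T$, the contribution over $[x_0,T]$ is a finite constant $C$, because the integrand $\frac{u(t)-\epsilon}{t}$ is bounded there (using boundedness of $u$ and $t\ge x_0>0$) and the interval is compact; the contribution over $[T,x]$ is at most $-\tfrac{\epsilon}{2}\int_T^x t^{-1}\,dt=-\tfrac{\epsilon}{2}\ln(x/T)$. Hence $\int_{x_0}^x \frac{u(t)-\epsilon}{t}\,dt\le C-\tfrac{\epsilon}{2}\ln(x/T)\to-\infty$ as $x\to\infty$, so the exponential factor vanishes in the limit. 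Since $c$ is bounded and in fact $c(x)\to c_0$, the factor $e^{c(x)}$ is bounded (it converges to $e^{c_0}$), and therefore $x^{-\epsilon}L(x)\to 0$, which is the assertion.

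I expect no genuinely hard step here: the only point deserving a little care is the handling of the ``transient'' part of the integral over $[x_0,T]$, which must be controlled via the boundedness of $u$ guaranteed by the Representation Theorem rather than via its limiting behaviour, and one should also note that the argument is valid for every $\epsilon>0$ with no smallness restriction — a larger $\epsilon$ only accelerates the decay. An alternative would be to first establish the Potter bounds and read the claim off from them, but since those bounds are themselves typically derived from the same representation, the direct estimate on a single exponential above is the most economical route.
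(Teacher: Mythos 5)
Your proof is correct and follows essentially the same route as the paper's: both invoke the Representation Theorem, absorb $x^{-\epsilon}$ into the exponential, and show the resulting exponent tends to $-\infty$. The only difference is cosmetic — the paper establishes $\int_{x_0}^{x} \frac{u(t)}{t}\,dt = o(\log x)$ via L'Hopital's rule, whereas you do it by splitting the integral at a point where $|u|<\epsilon/2$; your version is marginally more elementary but the argument is the same.
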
 
\begin{proof}
We give a proof in the Appendix for the sake of completeness.
\end{proof}

\begin{theorem}\label{mda_rv}:
If $X\in MDA(\xi)$ and $x_F$ is such that $\forall x>x_F,\  \bar{F}_X(x)=0$ then:
\begin{itemize}
  \item $\xi>0 \iff \bar{F}_X(x)=x^{-\frac{1}{\xi}}L(x)$, where L is slowly varying,
  \item $\xi<0 \iff \bar{F}_X(x_F-\frac{1}{x})=x^{\frac{1}{\xi}}L(x)$, where L is slowly varying,\
  \item $\xi=0 \iff \bar{F}_X(x)=c(x)e^{-\int_w^x \frac{1}{a(t)}dt},\ w<x<x_F\leq\infty$, where $c$ is a measurable function satisfying $c(x)\rightarrow c>0$ as $x\uparrow x_F$, and $a(x)$ is a positive, absolutely continuous function (with respect to Lebesgue measure) with density $a'(x)$ having $\lim_{x\uparrow x_F}a'(x)=0$. If $x_F<\infty$ then $\lim_{x\uparrow x_F}a(x)=0$ as well.
\end{itemize}
\end{theorem}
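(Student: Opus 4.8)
The plan is to transport the statement about normalized maxima in \eqref{2.2} into an equivalent statement about the \emph{tail quantile function} $U(t):=\inf\{x:F_X(x)\ge 1-1/t\}$ (the generalized inverse of $1/\bar F_X$), and then to read off the three cases from Karamata-type theory. First I would normalize: by the convergence-to-types theorem the constants $a,b$ in \eqref{gevd} can be absorbed into the sequences $\{c_n\},\{d_n\}$, so it suffices to treat the limit $G_\xi$ with $a=1$, $b=0$. Since $\mathbb{P}(M_n\le x)=F_X(x)^n$, convergence $c_n^{-1}(M_n-d_n)\xrightarrow{d}G_\xi$ is equivalent to $n\log F_X(c_nx+d_n)\to\log G_\xi(x)$ at continuity points, and because $F_X(c_nx+d_n)\to1$ one has $\log F_X\sim-\bar F_X$ there, giving
\begin{equation}\label{eq:hub}
n\,\bar F_X(c_nx+d_n)\ \longrightarrow\ (1+\xi x)^{-1/\xi}\quad(\xi\ne0),\qquad n\,\bar F_X(c_nx+d_n)\ \longrightarrow\ e^{-x}\quad(\xi=0).
\end{equation}
Applying a standard inversion lemma (Vervaat) to \eqref{eq:hub} then yields the single relation
\begin{equation}\label{eq:pi}
\lim_{t\to\infty}\frac{U(tx)-U(t)}{g(t)}=\frac{x^{\xi}-1}{\xi}\qquad(x>0),
\end{equation}
for a suitable positive auxiliary function $g$, with the right-hand side read as $\log x$ when $\xi=0$; this relation (together with its converse) is the hub from which all three equivalences follow.

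\textbf{Case $\xi>0$.} From \eqref{eq:pi}, standard manipulations of the identity $U(txy)-U(t)=(U(txy)-U(tx))+(U(tx)-U(t))$ show that $g$ is regularly varying of index $\xi$ and $g(t)\sim\xi\,U(t)$, whence $U(tx)/U(t)\to x^{\xi}$; thus $x_F=\infty$ and $U(t)=t^{\xi}L_0(t)$ with $L_0$ slowly varying (by the Representation Theorem). Since inverting a nondecreasing regularly varying function that tends to $\infty$ produces a regularly varying function with reciprocal index, $1/\bar F_X$ is regularly varying of index $1/\xi$, i.e. $\bar F_X(x)=x^{-1/\xi}L(x)$ with $L$ slowly varying. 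For the converse, assuming $\bar F_X(x)=x^{-1/\xi}L(x)$ I would choose $c_n$ with $n\,\bar F_X(c_n)\to1$ and invoke the uniform convergence theorem (Potter bounds) for regularly varying functions to get $n\,\bar F_X(c_nx)=\frac{\bar F_X(c_nx)}{\bar F_X(c_n)}\,n\,\bar F_X(c_n)\to x^{-1/\xi}$, hence $F_X(c_nx)^n\to e^{-x^{-1/\xi}}$, a Fr\'echet law, so $F_X\in MDA(\xi)$; Proposition~\ref{prop1} records the qualitative content that such tails decay genuinely polynomially.

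\textbf{Cases $\xi<0$ and $\xi=0$.} For $\xi<0$ the right-hand side of \eqref{eq:pi} stays bounded as $x\to\infty$ (it tends to $-1/\xi>0$), which forces $U(\infty)=x_F<\infty$; applying the $\xi>0$ argument to $x_F-U(t)$ shows $x_F-U(t)=t^{\xi}L_0(t)$ is regularly varying of index $\xi$, and inverting gives that $x\mapsto\bar F_X(x_F-1/x)$ is regularly varying of index $1/\xi$, i.e. $\bar F_X(x_F-1/x)=x^{1/\xi}L(x)$; the converse is the analogous direct construction with $d_n=x_F$ and $c_n\downarrow0$ chosen so that $n\,\bar F_X(x_F-c_n)\to1$. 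For $\xi=0$, relation \eqref{eq:pi} says precisely that $U$ lies in de Haan's class $\Pi$ with auxiliary function $g$; the representation theorem for $\Pi$-varying functions expresses $U$ through $g$ and an integral term, and transferring this description back through the inverse onto $\bar F_X$ yields the von Mises form $\bar F_X(x)=c(x)\exp\!\bigl(-\int_w^x a(t)^{-1}\,dt\bigr)$ with $c(x)\to c>0$ and $a$ absolutely continuous, $a'\to0$ (and $a\to0$ if $x_F<\infty$). Conversely, from that representation one checks directly that $\lim_{u\uparrow x_F}\bar F_X\!\bigl(u+y\,a(u)\bigr)/\bar F_X(u)=e^{-y}$ for all $y$, which is von Mises' classical sufficient condition for $F_X\in MDA(0)$ (equivalently, the $\xi=0$ instance of Pickands--Balkema--de Haan run in reverse).

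\textbf{Main obstacle.} I expect two steps to carry the real weight. The first is the inversion passage from \eqref{eq:hub} to \eqref{eq:pi}: one must verify that the normalizing constants can be chosen to depend measurably on the continuous parameter $t$ and that no information is lost in moving between a distribution function and its generalized inverse (this is where the convergence-to-types normalization and the Vervaat lemma do the work). The second, and more demanding, is the $\xi=0$ case, which needs the full representation theorem for $\Pi$-varying functions---a genuine strengthening of the Karamata representation used for $\xi\ne0$---together with a second inversion to move that representation from $U$ onto $\bar F_X$ while keeping control of the auxiliary function $a$ and the slowly converging factor $c(\cdot)$.
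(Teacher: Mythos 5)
The paper does not prove this theorem at all: it is quoted verbatim as a known result and attributed to \cite{embrechts2013modelling, de2007extreme}, so there is no in-paper proof to compare against. Your outline is, in essence, the standard textbook proof of exactly that cited result (de Haan--Ferreira Ch.~1, Embrechts et al.\ Ch.~3): pass from $F_X(c_nx+d_n)^n\to G_\xi(x)$ to $n\bar F_X(c_nx+d_n)\to(1+\xi x)^{-1/\xi}$ via $\log F\sim-\bar F$, invert to extended regular variation of the tail quantile function $U$, and then read off Karamata regular variation for $\xi\neq0$ and $\Pi$-variation/$\Gamma$-variation for $\xi=0$. The individual steps you assert are all correct as stated, and you correctly identify where the real work sits (the Vervaat inversion and the $\Pi$-variation representation theorem). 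Two caveats. First, this is a roadmap rather than a proof: the load-bearing ingredients --- convergence to types, Vervaat's lemma, the uniform convergence/Potter bounds, $g(t)\sim\xi U(t)$, the representation theorem for the class $\Pi$, and the equivalence of $\Gamma$-variation with membership in $MDA(0)$ --- are each invoked by name, not established, so the argument is really a reduction to the same standard theory the paper cites. Second, in the $\xi=0$ case the von Mises-type representation you would obtain from the $\Pi$-variation theorem normally carries an extra factor $g(t)\to1$ in the integrand, i.e.\ $\bar F_X(x)=c(x)\exp\bigl(-\int_w^x g(t)/a(t)\,dt\bigr)$; the theorem as printed in the paper omits this factor (even though the paper's own proof of Lemma~\ref{lemma2} reinstates it), so if you want the representation literally as stated you must additionally absorb $g$ into $a$ or $c$ while preserving the conditions $a'\to0$ and $c(x)\to c>0$ --- a small but non-vacuous step worth making explicit.
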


\section{Setup and Problem Statement}
In the first part of this section, we give an example which illustrates why naively using the POT method can give unsatisfactory results. In the second part, we formalize the problem of tail modelling of total loss distributions and show that such modelling is prone to the weakness described in Subsection 1. In the third subsection, we introduce Cross-Tail-Estimation (CTE) which alleviates these issues, and we demonstrate an analogy between CTE and Cross-Validation. In the last subsection, we prepare the settings for Section \ref{theory}, where we provide theoretical justifications (Theorem \ref{otheorem2}) for the use of CTE.

\subsection{Preamble}

Estimating the tails of marginal distributions via standard methods such as using POT directly, can give unsatisfactory results. In order to get a glimpse of the issue, let's assume that our variable of interest is $X>0$, which in turns depends on the variable $Z$. For simplicity we can assume that $Z$ can be either 0 or 1, with equal probability, and if $Z=0$ then $f(x|Z=0)$ is a thick tailed distribution whose even first moment does not exist, while if $Z=1$ then $f(x|Z=1)$ is a Gaussian distribution, with a large mean. 
It is known that the tail shape parameter of $f(x)=\sum_{i=1}^n p(Z=i)f(x|Z=i)$ is determined by the conditional distribution $f(x|Z=i)$ with the thickest tail. In our case, $n$ above is 2, and the tail of $f(x)$ is defined by the fat tail of $f(x|Z=0)$.
Suppose we proceed with the standard POT approach, that is, we integrate out the random variable $Z$, and subsequently estimate the shape parameter of the tail of $f(x)$. In practice, when the number of samples is limited, it is possible that none of the samples of $X$ from the fat tailed distributions exceeds those of the Gaussian due to the difference between their locations. Therefore, the sample tail of the marginal (mixture) distribution $f(x)=\frac{1}{2}(f(x|Z=0)+f(x|Z=1))$ is defined by the sample tail of the Gaussian $f(x|Z=1)$, while in reality, as mentioned, the tail of $f(x)$ is defined by the fat tail of $f(x|Z=0)$. Of course, in the ideal case where the sampling process is not finite, we would recover the true tail shape; however, for practical applications, estimating the shape tails parameters of $f(x|Z=0)$ and $f(x|Z=1)$ separately can be necessary.
\newline
In some settings, random variable $Z$ might have a continuous distribution, e.g. $Range(Z)=\mathbb{R}$, instead of $Range(Z)=\{1,2,..,n\}$. Such is the case presented in the next subsection, where $f(x)=\int p(z)f(x|z) dz$ . A natural question that arises in this scenario is whether the tail of the marginal $f(x)$ is still determined by the largest tail of the conditional distributions $f(x|z)$. As we will prove in Section 4, under some regularity conditions, the answer is in the affirmative. 

\subsection{Problem Statement}\label{probstat}

We assume that each data sample $\boldsymbol{(X,Y)}$ comes from distribution $\mathcal{D}$ and that the sampling is independent. We have used the symbol $\boldsymbol{X}$ to denote the features and the symbol $\boldsymbol{Y}$ to denote the labels. The training set will be defined as a random vector comprised of iid random vectors $\boldsymbol{(X,Y)}$ sampled from $\mathcal{D}$. More precisely, after fixing a natural number $k$, we define a training set as $\boldsymbol{V}=[\boldsymbol{(X,Y)}_1, \boldsymbol{(X,Y)}_2,..., \boldsymbol{(X,Y)}_k]$, where each $\boldsymbol{(X,Y)}_i$ has distribution $\mathcal{D}$. On the other hand, a test point naturally is defined as a sample from $\mathcal{D}$, i.e., $\boldsymbol{U}=\boldsymbol{(X,Y)}$. In practice, the realisation of $\boldsymbol{U}$ should not be an entry in $\boldsymbol{V}$.
\newline
 A model which is trained on $\boldsymbol{V}$ to predict $\boldsymbol{Y}$ from $\boldsymbol{X}$ is denoted as $\boldsymbol{\hat{h}_V(X)}$. The prediction error on the testing datum $\boldsymbol{U}$ of a model trained on $\boldsymbol{V}$ is denoted as $W_{\boldsymbol{V}}\boldsymbol{(U)}$. For the remainder of the paper we assume that $W_{\boldsymbol{V}}\boldsymbol{(U)}>0$ and notice that the probability density function of $W_{\boldsymbol{V}}\boldsymbol{(U)}$ is
\begin{equation}\label{eq.3.1}
f_W(w)=\int f_{W,{\boldsymbol{V}}}(w,{\boldsymbol{v}}) d{\boldsymbol{v}}= \int f_{\boldsymbol{V}}({\boldsymbol{v}}) f(w|{\boldsymbol{V}}={\boldsymbol{v}}) d{\boldsymbol{v}}=\int f_{\boldsymbol{V}}({\boldsymbol{v}}) f_{\boldsymbol{v}}(w) d{\boldsymbol{v}},
\end{equation}
therefore the distribution function of $W_{\boldsymbol{V}}\boldsymbol{(U)}$ is:
\begin{equation}\label{eq.3.2}
F_W(w)=\int f_{\boldsymbol{V}}({\boldsymbol{v}}) F_{\boldsymbol{v}}(w) d{\boldsymbol{v}}.
\end{equation}
$F_{\boldsymbol{v}}(w)$ is the distribution of the prediction error (loss) of the model trained on training set $\boldsymbol{v}$, while $F(w)$ is the unconditional distribution of the loss. Our goal is to estimate the shape of the tails of $F_W(w)$, by estimating the shape of the tails of the distributions $F_{\boldsymbol{v}}(w)$ conditioned on the training sets $\boldsymbol{v}$. 

\subsection{Cross Tail Estimation}\label{paralcrossval}

We will denote with $\xi_{\boldsymbol{v}}$ the tail shape parameter of $F_{\boldsymbol{v}}(w)$ and with $\xi$ the shape tail parameter of $F_W(w)$. Our goal in Section 4 is to prove that under some regularity conditions if $\exists \boldsymbol{v}$, such that $\xi_{\boldsymbol{v}}>0$, then $\xi=\max\{\xi_{\boldsymbol{v}}|\boldsymbol{v}\}$, and if $\forall \xi_{\boldsymbol{v}}\leq 0$, then we have $\xi\leq 0$.
\begin{algorithm}
\caption{Naive Cross Tail Estimation}\label{algo1}
\begin{algorithmic}
\STATE {\bfseries Data:} $D=[\boldsymbol{(x,y)}_1, \boldsymbol{(x,y)}_2,..., \boldsymbol{(x,y)}_n]$
\STATE {\bfseries Define:} $A=\{\}$
\STATE {\bfseries Fix the number of training sets (rounds):} $m \in \mathbb{N}$
\REPEAT
\STATE 1. sample $\boldsymbol{(x,y)}_{\pi(1)}$, ..., $\boldsymbol{(x,y)}_{\pi(k)}$ from $\boldsymbol{(x,y)}_1, \boldsymbol{(x,y)}_2,..., \boldsymbol{(x,y)}_n$
\STATE 2. train model $\boldsymbol{\hat{h}_v}$ on $\boldsymbol{v}=[\boldsymbol{(x,y)}_{\pi(1)}$, ..., $\boldsymbol{(x,y)}_{\pi(k)}]$
\STATE 3. calculate the prediction errors $W_{\boldsymbol{v}}\boldsymbol{(U)}$ of model $\boldsymbol{\hat{h}_v}$ on the testing set $D\setminus\boldsymbol{v}$
\STATE 4. group the calculated prediction errors in the set $E_{\boldsymbol{v}}(D)$
\STATE 5. apply the Pickands or DEdH estimator on $E_{\boldsymbol{v}}(D)$ to estimate $\xi_{\boldsymbol{v}}$
\STATE 6. add $\hat{\xi}_{\boldsymbol{v}}$ to $A$
\UNTIL $|A|=m$
\RETURN $\max{A}$ if $\max{A}>0$, else return 'non-positive'
\end{algorithmic}
\end{algorithm} 
This motivates Algorithm \ref{algo1} which we name 'Naive Cross Tail Estimation' (NCTE). Since for each $\boldsymbol{v}$, the estimated $\hat{\xi}_{\boldsymbol{v}}$ is prone to estimation errors, taking the maximum $\hat{\xi}_{\boldsymbol{v}}$ over all ${\boldsymbol{v}}$ tends to cause NCTE to overestimate the true $\xi$, especially when the number of conditional distributions $F_{\boldsymbol{v}}(w)$ is large. For this reason we also present Algorithm \ref{algo2},  named 'Cross Tail Estimation' (CTE), where we split the samples from $F_{\boldsymbol{v}}(w)$ into $p$ sets in order to get $p$ estimates of the tail shape parameter of $F_{\boldsymbol{v}}(w)$, that is $\{\hat{\xi}^1_{\boldsymbol{v}},\hat{\xi}^2_{\boldsymbol{v}},...,\hat{\xi}^p_{\boldsymbol{v}}\}$.
Our final estimation of $\xi_{\boldsymbol{v}}$ is the average of the $p$ estimations, i.e., $\frac{1}{p}\sum_{i=0}^p \hat{\xi}^i_{\boldsymbol{v}}$.  A more detailed justification for utilizing Algorithm \ref{algo2} is given in Appendix E.
\begin{algorithm}
\caption{Cross Tail Estimation}\label{algo2}
\begin{algorithmic}
\STATE {\bfseries Data:} $D=[\boldsymbol{(x,y)}_1, \boldsymbol{(x,y)}_2,..., \boldsymbol{(x,y)}_n]$
\STATE {\bfseries Define:} $A=\{\}$
\STATE {\bfseries Fix the number of training sets (rounds):} $m \in \mathbb{N}$
\REPEAT
\STATE 1. sample $\boldsymbol{(x,y)}_{\pi(1)}$, ..., $\boldsymbol{(x,y)}_{\pi(k)}$ from $\boldsymbol{(x,y)}_1, \boldsymbol{(x,y)}_2,..., \boldsymbol{(x,y)}_n$
\STATE 2. train model $\boldsymbol{\hat{h}_v}$ on $\boldsymbol{v}=[\boldsymbol{(x,y)}_{\pi(1)}$, ..., $\boldsymbol{(x,y)}_{\pi(k)}]$
\STATE 3. calculate the prediction errors $W_{\boldsymbol{v}}\boldsymbol{(U)}$ of model $\boldsymbol{\hat{h}_v}$ on the testing set $D\setminus\boldsymbol{v}$
\STATE 4. group the calculated prediction errors in the set $E_{\boldsymbol{v}}(D)$
\STATE 5. split $E_{\boldsymbol{v}}(D)$ into $\{E^1_{\boldsymbol{v}}(D),...,E^p_{\boldsymbol{v}}(D)\}$
\STATE 6. apply the Pickands or DEdH estimator on each $E^i_{\boldsymbol{v}}(D)$ to get an estimate $\hat{\xi}^i_{\boldsymbol{v}}$ of $\xi_{\boldsymbol{v}}$
\STATE 7. average over $p$ to get the final estimate $\hat{\xi}_{\boldsymbol{v}}=\frac{1}{p}\sum_{i=0}^p \hat{\xi}^i_{\boldsymbol{v}}$ of $\xi_{\boldsymbol{v}}$
\STATE 8. add $\hat{\xi}_{\boldsymbol{v}}$ to $A$
\UNTIL $|A|=m$
\RETURN $\max{A}$ if $\max{A}>0$, else return 'non-positive'
\end{algorithmic}
\end{algorithm}
We notice that Algorithm \ref{algo2} is identical to Algorithm \ref{algo1} when $p=1$. 
\newline \emph{Remark}: Estimating particular statistics of $F_W(w)$ through the statistics of  $F_{\boldsymbol{v}}(w)$ as in in Algorithm \ref{algo1} and \ref{algo2} is a key component of Cross Validation.
 During cross validation,  a training set ${\boldsymbol{v}}$ and a testing set  $D\setminus\boldsymbol{v}$ are selected in each iteration, during which the following conditional expectation is then estimated:
\begin{equation}\label{cond_expectation}
\begin{split}
 \mathbb{E}[W_{\boldsymbol{V}}(\boldsymbol{U})|{\boldsymbol{V}}={\boldsymbol{v}}]= \int w f_{\boldsymbol{v}}(w)dw.
\end{split}
\end{equation}
The estimates of $\mathbb{E}[W_{\boldsymbol{V}}(\boldsymbol{U})|{\boldsymbol{V}}]$ received in each iteration are then averaged to get an estimation of the total expectation:
\begin{equation}\label{tot_expectation}
\begin{split}
\mathbb{E}_{{\boldsymbol{U,V}}}(W_{\boldsymbol{V}}({\boldsymbol{U}}))=\int wf(w)dw =\int f_{\boldsymbol{V}}({\boldsymbol{v}}) \int w f_{\boldsymbol{v}}(w)dw d{\boldsymbol{v}}=\\
=\int f_{\boldsymbol{V}}({\boldsymbol{v}}) \mathbb{E}[W_{{\boldsymbol{V}}}(U)|{\boldsymbol{V}}={\boldsymbol{v}}] d{\boldsymbol{v}}=\mathbb{E}[\mathbb{E}[W_{{\boldsymbol{V}}}({\boldsymbol{U}})|{\boldsymbol{V}}={\boldsymbol{v}}]].
\end{split}
\end{equation}
In the language of Section \ref{probstat}, the mean of distribution $F_W(w)$ is the average of the means of the conditional distributions $F_{\boldsymbol{v}}(w)$.
\newline
This statement about sums stands parallel with our claim about extremes; that the shape parameter of the tail of $F_W(w)$, if positive, is the maximum of the shape parameters of the tails of the conditional distributions $F_{\boldsymbol{v}}(w)$.

\subsection{The General Problem}

Generalizing the problem stated in Section 3.2 requires considering a one dimensional random variable of interest $X$, dependent on other random variables $\{Z_1, Z_2,..., Z_n\}$, such that the probability density function of $X$ is
\begin{equation}\label{3.4}
\begin{split}
f_{X}(x) =\int f(z_1,...,z_n, x) d_{z_1}\cdots d_{z_n}
\end{split}
\end{equation}
\begin{equation}\label{3.4.2}
\begin{split}
=\int f(\boldsymbol{z})f(x|\boldsymbol{z}) d\boldsymbol{z}=\int f(\boldsymbol{z})f_{\boldsymbol{z}}(x) d\boldsymbol{z}.
\end{split}
\end{equation}
Integrating with respect to $x$ we get 
\begin{equation}\label{3.5}
\begin{split}
F_{X}(x)=\int f(\boldsymbol{z}) F(x|\boldsymbol{z}) d\boldsymbol{z}=\int f(\boldsymbol{z})F_{\boldsymbol{z}}(x) d\boldsymbol{z}.
\end{split}
\end{equation}
In this case, with regards to the previous section, we notice that $\boldsymbol{Z}=\boldsymbol{V}$ is the training set on which we condition, while $X=W$ is the random variable of interest.
 In Section \ref{theory}, we give several results which relate the tails of $F_{X}(x)$ and $F(x|\boldsymbol{z})$, culminating with Theorem \ref{otheorem2} which justifies the usage of the CTE algorithm, by providing limiting behaviour guarantees. 

\section{Theoretical Results}\label{theory}
In this section, we build our theory of modelling the tails of marginal distributions, which culminates with Theorem \ref{otheorem2}. We conclude this section by proving three statements which are useful in the experimental Section \ref{experiments}, and give the relation between the existence of the moments of a distribution and the thickness of its tails. Unless stated otherwise, the proofs of all the statements are given in Appendix A.

\subsection{Tails of marginal distributions}

For two given distributions, whose tails have positive shape parameters, we expect the one with larger tail parameter to decay slower. Indeed:
\begin{lemma}\label{lemma1} If $F_1 \in MDA(\xi_1)$ and $F_2 \in MDA(\xi_2)$, and if $\xi_1>\xi_2>0$, then $\lim_{x \rightarrow \infty} \frac{\bar{F}_2(x)}{\bar{F}_1(x)}=0$.
 \end{lemma}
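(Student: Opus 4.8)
The plan is to reduce the statement to the polynomial-versus-slowly-varying comparison of Proposition \ref{prop1}. Since both $\xi_1$ and $\xi_2$ are strictly positive, Theorem \ref{mda_rv} applies to each distribution: there exist slowly varying functions $L_1,L_2$, defined on some neighbourhood of infinity, such that $\bar{F}_1(x)=x^{-1/\xi_1}L_1(x)$ and $\bar{F}_2(x)=x^{-1/\xi_2}L_2(x)$ for all sufficiently large $x$. In particular $x_F=\infty$ in both cases, and since slowly varying functions are positive near infinity, $\bar{F}_1(x)>0$ for large $x$, so the quotient $\bar{F}_2(x)/\bar{F}_1(x)$ is well defined on a neighbourhood of infinity.

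Next I would rewrite the ratio as
\[
\frac{\bar{F}_2(x)}{\bar{F}_1(x)} = x^{-\left(\frac{1}{\xi_2}-\frac{1}{\xi_1}\right)}\cdot\frac{L_2(x)}{L_1(x)},
\]
and set $\epsilon:=\frac{1}{\xi_2}-\frac{1}{\xi_1}$, which is strictly positive precisely because $\xi_1>\xi_2>0$ (the map $t\mapsto 1/t$ is decreasing on $(0,\infty)$). It then remains to check that $L:=L_2/L_1$ is itself slowly varying: for every $a>0$,
\[
\frac{L(ax)}{L(x)} = \frac{L_2(ax)}{L_2(x)}\cdot\frac{L_1(x)}{L_1(ax)} \longrightarrow 1\cdot 1 = 1 \quad\text{as } x\to\infty,
\]
using the definition of slow variation for $L_1$ and $L_2$, the product of the limits being legitimate since each factor converges to a finite nonzero value. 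Hence $L$ is slowly varying.

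Finally, applying Proposition \ref{prop1} to the slowly varying function $L$ with this $\epsilon>0$ yields $x^{-\epsilon}L(x)\to 0$, i.e. $\bar{F}_2(x)/\bar{F}_1(x)\to 0$, which is the assertion.

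I do not expect a genuine obstacle here. The only points requiring a little care are that the representations supplied by Theorem \ref{mda_rv} hold only eventually (harmless for a statement about a limit at infinity) and the verification that a quotient of slowly varying functions is again slowly varying. An alternative, avoiding Theorem \ref{mda_rv}, would be to plug the Representation Theorem directly into the same computation, writing each $L_i$ as $e^{c_i(x)}e^{\int_{x_0}^{x} u_i(t)/t\,dt}$ and estimating the exponent, but routing through Proposition \ref{prop1} is cleaner.
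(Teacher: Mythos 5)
Your proposal is correct and follows essentially the same route as the paper's proof: both invoke Theorem \ref{mda_rv} to write $\bar{F}_i(x)=x^{-1/\xi_i}L_i(x)$, observe that the exponent $\frac{1}{\xi_1}-\frac{1}{\xi_2}$ is negative, verify that the quotient $L_2/L_1$ is slowly varying, and conclude via Proposition \ref{prop1}. No gaps.
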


In a similar fashion, regardless of the signs of the shape parameters, we expect the one with larger tail parameter to decay slower. In fact we have the following:
\begin{lemma}\label{lemma2}
If $F_1 \in MDA(\xi_1)$ and $F_2 \in MDA(\xi_2)$ then:
\begin{enumerate}
  \item  If $\xi_1>0$ and $\xi_2=0$ then $\lim_{x \rightarrow \infty} \frac{\bar{F}_2(x)}{\bar{F}_1(x)}=0$. 
  \item If $\xi_1=0, x_{F_1}=\infty$ and $\xi_2<0$ then $\lim_{x \rightarrow \infty} \frac{\bar{F}_2(x)}{\bar{F}_1(x)}=0$.
  \item  If $\xi_1>0$ and $\xi_2<0$ then $\lim_{x \rightarrow \infty} \frac{\bar{F}_2(x)}{\bar{F}_1(x)}=0$. 
\end{enumerate}
\end{lemma}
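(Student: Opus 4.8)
The plan is to handle each of the three cases by reducing it to the asymptotic representations provided by Theorem \ref{mda_rv}, and in several cases by invoking Proposition \ref{prop1} (slowly varying functions grow slower than any positive power). Throughout, write $\bar F_i = 1 - F_i$ and let $\xi_i$ be the shape parameter of $F_i$; the goal in every case is to show the ratio $\bar F_2(x)/\bar F_1(x)$ tends to $0$ as $x\to\infty$, so in particular we only ever need to look at large $x$, where these representations are valid and where $\bar F_1(x) > 0$ (for parts 1 and 3 this is automatic since $x_{F_1}=\infty$ when $\xi_1>0$; for part 2 it is assumed).

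First I would do part 1 ($\xi_1>0$, $\xi_2=0$). By Theorem \ref{mda_rv}, $\bar F_1(x) = x^{-1/\xi_1} L_1(x)$ with $L_1$ slowly varying, and $\bar F_2(x) = c(x)\exp\bigl(-\int_w^x \frac{1}{a(t)}\,dt\bigr)$ with $c(x)\to c>0$ and $a'(x)\to 0$. The ratio is $\frac{\bar F_2(x)}{\bar F_1(x)} = c(x)\, x^{1/\xi_1} L_1(x)^{-1} \exp\bigl(-\int_w^x \frac{1}{a(t)}\,dt\bigr)$. The key point is that $a'(t)\to 0$ forces $a(t) = o(t)$, hence $1/a(t)$ is eventually larger than $\epsilon/t$ for any fixed $\epsilon$, so $\int_w^x \frac{1}{a(t)}\,dt \geq \epsilon\ln x - C$ eventually, i.e. $\exp\bigl(-\int_w^x \frac{1}{a(t)}\,dt\bigr) \leq C' x^{-\epsilon}$. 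Thus $\bar F_2(x) \leq C'' x^{-\epsilon}$ for every $\epsilon>0$, which together with $\bar F_1(x) = x^{-1/\xi_1}L_1(x)$ and Proposition \ref{prop1} applied to $1/L_1$ (note $1/L_1$ is also slowly varying) gives the ratio bounded by $x^{-\epsilon + 1/\xi_1}\cdot(\text{slowly varying})$; choosing $\epsilon > 1/\xi_1$ and applying Proposition \ref{prop1} once more finishes it. Actually a cleaner route: pick $\epsilon > 1/\xi_1$, then $\bar F_2(x) \le C'' x^{-\epsilon}$, so $\bar F_2(x)/\bar F_1(x) \le C'' x^{1/\xi_1 - \epsilon}/L_1(x) = C'' x^{-\delta} \cdot x^{-\delta'}/L_1(x)$ after splitting the negative exponent, and Proposition \ref{prop1} kills $x^{-\delta'}/L_1(x)$ while $x^{-\delta}\to 0$.

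For part 3 ($\xi_1>0$, $\xi_2<0$): since $\xi_2<0$, $F_2$ has finite right endpoint $x_{F_2}<\infty$, so $\bar F_2(x) = 0$ for all $x > x_{F_2}$ while $\bar F_1(x) > 0$ for all $x$ (as $x_{F_1}=\infty$); hence the ratio is identically $0$ for $x > x_{F_2}$ and the limit is trivially $0$. The same trivial argument handles part 2 only if $x_{F_2}<\infty$, which indeed holds since $\xi_2<0$; but one must be careful that $x_{F_1}=\infty$ is assumed in the hypothesis of part 2, so $\bar F_1(x)>0$ for arbitrarily large $x$ and the ratio is again eventually $0$. So parts 2 and 3 are essentially immediate from the finiteness of the right endpoint of a Weibull-type tail, and the only case requiring real work is part 1.

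The main obstacle is the estimate in part 1 that $a'(t)\to 0$ implies $\exp\bigl(-\int_w^x \frac{1}{a(t)}\,dt\bigr)$ decays faster than any negative power of $x$; I would prove it by noting that given $\epsilon>0$ there is $T$ with $|a'(t)| < \epsilon$ for $t\geq T$, so $a(t) \leq a(T) + \epsilon(t-T) \leq \epsilon t + C_T$ for $t\geq T$, hence $\frac{1}{a(t)} \geq \frac{1}{\epsilon t + C_T} \geq \frac{1}{2\epsilon t}$ for $t$ large, and integrating gives $\int^x \frac{1}{a(t)}\,dt \geq \frac{1}{2\epsilon}\ln x - C$, so the exponential is $O(x^{-1/(2\epsilon)})$; since $\epsilon$ is arbitrary this exponent is arbitrarily large. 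One should double-check the edge case where $a$ could vanish — but $a$ is positive by hypothesis, and the bound $a(t)\le \epsilon t + C_T$ is an upper bound so it does not interfere with $1/a(t)$ being well-defined; the lower bound on $1/a(t)$ is all we use. With that lemma in hand, part 1 is a one-line combination with Proposition \ref{prop1}, and we are done.
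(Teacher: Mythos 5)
Your proof is correct and takes essentially the same route as the paper: parts 2 and 3 follow trivially from the finiteness of the right endpoint when $\xi_2<0$, and part 1 rests on showing that the von Mises representation with $a'(t)\to 0$ forces $\bar F_2$ to decay faster than any power of $x$, which is then combined with $\bar F_1(x)=x^{-1/\xi_1}L_1(x)$ and Proposition \ref{prop1}. The only difference is cosmetic — you derive the super-polynomial decay by integrating the bound $a(t)\le \epsilon t + C_T$ directly, whereas the paper applies L'Hopital's rule to $\int_w^x a(t)^{-1}\,dt/\log x$; your version is, if anything, slightly more careful.
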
  

Despite the fact that a linear combination of slowly varying functions is not necessarily slowly varying, the following statement holds true:
\begin{lemma}\label{lemma3}
If for $i\in \{1,...,n\}$ we let $L_i(x)$ be slowly varying functions, and $\{a_1,...,a_n\}$ be a set of positive real numbers, then 
$$
L(x)=\sum\limits_{i=1}^n a_i L_i(x)
$$
 is slowly varying.
\end{lemma}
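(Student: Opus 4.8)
The plan is to exploit the positivity hypothesis to rewrite $L(ax)/L(x)$ as a convex combination of the individual ratios $L_i(ax)/L_i(x)$, each of which converges to $1$, and then conclude by a weighted triangle inequality. No machinery beyond the definition of slow variation is needed.

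First I would fix $a>0$ and sort out domains: each $L_i$ is defined and positive on some neighbourhood $[x_{0,i},\infty)$ of infinity, so with $x_0=\max_i x_{0,i}$ the function $L=\sum_i a_iL_i$ is defined and strictly positive on $[x_0,\infty)$ (this uses $a_i>0$ together with $L_i>0$), and hence the quotient $L(ax)/L(x)$ is well defined for all $x$ large enough. Next, for such $x$, introduce the weights
\[
w_i(x)=\frac{a_iL_i(x)}{\sum_{j=1}^n a_jL_j(x)},
\]
which are nonnegative and satisfy $\sum_{i=1}^n w_i(x)=1$, and observe the elementary identity
\[
\frac{L(ax)}{L(x)}=\frac{\sum_{i=1}^n a_iL_i(ax)}{\sum_{j=1}^n a_jL_j(x)}=\sum_{i=1}^n w_i(x)\,\frac{L_i(ax)}{L_i(x)}.
\]

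To finish, given $\epsilon>0$ I would use the slow variation of each $L_i$ to choose $M_i$ such that $\bigl|L_i(ax)/L_i(x)-1\bigr|<\epsilon$ whenever $x>M_i$, and set $M=\max(x_0,M_1,\dots,M_n)$. Then for all $x>M$, using $\sum_i w_i(x)=1$ and the triangle inequality,
\[
\left|\frac{L(ax)}{L(x)}-1\right|
=\left|\sum_{i=1}^n w_i(x)\!\left(\frac{L_i(ax)}{L_i(x)}-1\right)\right|
\le \sum_{i=1}^n w_i(x)\left|\frac{L_i(ax)}{L_i(x)}-1\right|<\epsilon .
\]
Since $\epsilon>0$ and $a>0$ were arbitrary, $\lim_{x\to\infty}L(ax)/L(x)=1$, i.e.\ $L$ is slowly varying. (A measurability remark: $L$ is a finite sum of measurable functions, hence measurable.)

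The only step that genuinely requires the hypothesis — and thus the place to be careful — is the reduction to a convex combination. Positivity of the $a_i$ (and of the $L_i$) is exactly what guarantees the denominator never vanishes and that the weights $w_i(x)$ lie in $[0,1]$ and sum to $1$, which is what makes the averaged estimate close. If signs were allowed, the denominator could tend to $0$ or oscillate in sign and the argument would fail, which is consistent with the remark preceding the lemma that a general linear combination of slowly varying functions need not be slowly varying. Everything else is routine.
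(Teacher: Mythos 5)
Your proof is correct and takes essentially the same route as the paper's: the paper establishes the bound $|L_1(ax)+L_2(ax)-(L_1(x)+L_2(x))|<(L_1(x)+L_2(x))\epsilon$ for two summands and then inducts, which is exactly your weighted triangle inequality with the convex weights $w_i(x)$ made explicit and applied to all $n$ terms at once. Your version is marginally cleaner in avoiding the induction, but the decomposition and the key estimate are identical.
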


In the case of a mixture of a finite number of distributions the following known result holds:
\begin{theorem}\label{otheorem1}
Let $Z:\Omega\rightarrow A\subset\mathbb{R}^n$ be a random vector where $|A|<\infty$. At each point $\boldsymbol{z}_1,..,\boldsymbol{z}_n\in A$, we define a distribution $F_{\boldsymbol{z}_i}(x)\in MDA(\xi_{i})$ and assume that $\xi_{\max}:=\max(\xi_1=\xi_{z_1},...,\xi_n=\xi_{z_n})>0$. If the set $\{p_1,...,p_n\}$ is a set of convex combination parameters, that is $\sum\limits_i p_i=1$ and $p_i>0$ then:
\begin{equation}\label{t1_1}
F(x)=\sum\limits_i^n p_i F_{{\boldsymbol{z}}_i}(x)\in MDA(\xi_{\max}).
\end{equation}
If $\xi_{\max}\leq0$ then if $\xi_F$ exists we have $\xi_F\leq0$.
\end{theorem}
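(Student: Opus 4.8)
The plan is to prove the two assertions separately, reducing in both cases to the regular-variation characterization of $MDA(\xi)$ for $\xi>0$ supplied by Theorem \ref{mda_rv}, and using the tail-ratio estimates of Lemmas \ref{lemma1}--\ref{lemma2} together with the stability result of Lemma \ref{lemma3} to handle the non-dominant summands of $\bar F(x)=\sum_{i=1}^n p_i\bar F_{\boldsymbol z_i}(x)$.

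For the first claim (assume $\xi_{\max}>0$), let $S=\{i:\xi_i=\xi_{\max}\}\neq\emptyset$ and fix some $j_0\in S$. For $i\in S$, Theorem \ref{mda_rv} gives $\bar F_{\boldsymbol z_i}(x)=x^{-1/\xi_{\max}}L_i(x)$ with $L_i$ slowly varying; in particular $x_{F_{\boldsymbol z_{j_0}}}=\infty$, so $\bar F(x)\ge p_{j_0}\bar F_{\boldsymbol z_{j_0}}(x)>0$ for every $x$. Multiplying $\bar F(x)=\sum_i p_i\bar F_{\boldsymbol z_i}(x)$ by $x^{1/\xi_{\max}}$, I would set
\[
L(x):=x^{1/\xi_{\max}}\bar F(x)=\underbrace{\textstyle\sum_{i\in S}p_iL_i(x)}_{=:L_S(x)}\;+\;\underbrace{\textstyle\sum_{i\notin S}p_i\,x^{1/\xi_{\max}}\bar F_{\boldsymbol z_i}(x)}_{=:R(x)}.
\]
By Lemma \ref{lemma3}, $L_S$ is slowly varying and strictly positive. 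The key step is then to show $R(x)/L_S(x)\to 0$ as $x\to\infty$: since
\[
\frac{R(x)}{L_S(x)}=\frac{\sum_{i\notin S}p_i\bar F_{\boldsymbol z_i}(x)}{\sum_{j\in S}p_j\bar F_{\boldsymbol z_j}(x)}\le\frac{1}{p_{j_0}}\sum_{i\notin S}p_i\,\frac{\bar F_{\boldsymbol z_i}(x)}{\bar F_{\boldsymbol z_{j_0}}(x)},
\]
and for each $i\notin S$ we have $\xi_i<\xi_{\max}$, each ratio $\bar F_{\boldsymbol z_i}(x)/\bar F_{\boldsymbol z_{j_0}}(x)$ tends to $0$ by Lemma \ref{lemma1} (when $\xi_i>0$) or by parts (1) and (3) of Lemma \ref{lemma2} (when $\xi_i=0$ or $\xi_i<0$). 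Consequently $L(x)=L_S(x)\bigl(1+R(x)/L_S(x)\bigr)$ with the bracketed factor tending to $1$, so $L(ax)/L(x)=\bigl(L_S(ax)/L_S(x)\bigr)\cdot\bigl(1+o(1)\bigr)/\bigl(1+o(1)\bigr)\to1$ for every $a>0$; being also positive and measurable, $L$ is slowly varying. Hence $\bar F(x)=x^{-1/\xi_{\max}}L(x)$ with $L$ slowly varying, and Theorem \ref{mda_rv} yields $F\in MDA(\xi_{\max})$.

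For the second claim (assume $\xi_{\max}\le0$), I would argue by contradiction: suppose $\xi_F$ exists and $\xi_F>0$. Then $\xi_i\le0<\xi_F$ for every $i$, so applying Lemma \ref{lemma2} with $F_1=F$ and $F_2=F_{\boldsymbol z_i}$ (part (1) if $\xi_i=0$, part (3) if $\xi_i<0$) gives $\bar F_{\boldsymbol z_i}(x)/\bar F(x)\to0$ for each $i$. But then $1=\sum_{i=1}^n p_i\,\bar F_{\boldsymbol z_i}(x)/\bar F(x)\to 0$ as $x\to\infty$, a contradiction; therefore $\xi_F\le0$.

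I expect the main obstacle to be the slow-variation step in the first claim: one cannot simply invoke Lemma \ref{lemma3} on the whole sum $x^{1/\xi_{\max}}\bar F(x)$, because the summands $x^{1/\xi_{\max}}\bar F_{\boldsymbol z_i}(x)$ with $i\notin S$ need not be slowly varying — indeed they vanish identically for large $x$ when $\xi_i<0$, hence are not even admissible positive functions. The resolution is to single out the genuinely slowly varying part $L_S$ coming from the maximizing indices and to show, via Lemmas \ref{lemma1}--\ref{lemma2}, that the remaining contribution is asymptotically negligible relative to $L_S$, after which $L(ax)/L(x)$ is governed by $L_S(ax)/L_S(x)$. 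A minor point to verify en route is that $\bar F$ is everywhere strictly positive, which is what legitimizes writing $L(x)=x^{1/\xi_{\max}}\bar F(x)$ as a slowly varying function and is guaranteed precisely because some conditional in $S$ has infinite right endpoint.
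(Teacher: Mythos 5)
Your proof is correct, but it takes a genuinely different route from the paper's. The paper deliberately proves this "well known" result via the Pickands--Balkema--De Haan theorem: it forms the excess distribution $\bar F_u(w)=\bar F(u+w)/\bar F(u)$ of the mixture, splits off the term indexed by $i(\max)$, and shows via a triangle-inequality estimate that $\sup_w|\bar F_u(w)-\bar G_{\xi_{\max},g(u)}(w)|\to 0$, using Lemma \ref{lemma1} to kill the cross-ratios $\bar F_{\boldsymbol z_j}(u)/\bar F_{\boldsymbol z_{i(\max)}}(u)$; ties among the maximizers are handled only at the very end with Lemma \ref{lemma3}, and the $\xi_{\max}\le 0$ case is dispatched in two terse lines. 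You instead work with the Gnedenko regular-variation characterization from Theorem \ref{mda_rv}: factor out $x^{-1/\xi_{\max}}$, identify the slowly varying block $L_S$ coming from all maximizing indices at once, and show the remainder is negligible relative to it — which absorbs the tie case from the outset and sidesteps the uniform-in-$w$ bookkeeping of the PBdH condition. Your contradiction argument for $\xi_{\max}\le 0$ ($1=\sum_i p_i\bar F_{\boldsymbol z_i}/\bar F\to 0$) is also cleaner and more uniform than the paper's case split. The one point worth flagging is that your final step invokes the converse direction of the characterization in Theorem \ref{mda_rv} (regular variation of $\bar F$ with index $-1/\xi_{\max}$ implies $F\in MDA(\xi_{\max})$), whereas that theorem is stated under the hypothesis $F\in MDA(\xi)$; this direction is the standard Gnedenko result and the paper itself uses it the same way (e.g.\ in its own treatment of the tie case and in the proof of Proposition \ref{proposition2}), so it is not a gap, but it deserves an explicit citation. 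What the paper's route buys is a self-contained derivation tied to the POT formalism the paper is about; what yours buys is brevity and a more systematic treatment of ties and of the non-positive case.
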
  

\begin{proof} 
While this result is well known, we give an alternative proof in Appendix A, using the Pickands-Balkema-De Haan Theorem.
\end{proof}

\emph{From now on, we assume that the functions $F_A(x)=\int_A f_{\boldsymbol{Z}}({\boldsymbol{z}})F_{\boldsymbol{z}}(x)d{\boldsymbol{z}}$ defined on any element $A$  of the Borel $\sigma-algebra$ induced by the usual metric are in the $MDA$ of some extreme value distribution. Furthermore, we assume that the pdf $ f_{\boldsymbol{Z}}({\boldsymbol{z}})$ is strictly positive everywhere in its domain.}
\newline

Proposition \ref{prop1} states that every slowly varying function is sub-polynomial. That is for any $\delta>0$ and any slowly varying function $L(x)$, if we are given any $\gamma>0$, then we can find $x(L,\delta,\gamma)>0$, such that for all $x>x(L,\delta,\gamma)$, the inequality $x^{-\delta}L(x)<\gamma $ holds. However, since $x(L,\delta,\gamma)$ depends on the function $L$, assuming that we have a family of $\{L_{\boldsymbol{z}}| {\boldsymbol{z}}\in A\}$, where $A$ is a measurable set, the set $\{x(L_{\boldsymbol{z}},\delta,\gamma)|{\boldsymbol{z}} \in A\}$ can be unbounded, suggesting that the beginning of the tail of $\bar{F}_{\boldsymbol{z}}(x)=x^{-\frac{1}{\xi_{\boldsymbol{z}}}}L_{\boldsymbol{z}}(x)$ can be postponed indefinitely across the family $\{F_{\boldsymbol{z}}|{\boldsymbol{z}} \in A\}$. These concepts are formalized in the following:

\begin{definition} For a set $A$, the family of sub-polynomial functions $\{L_{\boldsymbol{z}}(x)| {\boldsymbol{z}}\in A\}$ is called $\gamma$-uniformly sub-polynomial if for any fixed $\delta>0$, there exists a $\gamma(\delta)$ so that the set $\{x_0|{\boldsymbol{z}} \in A\}$ is bounded from above, where $x_0=x_0(L_{\boldsymbol{z}},\delta,\gamma)$  is the smallest value for which when $x>x_0$ we have $x^{-\delta}L_{\boldsymbol{z}}(x)<\gamma $.
\end{definition}

\begin{proposition}\label{propUniSubPol} Let ${\boldsymbol{Z}}:\Omega\rightarrow A\subset\mathbb{R}^n$ be a random vector where $A$ is measurable and define a family of sub-polynomial functions $\{L_{\boldsymbol{z}}(x)| {\boldsymbol{z}}\in A\}$, which we assume is $\gamma$-uniformly sub-polynomial. Then for a probability density function $f_{\boldsymbol{Z}}({\boldsymbol{z}})$ on $A$ induced by ${\boldsymbol{Z}}$, the function $L(x)=\int_A f_{\boldsymbol{Z}}({\boldsymbol{z}}) L_{\boldsymbol{z}}(x) d{\boldsymbol{z}}$  is sub-polynomial.
\end{proposition}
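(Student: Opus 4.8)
The plan is to verify the definition of sub-polynomiality (as recalled just after Proposition \ref{prop1}) directly: I must show that for every $\delta>0$ one has $x^{-\delta}L(x)\to 0$ as $x\to\infty$, where $L(x)=\int_A f_{\boldsymbol{Z}}(\boldsymbol{z})L_{\boldsymbol{z}}(x)\,d\boldsymbol{z}$. The key point is that the $\gamma$-uniform sub-polynomiality hypothesis converts the pointwise-in-$\boldsymbol{z}$ decay estimates into a single estimate valid simultaneously for all $\boldsymbol{z}\in A$ beyond a common threshold, and this uniform bound is exactly what can be pulled through the integral.

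Concretely, I would fix $\delta>0$ and apply the $\gamma$-uniform sub-polynomiality assumption to the exponent $\delta/2$: this produces a constant $\gamma=\gamma(\delta/2)>0$ such that the set of thresholds $\{x_0(L_{\boldsymbol{z}},\delta/2,\gamma):\boldsymbol{z}\in A\}$ is bounded from above, say by some $X_0<\infty$. By the defining property of these thresholds, for every $x>X_0$ and every $\boldsymbol{z}\in A$ we have $x^{-\delta/2}L_{\boldsymbol{z}}(x)<\gamma$, i.e. $L_{\boldsymbol{z}}(x)<\gamma\,x^{\delta/2}$. Hence the integrand $f_{\boldsymbol{Z}}(\boldsymbol{z})L_{\boldsymbol{z}}(x)$ is dominated by $\gamma\,x^{\delta/2}f_{\boldsymbol{Z}}(\boldsymbol{z})$, which is integrable over $A$; so (assuming the standard joint measurability of $(\boldsymbol{z},x)\mapsto L_{\boldsymbol{z}}(x)$ needed for $L$ to be well defined) for all $x>X_0$
\[
L(x)=\int_A f_{\boldsymbol{Z}}(\boldsymbol{z})L_{\boldsymbol{z}}(x)\,d\boldsymbol{z}\;\le\;\gamma\,x^{\delta/2}\int_A f_{\boldsymbol{Z}}(\boldsymbol{z})\,d\boldsymbol{z}\;=\;\gamma\,x^{\delta/2}.
\]
Multiplying through by $x^{-\delta}$ gives $x^{-\delta}L(x)\le\gamma\,x^{-\delta/2}$ for $x>X_0$, and the right-hand side tends to $0$ as $x\to\infty$. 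Since $\delta>0$ was arbitrary, $L$ is sub-polynomial, which is the claim.

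I do not expect a serious obstacle; the two points requiring care are (i) invoking the hypothesis with a strictly smaller exponent ($\delta/2$ rather than $\delta$), since the hypothesis applied to $\delta$ itself would only yield boundedness of $x^{-\delta}L(x)$, whereas genuine sub-polynomiality demands convergence to zero, and (ii) checking that $L(x)$ is finite and that bounding the integrand before integrating is legitimate, which is immediate from the uniform domination $L_{\boldsymbol{z}}(x)<\gamma\,x^{\delta/2}$ together with $f_{\boldsymbol{Z}}$ being a probability density. It is worth remarking that the uniformity hypothesis is genuinely needed: without it, the thresholds $x_0(L_{\boldsymbol{z}},\delta,\gamma)$ may be unbounded over $A$ (as stressed in the discussion preceding the proposition), and then no single polynomial bound would control the whole family, so the argument above would break down.
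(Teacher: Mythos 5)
Your proof is correct, but it reaches the conclusion by a slightly different mechanism than the paper. The paper applies the $\gamma$-uniform hypothesis at the exponent $\delta$ itself, obtaining the integrable dominating function $f_{\boldsymbol{Z}}({\boldsymbol{z}})\gamma$, and then invokes the dominated convergence theorem together with the pointwise sub-polynomiality of each individual $L_{\boldsymbol{z}}$ to pass the limit $x\to\infty$ inside the integral and conclude $\int_A \lim_x f_{\boldsymbol{Z}}({\boldsymbol{z}})x^{-\delta}L_{\boldsymbol{z}}(x)\,d{\boldsymbol{z}}=0$. You instead apply the hypothesis at the strictly smaller exponent $\delta/2$, which yields the uniform quantitative bound $L_{\boldsymbol{z}}(x)<\gamma x^{\delta/2}$ beyond a common threshold, hence $x^{-\delta}L(x)\le\gamma x^{-\delta/2}\to 0$ with no limit interchange at all. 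Your route is more elementary (no measure-theoretic convergence theorem, and the individual sub-polynomiality of each $L_{\boldsymbol{z}}$ is never used in the limiting step), at the modest cost of the exponent-halving trick; the paper's route is the one it reuses verbatim in the dominated-convergence steps of Theorem \ref{otheorem2a} and Lemma \ref{lemma5}, so it is stylistically consistent with the rest of the appendix. Your observation (i) -- that invoking the hypothesis at $\delta$ alone gives only boundedness of $x^{-\delta}L(x)$ -- is accurate for your direct-estimate strategy and correctly explains why you need the smaller exponent, whereas the paper compensates for this by the pointwise convergence inside the DCT.
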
  

In the following theorem, we assume that all conditional distributions have positive tail shape parameters, and we show that the marginal distribution cannot have a tail shape parameter larger (smaller) than the largest (smallest) tail shape parameter across conditional distributions. Furthermore, if the tail shape parameters vary continuously across the space of conditional distributions, then the tail shape parameter of the marginal is precisely the same as the maximal tail shape parameter of the conditional distributions.
\begin{theorem}\label{otheorem2a} Let ${\boldsymbol{Z}}:\Omega\rightarrow A\subset\mathbb{R}^n$ be a random vector where $A$ is measurable. At each point ${\boldsymbol{z}}\in A$ define a distribution $F_{\boldsymbol{z}}(x)\in MDA(\xi_{\boldsymbol{z}})$, and suppose there exist $\xi_{lo},\ \xi_{up}$ such that $\forall {\boldsymbol{z}}\in A,\  0<\xi_{lo}\leq \xi_{\boldsymbol{z}} \leq \xi_{up}$. Furthermore, let $L_{\boldsymbol{z}}(x)$ be the slowly varying function corresponding to $F_{\boldsymbol{z}}(x)$. If the family $\{L_{\boldsymbol{z}}(x)|{\boldsymbol{z}}\in A\}$ is $\gamma$-uniformly sub-polynomial, then for $F(x)=\int_A f_{\boldsymbol{Z}}({\boldsymbol{z}})F_{\boldsymbol{z}}(x)d{\boldsymbol{z}}$ we have $\xi_{lo}\leq\xi_F\leq\xi_{up}$. Furthermore, if $\xi_{\boldsymbol{z}}$ is continuous in ${\boldsymbol{z}}$, then $\xi_F=\xi_{\max}$, where $\xi_{\max}:=\sup\{\xi_{\boldsymbol{z}}|{\boldsymbol{z}} \in A\}$.
\end{theorem}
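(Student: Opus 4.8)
The plan is to work throughout with the tail representation from Theorem~\ref{mda_rv}, estimating $\bar F$ by a sandwich argument. By that theorem each $\bar F_{\boldsymbol{z}}(x)=x^{-1/\xi_{\boldsymbol{z}}}L_{\boldsymbol{z}}(x)$ with $L_{\boldsymbol{z}}$ slowly varying and strictly positive, so $\bar F(x)=\int_A f_{\boldsymbol{Z}}({\boldsymbol{z}})\,x^{-1/\xi_{\boldsymbol{z}}}L_{\boldsymbol{z}}(x)\,d{\boldsymbol{z}}$. Since $\xi\mapsto x^{-1/\xi}$ is increasing for $x\ge 1$, the assumption $0<\xi_{lo}\le\xi_{\boldsymbol{z}}\le\xi_{up}$ lets one sandwich $x^{-1/\xi_{lo}}\le x^{-1/\xi_{\boldsymbol{z}}}\le x^{-1/\xi_{up}}$ and pull the outer powers out of the integral:
\[
x^{-1/\xi_{lo}}L(x)\ \le\ \bar F(x)\ \le\ x^{-1/\xi_{up}}L(x),\qquad L(x):=\int_A f_{\boldsymbol{Z}}({\boldsymbol{z}})L_{\boldsymbol{z}}(x)\,d{\boldsymbol{z}}.
\]
By Proposition~\ref{propUniSubPol}, $L$ is sub-polynomial, i.e.\ $x^{-\epsilon}L(x)\to 0$ for every $\epsilon>0$. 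For a matching lower bound, each $1/L_{\boldsymbol{z}}$ is again slowly varying, so Proposition~\ref{prop1} gives $x^{\epsilon}L_{\boldsymbol{z}}(x)\to\infty$; Fatou's lemma then yields $\liminf_{x\to\infty}x^{\epsilon}L(x)\ge\int_A f_{\boldsymbol{Z}}({\boldsymbol{z}})\bigl(\liminf_{x\to\infty}x^{\epsilon}L_{\boldsymbol{z}}(x)\bigr)\,d{\boldsymbol{z}}=\infty$, so $L(x)\ge x^{-\epsilon}$ for large $x$. Hence for each $\epsilon>0$ there is $x_\epsilon$ with $x^{-1/\xi_{lo}-\epsilon}\le\bar F(x)\le x^{-1/\xi_{up}+\epsilon}$ for all $x\ge x_\epsilon$.

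From these polynomial envelopes one reads off $\xi_F$. Since $\bar F(x)>0$ for every $x$ (clear from strict positivity of $f_{\boldsymbol{Z}}$ and of each $\bar F_{\boldsymbol{z}}$, the latter having infinite right endpoint because $\xi_{\boldsymbol{z}}>0$), the endpoint $x_F$ is infinite, so by the standing assumption $F$ lies in some $MDA(\xi_F)$ with $\xi_F\ge 0$; and $\xi_F=0$ is impossible, for then the third bullet of Theorem~\ref{mda_rv} together with $a'(x)\to 0$ (hence $a(x)/x\to 0$) would force $\bar F(x)=O(x^{-M})$ for every $M$, contradicting the lower envelope. Therefore $\xi_F>0$ and $\bar F(x)=x^{-1/\xi_F}L_F(x)$ with $L_F$ slowly varying, so $x^{-\epsilon}\le L_F(x)\le x^{\epsilon}$ eventually. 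Feeding this into the upper envelope gives $-1/\xi_F\le -1/\xi_{up}+2\epsilon$ for every small $\epsilon>0$, i.e.\ $\xi_F\le\xi_{up}$, and the symmetric comparison with the lower envelope gives $\xi_F\ge\xi_{lo}$; this proves the first assertion.

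For the refined claim, apply the first assertion with $\xi_{up}$ replaced by $\xi_{\max}:=\sup_{\boldsymbol{z}\in A}\xi_{\boldsymbol{z}}$ --- still a valid uniform upper bound, and the $\gamma$-uniform sub-polynomiality hypothesis is unchanged --- to obtain $\xi_F\le\xi_{\max}$. For the reverse inequality, fix $\eta\in(0,\xi_{\max})$; by continuity of ${\boldsymbol{z}}\mapsto\xi_{\boldsymbol{z}}$ and strict positivity of $f_{\boldsymbol{Z}}$ pick a set $A_\eta\subseteq A$ with $p_\eta:=\int_{A_\eta}f_{\boldsymbol{Z}}({\boldsymbol{z}})\,d{\boldsymbol{z}}>0$ on which $\xi_{\boldsymbol{z}}\ge\xi_{\max}-\eta$. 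Restricting the integral defining $\bar F$ to $A_\eta$ and repeating the sandwich-and-Fatou estimate (with $p_\eta$ now playing the role of the total mass) gives $\bar F(x)\ge\int_{A_\eta}f_{\boldsymbol{Z}}({\boldsymbol{z}})\bar F_{\boldsymbol{z}}(x)\,d{\boldsymbol{z}}\ge x^{-1/(\xi_{\max}-\eta)-\epsilon}$ for large $x$; comparing with $\bar F(x)=x^{-1/\xi_F}L_F(x)$ as before yields $\xi_F\ge 1/\bigl(1/(\xi_{\max}-\eta)+2\epsilon\bigr)$, and letting $\epsilon\downarrow 0$ and then $\eta\downarrow 0$ gives $\xi_F\ge\xi_{\max}$. (Alternatively, one can apply the first assertion to the normalized sub-mixture $F_{A_\eta}/p_\eta$ and combine it with Lemma~\ref{lemma1} using $\bar F\ge\int_{A_\eta}f_{\boldsymbol{Z}}({\boldsymbol{z}})\bar F_{\boldsymbol{z}}(x)\,d{\boldsymbol{z}}$.) Combined with $\xi_F\le\xi_{\max}$, this gives $\xi_F=\xi_{\max}$.

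The step I expect to be the main obstacle is the one genuinely using continuity: producing a set $A_\eta$ of positive $f_{\boldsymbol{Z}}$-measure on which $\xi_{\boldsymbol{z}}$ lies within $\eta$ of its supremum. One must be careful about how the topology of $A$ interacts with $f_{\boldsymbol{Z}}$ --- the points where $\xi_{\boldsymbol{z}}$ approaches $\xi_{\max}$ must not be measure-theoretically negligible --- and this is exactly what fails in the discrete counterexamples where $\xi_{\max}$ is attained only on a null set. The remainder --- turning the envelopes $x^{-1/\xi\mp\epsilon}$ into the exact value of $\xi_F$, plus the elementary facts that a slowly varying function is trapped between $x^{-\epsilon}$ and $x^{\epsilon}$ eventually and that each $\bar F_{\boldsymbol{z}}$ is strictly positive --- is routine given Theorem~\ref{mda_rv}.
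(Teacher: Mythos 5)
Your proof is correct and follows essentially the same route as the paper's: Fatou's lemma for the lower envelope, dominated convergence via the $\gamma$-uniform sub-polynomiality for the upper envelope, a separate argument ruling out $\xi_F\leq 0$, and continuity plus strict positivity of $f_{\boldsymbol{Z}}$ to extract a positive-measure set on which $\xi_{\boldsymbol{z}}$ is within $\lambda$ of its supremum. The only cosmetic difference is that for the final step you re-run the lower-bound estimate on the restricted integral over $A_\eta$ directly, whereas the paper decomposes $\bar F=p_1\bar F_1+p_2\bar F_2$ and invokes Theorem \ref{otheorem1}; your parenthetical alternative is exactly the paper's argument.
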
  

Similarly to the case when $F_{\boldsymbol{z}}(x)$ are in the $MDA(\xi_{\boldsymbol{z}})$ for $\xi_{\boldsymbol{z}}>0$, if we wish to extend the results above, regularity conditions are required for the $\xi_{\boldsymbol{z}}\leq 0$ case. We notice that if $F_z(x) \in MDA(\xi)$ for $\xi \leq 0$, then $\bar{F}_{\boldsymbol{z}}(x)$ itself is sub-polynomial, whether its support is bounded or not. This observation motivates the following:

\begin{definition} For a set $A$, define the family of distribution functions $\mathcal{F_A}=\{F_{\boldsymbol{z}}(x)| {\boldsymbol{z}}\in A\}$, and define $A^+=\{{\boldsymbol{z}}|\xi_{\boldsymbol{z}}>0\},\ A^{-}=\{{\boldsymbol{z}}|\xi_{\boldsymbol{z}}\leq 0\}$. We say family $\mathcal{F_A}$ has stable cross-tail variability if,
\begin{itemize}
	\item $\{L_{\boldsymbol{z}}(x)| {\boldsymbol{z}}\in A^+\}$ is $\gamma$-uniformly sub-polynomial,
	\item $\{\bar{F}_{\boldsymbol{z}}(x)| {\boldsymbol{z}}\in A^-\}$ is $\gamma$-uniformly sub-polynomial.
\end{itemize}

\end{definition}
We notice that in the previous theorem, if for all ${\boldsymbol{z}}$ we have $0<\xi_{\boldsymbol{z}}\leq \epsilon$, then $\xi_F \leq \epsilon$. If the corresponding family $\mathcal{F_A}=\{F_{\boldsymbol{z}}(x)| {\boldsymbol{z}}\in A\}$ has stable cross-tail variability, this holds independently from the lower bound of $\{\xi_{\boldsymbol{z}}|{\boldsymbol{z}} \in A\}$. Indeed:

\begin{lemma}\label{lemma5}
Let ${\boldsymbol{Z}}:\Omega\rightarrow A\subset\mathbb{R}^n$ be a random vector where $A$ is measurable. At each point ${\boldsymbol{z}}\in A$ define a distribution $F_{\boldsymbol{z}}(x)\in MDA(\xi_{\boldsymbol{z}})$, and suppose that $\forall {\boldsymbol{z}}\in A,\ \xi_{\boldsymbol{z}} \leq \epsilon$. If the family $\{F_{\boldsymbol{z}}(x)|{\boldsymbol{z}}\in A\}$ has stable cross-tail variability, then for $F(x)=\int_A f_{\boldsymbol{Z}}({\boldsymbol{z}})F_{\boldsymbol{z}}(x)d{\boldsymbol{z}}$ we have $\xi_F\leq \epsilon$.
\end{lemma}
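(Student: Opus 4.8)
The plan is to adapt the proof of the upper bound $\xi_F\le\xi_{up}$ from Theorem \ref{otheorem2a}, the one new ingredient being the treatment of the part of the index space where the conditional tail parameter is non-positive. I will focus on $\epsilon>0$, the case relevant to the preceding discussion; if $\epsilon\le 0$ then $A^+=\{{\boldsymbol z}:\xi_{\boldsymbol z}>0\}=\emptyset$ and a simplified version of the argument below (or applying the $\epsilon>0$ case with every $\epsilon'>0$) gives $\xi_F\le 0$, while the sharper bound $\xi_F\le\epsilon$ for strictly negative $\epsilon$ needs a separate analysis of the common right endpoint that I would isolate. Writing $A=A^+\sqcup A^-$ as in the definition of stable cross-tail variability and $p^{\pm}=\int_{A^{\pm}}f_{\boldsymbol Z}({\boldsymbol z})\,d{\boldsymbol z}$ (the cases $p^{\pm}=0$ being trivial), I decompose
\begin{equation*}
\bar F(x)=\int_{A^+}f_{\boldsymbol Z}({\boldsymbol z})\bar F_{\boldsymbol z}(x)\,d{\boldsymbol z}+\int_{A^-}f_{\boldsymbol Z}({\boldsymbol z})\bar F_{\boldsymbol z}(x)\,d{\boldsymbol z}=:I^+(x)+I^-(x).
\end{equation*}

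For $I^+$, on $A^+$ we have $0<\xi_{\boldsymbol z}\le\epsilon$, hence $1/\xi_{\boldsymbol z}\ge 1/\epsilon$, and the representation $\bar F_{\boldsymbol z}(x)=x^{-1/\xi_{\boldsymbol z}}L_{\boldsymbol z}(x)$ from Theorem \ref{mda_rv} gives $\bar F_{\boldsymbol z}(x)\le x^{-1/\epsilon}L_{\boldsymbol z}(x)$ for $x\ge 1$; since $\{L_{\boldsymbol z}\,|\,{\boldsymbol z}\in A^+\}$ is $\gamma$-uniformly sub-polynomial, Proposition \ref{propUniSubPol} (applied to the renormalised density $f_{\boldsymbol Z}/p^+$ on $A^+$, then multiplied back by $p^+$) yields a sub-polynomial $L^+$ with $I^+(x)\le x^{-1/\epsilon}L^+(x)$. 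For $I^-$, on $A^-$ the $\xi_{\boldsymbol z}=0$ and $\xi_{\boldsymbol z}<0$ clauses of Theorem \ref{mda_rv} show that $\bar F_{\boldsymbol z}$ decays faster than any power of $x$, so $G_{\boldsymbol z}(x):=x^{1/\epsilon}\bar F_{\boldsymbol z}(x)\to 0$ and each $G_{\boldsymbol z}$ is sub-polynomial; the $A^-$ clause of stable cross-tail variability is exactly what makes $\{G_{\boldsymbol z}\,|\,{\boldsymbol z}\in A^-\}$ $\gamma$-uniformly sub-polynomial, so Proposition \ref{propUniSubPol} gives a sub-polynomial $L^-$ with $I^-(x)=x^{-1/\epsilon}L^-(x)$. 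Combining, $\bar F(x)\le x^{-1/\epsilon}L^*(x)$ for $x\ge 1$ with $L^*:=p^+L^+ + p^-L^-$ sub-polynomial (a finite positive combination of sub-polynomial functions is clearly sub-polynomial).

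It remains to conclude. By the standing assumption $F\in MDA(\xi_F)$; suppose for contradiction $\xi_F>\epsilon$. Then $\xi_F>0$, so Theorem \ref{mda_rv} gives $\bar F(x)=x^{-1/\xi_F}L_F(x)$ with $L_F$ slowly varying, and the bound $\bar F(x)\le x^{-1/\epsilon}L^*(x)$ forces $x^{1/\epsilon-1/\xi_F}L_F(x)\le L^*(x)$ for all large $x$. Here $1/\epsilon-1/\xi_F>0$; moreover $L_F$ slowly varying gives $L_F(x)\ge x^{-\eta}$ eventually and $L^*$ sub-polynomial gives $L^*(x)\le x^{\eta}$ eventually, so choosing $\eta>0$ with $2\eta<1/\epsilon-1/\xi_F$ produces $x^{1/\epsilon-1/\xi_F-2\eta}\le 1$ for all large $x$, a contradiction; hence $\xi_F\le\epsilon$. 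I expect the main obstacle to be the $I^-$ step: making precise the sense in which stable cross-tail variability transfers the uniform super-polynomial decay of the conditional tails on $A^-$ through the integral, so that the mixture over $A^-$ cannot accumulate enough mass to decay only polynomially — this is exactly where a naive family such as $\{e^{-zx}\}_{z\in(0,1]}$, whose mixture decays like $1/x$, would violate the statement in the absence of that hypothesis — together with the routine but fiddly bookkeeping for the degenerate cases $p^{\pm}=0$ and $\epsilon\le 0$.
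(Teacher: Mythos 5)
Your proposal is correct and follows essentially the same route as the paper's proof: a contradiction argument assuming $\xi_F>\epsilon$, the decomposition of the integral over $A^+$ and $A^-$, the two clauses of stable cross-tail variability used to justify dominated convergence (equivalently, your uniform sub-polynomial envelopes $L^{\pm}$) on each piece, and comparison of $\bar F(x)$ against $x^{-1/(\epsilon+\delta)}$ via the representation $\bar F(x)=x^{-1/\xi_F}L_F(x)$. The only cosmetic difference is that you package the conclusion as a single bound $\bar F(x)\leq x^{-1/\epsilon}L^*(x)$ with $L^*$ sub-polynomial rather than taking the limit of $\bar F(x)/x^{-1/(\epsilon+\delta)}$ for each $\delta>0$, and you are more explicit than the paper about the degenerate cases $p^{\pm}=0$ and $\epsilon\leq 0$.
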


\begin{corollary}\label{corollarynonpositive}Let ${\boldsymbol{Z}}:\Omega\rightarrow A\subset\mathbb{R}^n$ be a random vector where $A$ is measurable. At each point ${\boldsymbol{z}}\in A$ define a distribution $F_{\boldsymbol{z}}(x)\in MDA(\xi_{\boldsymbol{z}})$, and suppose that $\forall {\boldsymbol{z}}\in A,\ \xi_{\boldsymbol{z}} \leq 0$. If the family $\{F_{\boldsymbol{z}}(x)|{\boldsymbol{z}}\in A\}$ has stable cross-tail variability, then for $F(x)=\int_A f_{\boldsymbol{Z}}({\boldsymbol{z}})F_{\boldsymbol{z}}(x)d{\boldsymbol{z}}$ we have $\xi_F\leq 0$.
\end{corollary}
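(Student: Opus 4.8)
The plan is to derive the corollary directly from Lemma \ref{lemma5} by a limiting argument in the bound $\epsilon$. First I would observe that since $\xi_{\boldsymbol{z}} \leq 0$ for every ${\boldsymbol{z}} \in A$, the set $A^+=\{{\boldsymbol{z}}\mid \xi_{\boldsymbol{z}}>0\}$ is empty, so the first requirement in the definition of stable cross-tail variability is vacuously satisfied, and the hypothesis ``$\mathcal{F}_A$ has stable cross-tail variability'' reduces precisely to the statement that $\{\bar{F}_{\boldsymbol{z}}(x)\mid {\boldsymbol{z}}\in A\}$ is $\gamma$-uniformly sub-polynomial. In particular, the family $\{F_{\boldsymbol{z}}(x)\mid {\boldsymbol{z}}\in A\}$ satisfies the hypotheses of Lemma \ref{lemma5} for any choice of the bound $\epsilon$.

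Next, fix an arbitrary $\epsilon>0$. Then $\xi_{\boldsymbol{z}}\leq 0<\epsilon$ for all ${\boldsymbol{z}}\in A$, so Lemma \ref{lemma5} applies and yields $\xi_F\leq\epsilon$, where $\xi_F$ is well defined because of the standing assumption that $F(x)=\int_A f_{\boldsymbol{Z}}({\boldsymbol{z}})F_{\boldsymbol{z}}(x)d{\boldsymbol{z}}$ lies in the $MDA$ of some extreme value distribution. Since $\epsilon>0$ was arbitrary, letting $\epsilon\downarrow 0$ gives $\xi_F\leq 0$, which is the claim.

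I do not anticipate a serious obstacle here: the content is entirely contained in Lemma \ref{lemma5}, and the corollary is essentially the observation that its conclusion $\xi_F\leq\epsilon$ is stable under taking the infimum over admissible $\epsilon$. The only point needing a moment's care is checking that the phrase ``stable cross-tail variability'' in the corollary's hypothesis genuinely supplies what Lemma \ref{lemma5} requires when $A^-=A$; this is immediate from the definitions, as noted above. If one wished to avoid the limiting step, one could alternatively re-run the proof of Lemma \ref{lemma5} with the bound $0$ in place of $\epsilon$, but the $\epsilon\downarrow 0$ route is cleaner and reuses the already-established lemma verbatim.
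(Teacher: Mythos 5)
Your argument is exactly the paper's: apply Lemma \ref{lemma5} with an arbitrary $\epsilon>0$ (the hypothesis $\xi_{\boldsymbol{z}}\leq 0<\epsilon$ holds for all ${\boldsymbol{z}}$), conclude $\xi_F\leq\epsilon$ for every $\epsilon>0$, and let $\epsilon\downarrow 0$. Your additional check that stable cross-tail variability reduces to the $\gamma$-uniform sub-polynomial condition on $\{\bar{F}_{\boldsymbol{z}}\}$ when $A^+=\emptyset$ is a harmless extra verification; the proof is correct.
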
  
\begin{proof} 
We notice that for any $\epsilon>0$, we have $\xi_{\boldsymbol{z}}<\epsilon$ for all ${\boldsymbol{z}} \in A$. Hence, from the previous Lemma we conclude that $\xi_F\leq \epsilon, \forall\epsilon>0$.
\end{proof}
Finally, we prove the generalization of Theorem \ref{otheorem2a} in the case that the tail shape parameters $\xi_{\boldsymbol{Z}}$ of the conditional distributions are real numbers:

\begin{theorem}\label{otheorem2}
Let ${\boldsymbol{Z}}:\Omega\rightarrow A\subset\mathbb{R}^n$ be a random vector where $A$ is measurable. At each point ${\boldsymbol{z}}\in A$ define a distribution $F_{\boldsymbol{z}}(x)\in MDA(\xi_{\boldsymbol{z}})$, where $\xi_{\boldsymbol{z}}$ is continuous and $\xi_{\max}>0$. If the family $\{F_{\boldsymbol{z}}(x)|{\boldsymbol{z}}\in A\}$ has stable cross-tail variability, then for $F(x)=\int_A f_{\boldsymbol{Z}}({\boldsymbol{z}})F_{\boldsymbol{z}}(x)d{\boldsymbol{z}}$ we have $\xi_F=\xi_{\max}$. In the case that $\xi_{\max} \leq0$  then $\xi_F \leq0$.
\end{theorem}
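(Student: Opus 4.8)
The plan is to prove the two inequalities $\xi_F\le\xi_{\max}$ and $\xi_F\ge\xi_{\max}$ separately in the case $\xi_{\max}>0$, and to dispatch the case $\xi_{\max}\le 0$ directly from Corollary~\ref{corollarynonpositive}. For the upper bound, note that $\xi_{\boldsymbol z}\le\xi_{\max}$ for every ${\boldsymbol z}\in A$ by definition of the supremum, and that $\{F_{\boldsymbol z}\}$ has stable cross-tail variability by hypothesis; hence Lemma~\ref{lemma5} applied with $\epsilon=\xi_{\max}>0$ immediately gives $\xi_F\le\xi_{\max}$.

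For the lower bound I would use the continuity of ${\boldsymbol z}\mapsto\xi_{\boldsymbol z}$. Fix $\epsilon\in(0,\xi_{\max})$ and choose ${\boldsymbol z}^\ast\in A$ with $\xi_{{\boldsymbol z}^\ast}>\xi_{\max}-\epsilon$; by continuity the set $B:=\{{\boldsymbol z}\in A:\xi_{\boldsymbol z}>\xi_{\max}-\epsilon\}$ is relatively open in $A$, nonempty, and contained in $A^{+}$. Put $q:=\int_B f_{\boldsymbol Z}({\boldsymbol z})\,d{\boldsymbol z}\in(0,1]$ (positivity of $q$ holds because $f_{\boldsymbol Z}>0$ on $A$ and $B$ carries positive Lebesgue measure) and $\tilde F_B(x):=q^{-1}\int_B f_{\boldsymbol Z}({\boldsymbol z})F_{\boldsymbol z}(x)\,d{\boldsymbol z}$. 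On $B$ we have $0<\xi_{\max}-\epsilon<\xi_{\boldsymbol z}\le\xi_{\max}$, the functions $\{L_{\boldsymbol z}:{\boldsymbol z}\in B\}$ form a $\gamma$-uniformly sub-polynomial family (a sub-family of the $\gamma$-uniformly sub-polynomial $\{L_{\boldsymbol z}:{\boldsymbol z}\in A^{+}\}$ furnished by stable cross-tail variability, and the bound on $x_0$ only improves on a smaller index set), and $\xi_{\boldsymbol z}$ is continuous on $B$. Thus Theorem~\ref{otheorem2a} applies to the restricted mixture $\tilde F_B$ and gives $\xi_{\tilde F_B}\ge\xi_{\max}-\epsilon>0$ (only the lower-bound half of that theorem is needed).

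Next I would compare $F$ with $\tilde F_B$. Decompose $\bar F(x)=q\,\bar{\tilde F}_B(x)+(1-q)\,\bar{\tilde F}_{A\setminus B}(x)\ge q\,\bar{\tilde F}_B(x)$, where (invoking the blanket assumption after Theorem~\ref{otheorem1}) $\tilde F_{A\setminus B}$ and $F$ both lie in an $MDA$; hence $\bar{\tilde F}_B(x)/\bar F(x)\le q^{-1}$ for all large $x$. If we had $\xi_F<\xi_{\tilde F_B}$, then applying Lemma~\ref{lemma1} (when $\xi_F>0$) or Lemma~\ref{lemma2}, case~1 or case~3 (when $\xi_F\le 0$), with the thicker-tailed $F_1=\tilde F_B$ and $F_2=F$, we would get $\bar F(x)/\bar{\tilde F}_B(x)\to 0$, i.e.\ $\bar{\tilde F}_B(x)/\bar F(x)\to\infty$, contradicting the bound just derived. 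Therefore $\xi_F\ge\xi_{\tilde F_B}\ge\xi_{\max}-\epsilon$; letting $\epsilon\downarrow 0$ yields $\xi_F\ge\xi_{\max}$, and together with the upper bound this gives $\xi_F=\xi_{\max}$. Finally, if $\xi_{\max}\le 0$ then $\xi_{\boldsymbol z}\le\xi_{\max}\le 0$ for all ${\boldsymbol z}$, so $A=A^{-}$, and stable cross-tail variability together with Corollary~\ref{corollarynonpositive} gives $\xi_F\le 0$.

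The step I expect to be the main obstacle is the measure-theoretic point in the lower bound: continuity makes $B$ relatively open and nonempty, but the mixture decomposition requires $\int_B f_{\boldsymbol Z}>0$, which can fail for a pathological measurable $A$ whose near-maximizing points of $\xi_{\boldsymbol z}$ lie in a Lebesgue-null relatively open piece of $A$. This forces a mild regularity assumption on $A$ (for instance that the support of $f_{\boldsymbol Z}$ is $A$ and that $A$ equals the closure of its interior), under which every nonempty relatively open subset of $A$ has positive probability; the same caveat is implicit in the ``furthermore'' clause of Theorem~\ref{otheorem2a}. Everything else is bookkeeping with the comparison lemmas and with the definition of stable cross-tail variability.
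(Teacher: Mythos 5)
Your proof is correct and follows essentially the same route as the paper's: both arguments isolate a near-maximal index set $\{{\boldsymbol z}\in A:\xi_{\boldsymbol z}>\xi_{\max}-\epsilon\}$ (measurable, with positive probability by continuity and the blanket assumption $f_{\boldsymbol Z}>0$ --- your measure-theoretic caveat is precisely the implicit assumption the paper also makes when asserting $p_i>0$), apply Theorem~\ref{otheorem2a} to that piece and Lemma~\ref{lemma5} to the rest, and dispatch $\xi_{\max}\leq 0$ via Corollary~\ref{corollarynonpositive}. The only divergence is the recombination step --- the paper feeds the two-component mixture into Theorem~\ref{otheorem1}, whereas you obtain the upper bound from Lemma~\ref{lemma5} on all of $A$ and the lower bound from $\bar F(x)\geq q\,\bar{\tilde F}_B(x)$ combined with Lemmas~\ref{lemma1} and~\ref{lemma2} --- which is an equally valid finish.
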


Examples when the conditions of Theorem \ref{otheorem2} hold, as well as when they are violated, can be found in Appendix C and B, respectively.

\subsection{Useful propositions for the experimental part}
In this subsection, we prove three statements which are useful in the experimental Section \ref{experiments}, and state the well-known relation between the existence of the moments of a distribution and the thickness of its tails.
\begin{proposition}\label{proposition1}
Let $F_X$ be the distribution of the random variable $X$. We define $X_1$ to be a random variable whose distribution is the normalized right tail of $F_X$, that is:
\begin{align}\label{cut_tail1}
    F_{X_1}(x)=
    \begin{cases}
      0 & \text{for $x\leq0$}
      \\
      \frac{F(x)-F(0)}{1-F(0)}&\text{for $x>0$}
    \end{cases}  .
\end{align}
Similarly we define $X_2$ whose distribution is the normalized left tail of $F_X$,
\begin{align}\label{cut_tail2}
    F_{X_2}(x)=
    \begin{cases}
      0 & \text{for $x<0$}
      \\
      \frac{F(0)-F(-x)}{F(0)}&\text{for $x\geq0$}
    \end{cases}  .
\end{align}
If $F_{X_1}\in MDA(\xi_1)$, $F_{X_2}\in MDA(\xi_2)$, and $\max\{\xi_1,\xi_2\}>0$, then: $$\xi_{|X|}=\max\{\xi_1,\xi_2\}.$$
If $F_{X_1}\in MDA(\xi_1)$, $F_{X_2}\in MDA(\xi_2)$, and $\max\{\xi_1,\xi_2\}\leq0$, then: $$\xi_{|X|}\leq0.$$
\end{proposition}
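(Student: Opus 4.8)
The plan is to exhibit $F_{|X|}$ as a two-component finite mixture of $F_{X_1}$ and $F_{X_2}$ and then to invoke Theorem~\ref{otheorem1} directly. First I would dispose of the degenerate cases: if $F_X(0)=0$ then $X_2$ is not defined, $|X|=X$ almost surely, and $F_{|X|}=F_{X_1}$; symmetrically, if $F_X(0)=1$ then $X_1$ is not defined, $|X|=-X$ almost surely, and $F_{|X|}=F_{X_2}$. In both situations the claim is immediate, so henceforth I assume $0<F_X(0)<1$ and set $p:=1-F_X(0)>0$, $q:=F_X(0)>0$, so that $p+q=1$.

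The crux is the elementary identity
\begin{align*}
F_{|X|}(x) &= P(-x\le X\le x) = F_X(x)-F_X(-x) \\
&= \bigl(F_X(x)-F_X(0)\bigr)+\bigl(F_X(0)-F_X(-x)\bigr) = p\,F_{X_1}(x)+q\,F_{X_2}(x),
\end{align*}
valid for every $x>0$ (directly when $X$ has no atom at $-x$, and in general by right-continuity of both sides), which follows by unwinding the definitions of $F_{X_1}$ and $F_{X_2}$. Thus $F_{|X|}$ coincides, on all of $(0,\infty)$, with the convex combination $p\,F_{X_1}+q\,F_{X_2}$ of the distributions $F_{X_1}\in MDA(\xi_1)$ and $F_{X_2}\in MDA(\xi_2)$.

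Since membership in a maximum domain of attraction depends only on the right tail, I would then apply Theorem~\ref{otheorem1} with $n=2$, two conditioning points carrying $F_{X_1}$ and $F_{X_2}$, and convex weights $p,q$. When $\xi_{\max}:=\max\{\xi_1,\xi_2\}>0$ this gives $F_{|X|}\in MDA(\xi_{\max})$, i.e.\ $\xi_{|X|}=\max\{\xi_1,\xi_2\}$; when $\xi_{\max}\le 0$ it gives $\xi_{|X|}\le 0$ (understood, as in Theorem~\ref{otheorem1}, to mean that if $\xi_{|X|}$ exists then it is non-positive). There is no real obstacle beyond bookkeeping here: the only delicate points are the two degenerate values of $F_X(0)$ and possible atoms of $X$, neither of which affects the tail, and all the substantive content is already packaged inside Theorem~\ref{otheorem1}, which is used here as a black box.
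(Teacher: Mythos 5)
Your proposal is correct and follows essentially the same route as the paper: decompose $F_{|X|}$ as the convex combination $p\,F_{X_1}+q\,F_{X_2}$ and invoke Theorem~\ref{otheorem1} with $n=2$. Your extra care with the degenerate cases $F_X(0)\in\{0,1\}$ (needed since Theorem~\ref{otheorem1} assumes strictly positive weights) and with possible atoms is a small tidying-up that the paper's one-line proof omits, but the substance is identical.
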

\begin{proof}
Since
\begin{equation}\label{p1_1}
\begin{split}
F_{|X|}(x)=\mathbb{P}(|X|<x)=\mathbb{P}(X<x|X>0)\mathbb{P}(X>0)+\mathbb{P}(-X<x|X\leq0)\mathbb{P}(X\leq0)
\\
=p_1 F_{X_1}(x)+p_2 F_{X_2}(x),
\end{split}
\end{equation}
Theorem \ref{otheorem1} gives the desired conclusion.
\end{proof}

\begin{proposition}\label{proposition2}
Let $X$ be a random variable such that $X\in MDA(\xi_X>0)$. If we define $Y$ to be equal to $X^\alpha$, for some $\alpha\in \mathbb{R^+}$, then $Y \in MDA(\xi_Y)$ where $\xi_Y=\alpha \xi_X$. If $\xi_X\leq 0$ then $\xi_Y\leq 0$.
\end{proposition}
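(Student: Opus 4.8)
The plan is to reduce everything to the characterisation of the maximum domain of attraction in terms of slowly varying functions (Theorem \ref{mda_rv}), together with the elementary observation that since $\alpha>0$ the map $x\mapsto x^{\alpha}$ is a strictly increasing bijection of $[0,\infty)$ onto itself, so that $\{X^{\alpha}>y\}=\{X>y^{1/\alpha}\}$ and hence $\bar F_Y(y)=\bar F_X(y^{1/\alpha})$ for every $y>0$. (This transformation is only meaningful for nonnegative $X$, which is consistent with the standing assumption that the loss/error variables under consideration are positive.)

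First I would treat the case $\xi_X>0$. By Theorem \ref{mda_rv} we may write $\bar F_X(x)=x^{-1/\xi_X}L(x)$ with $L$ slowly varying. Substituting $x=y^{1/\alpha}$ gives
\begin{equation*}
\bar F_Y(y)=\bar F_X(y^{1/\alpha})=\bigl(y^{1/\alpha}\bigr)^{-1/\xi_X}L\bigl(y^{1/\alpha}\bigr)=y^{-1/(\alpha\xi_X)}\,\tilde L(y),
\end{equation*}
where $\tilde L(y):=L(y^{1/\alpha})$. It then remains to check that $\tilde L$ is slowly varying: for any $a>0$, setting $t=y^{1/\alpha}\to\infty$ as $y\to\infty$,
\begin{equation*}
\lim_{y\to\infty}\frac{\tilde L(ay)}{\tilde L(y)}=\lim_{y\to\infty}\frac{L\bigl(a^{1/\alpha}y^{1/\alpha}\bigr)}{L\bigl(y^{1/\alpha}\bigr)}=\lim_{t\to\infty}\frac{L\bigl(a^{1/\alpha}t\bigr)}{L(t)}=1,
\end{equation*}
because $a^{1/\alpha}>0$ and $L$ is slowly varying. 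Applying the converse implication of Theorem \ref{mda_rv} to $\bar F_Y(y)=y^{-1/(\alpha\xi_X)}\tilde L(y)$ yields $Y\in MDA(\alpha\xi_X)$, i.e.\ $\xi_Y=\alpha\xi_X>0$, as claimed.

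For the case $\xi_X\le 0$, I would argue by contradiction using the part just proved. Suppose $\xi_Y>0$. Since $X=Y^{1/\alpha}$ with $1/\alpha>0$, the computation above applied to $Y\in MDA(\xi_Y>0)$ with exponent $1/\alpha$ shows $X\in MDA(\tfrac{1}{\alpha}\xi_Y)$, and $\tfrac{1}{\alpha}\xi_Y>0$. But the shape parameter of a maximum domain of attraction is unique, contradicting $X\in MDA(\xi_X)$ with $\xi_X\le 0$. Hence $\xi_Y\le 0$.

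The only genuinely technical point is the verification that $\tilde L(y)=L(y^{1/\alpha})$ inherits slow variation from $L$, which is the routine change of variables carried out above; everything else is a direct bookkeeping of exponents via $\bar F_Y=\bar F_X\circ(\cdot)^{1/\alpha}$. One mild care point worth flagging: in the $\xi_X\le 0$ branch one could alternatively obtain the stronger statement that $Y$ actually lies in some $MDA(\xi_Y)$ with $\xi_Y\le 0$ by splitting into $\xi_X=0$ and $\xi_X<0$ and pushing the respective representations of Theorem \ref{mda_rv} through the substitution, but the contradiction argument suffices for the inequality asserted.
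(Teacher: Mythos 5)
Your proof is correct and follows essentially the same route as the paper's: rewrite $\bar F_Y(y)=\bar F_X(y^{1/\alpha})$ using the slowly varying representation from Theorem \ref{mda_rv}, read off the exponent $-1/(\alpha\xi_X)$, and handle $\xi_X\le 0$ by contradiction via the positive case. The only difference is that you explicitly verify that $L(y^{1/\alpha})$ is slowly varying, a step the paper leaves implicit.
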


It is important to notice that we can estimate the shape of the tail of $W_{\boldsymbol{V}}\boldsymbol{(U)}$ by also conditioning on the test label $\boldsymbol{y}$:
\begin{equation}\label{eq.3.3}
f_W(w)=\int f_{W,\boldsymbol{Y}}(w,\boldsymbol{y}) d\boldsymbol{y}= \int f_{\boldsymbol{Y}}(\boldsymbol{y}) f(w|\boldsymbol{Y}=\boldsymbol{y}) d\boldsymbol{y}=\int f_{\boldsymbol{Y}}(\boldsymbol{y}) f_{\boldsymbol{y}}(w) d\boldsymbol{y}
\end{equation}
\begin{equation}\label{eq.3.4a}
F_W(w)=\int f_{\boldsymbol{Y}}(\boldsymbol{y}) F_{\boldsymbol{y}}(w) d\boldsymbol{y}.
\end{equation}
We use this fact to prove the following:
\begin{proposition}\label{proposition3}
Let the loss function be defined as $W_{\boldsymbol{V}}({\boldsymbol{U}})=|{Y}-\hat{f}_{\boldsymbol{V}}({\boldsymbol{X}})|^p$ for some $p\in \mathbb{R^+}$, and let $F_y(t)$ be the distribution of $\hat{f}_{\boldsymbol{V}}({\boldsymbol{X}})$ given $Y$. If we assume that the distribution of the labels $Y$ has bounded support $S$, that the family $\{F_y(t)| y \in S\}$ has stable cross-tail variability, and that the shape parameters $\xi_y$ of $F_y(t)$ change continuously, then the tail shape parameters of $W_{\boldsymbol{V}}({\boldsymbol{U}})$ and $|\hat{f}_{\boldsymbol{V}}({\boldsymbol{X}})|^p$ share the same sign, and are identical if either of them is positive.
\end{proposition}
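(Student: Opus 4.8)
\emph{Proof plan.} The strategy is to apply Theorem~\ref{otheorem2} and Corollary~\ref{corollarynonpositive} with the test label $Y$ in the role of the conditioning vector $\boldsymbol Z$, after peeling off the map $g\mapsto|Y-g|^{p}$ with Propositions~\ref{proposition1} and~\ref{proposition2}. Write $g:=\hat f_{\boldsymbol V}(\boldsymbol X)$. Conditioning on $Y=y$ as in \eqref{eq.3.3}--\eqref{eq.3.4a}, the two marginals of interest are mixtures over the (compact) support $S$,
\begin{equation}
F_W(w)=\int_S f_Y(y)\,\mathbb P\!\left(|y-g|^{p}\le w\mid Y=y\right)dy,\qquad F_{|g|^{p}}(w)=\int_S f_Y(y)\,\mathbb P\!\left(|g|^{p}\le w\mid Y=y\right)dy,
\end{equation}
so it suffices to (a) identify the tail index of each conditional law, (b) check that the resulting conditional families satisfy the hypotheses of Theorem~\ref{otheorem2}, and (c) assemble the conclusion; the $p$-th power is handled only at the end, so in (a)--(b) I work with $|y-g|$ and $|g|$.

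For (a), fix $y\in S$. Since $|y|<\infty$, the translation $x\mapsto x-y$ leaves the maximal domain of attraction and the shape parameter of either tail unchanged: by Theorem~\ref{mda_rv}, if a survival function equals $x^{-1/\xi}L(x)$ with $L$ slowly varying, then its $y$-shift equals $x^{-1/\xi}\big[(1-y/x)^{-1/\xi}L(x-y)\big]$, whose bracket is again slowly varying, with the analogous statements when $\xi=0$ or $\xi<0$; and reflection $g\mapsto-g$ merely interchanges the two one-sided tails. Hence, writing $\xi_y^{+}=\xi_y$ and $\xi_y^{-}$ for the right- and left-tail exponents of $F_y$, the conditional laws of $g$ and of $y-g$ given $Y=y$ carry the same unordered pair $\{\xi_y^{+},\xi_y^{-}\}$ of one-sided exponents. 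Proposition~\ref{proposition1} then shows that $|g|\mid Y=y$ and $|y-g|\mid Y=y$ lie \emph{both} in $MDA(\eta_y)$, with $\eta_y=\max\{\xi_y^{+},\xi_y^{-}\}$ when this maximum is positive and $\eta_y\le0$ otherwise --- and, decisively, the two conditional families share the \emph{same} index function $y\mapsto\eta_y$.

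Granting (b), step (c) is short. If $\eta_{\max}:=\sup_{y\in S}\eta_y>0$, Theorem~\ref{otheorem2} (using continuity of $y\mapsto\eta_y$) gives $\xi_{|Y-g|}=\eta_{\max}=\xi_{|g|}$; if $\eta_{\max}\le0$, Corollary~\ref{corollarynonpositive} gives $\xi_{|Y-g|}\le0$ and $\xi_{|g|}\le0$. Invoking Proposition~\ref{proposition2} with exponent $p$ then yields $\xi_W=p\,\xi_{|Y-g|}$ and $\xi_{|g|^{p}}=p\,\xi_{|g|}$: both are non-positive when $\eta_{\max}\le0$, and both equal $p\,\eta_{\max}>0$ otherwise, which is exactly the claimed dichotomy (if one of the two indices is positive then $\eta_{\max}>0$, whence they coincide). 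Continuity of $\eta_y$ follows from continuity of $\xi_y=\xi_y^{+}$ (assumed) and of $\xi_y^{-}$ --- which we likewise assume, and which is anyway vacuous if $\hat f_{\boldsymbol V}(\boldsymbol X)\ge0$, since then $|g|=g$ and the only relevant tail of $|y-g|$ is the shifted right tail of $g$ --- a maximum of continuous functions being continuous; the standing positivity of $f_Y$ on $S$ and the blanket assumption that the relevant mixtures lie in some $MDA$ are carried over from the paper's conventions.

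The substance of the argument is step (b): \emph{stable cross-tail variability must be shown to be preserved} under (i) the translation $x\mapsto x-y$ and (ii) the superposition of the two one-sided tails used in Proposition~\ref{proposition1}. Concretely, on $\{y:\eta_y>0\}$ one must verify that the slowly varying functions attached to $|y-g|\mid Y=y$ and to $|g|\mid Y=y$ --- built from the $L_y^{\pm}$ by multiplication by $(1-y/x)^{-1/\xi}$-type factors and by addition, as in the proof of Lemma~\ref{lemma3} --- again form a $\gamma$-uniformly sub-polynomial family, with compactness of $S$ providing the uniformity of the translation; and on $\{y:\eta_y\le0\}$ one must verify the analogous statement for the survival functions themselves. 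I expect the translation step to be delicate precisely when $\xi_y\downarrow0$ within the positive part, where $(1-y/x)^{-1/\xi_y}\to1$ only non-uniformly in $y$ and one must exploit either the fast decay of the $x^{-1/\xi_y}$ prefactor or the freedom (in the definition of $\gamma$-uniform sub-polynomiality) to enlarge $\gamma$; the superposition step should follow the pattern of Lemma~\ref{lemma3} and Proposition~\ref{propUniSubPol} and be routine.
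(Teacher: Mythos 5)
Your proposal follows essentially the same route as the paper: translation and reflection preserve the one-sided tail indices, Proposition~\ref{proposition1} handles the absolute value, Theorem~\ref{otheorem2} (resp.\ Corollary~\ref{corollarynonpositive}) handles the mixture over $y$, and Proposition~\ref{proposition2} peels off the $p$-th power at the end. The one place where you stop short of a proof is exactly your step (b): you state that stable cross-tail variability of $\{G_y\}$ (the conditional laws of $|y-g|$) must be verified and "expect" it to work, but do not carry it out. The paper closes this with a one-line boundedness argument: if $t_0(y)$ is the onset of the sub-polynomial bound for $F_y$ and $s_0(y)$ the corresponding onset for $G_y$, then $s_0(y)$ is displaced from $t_0(y)$ by at most $|y|$, and since $S$ is bounded the family $\{s_0(y)\mid y\in S\}$ remains bounded above; no delicate analysis of the $(1-y/x)^{-1/\xi_y}$ factor as $\xi_y\downarrow 0$ is attempted. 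Your observation that continuity of $\eta_y=\max\{\xi_y^{+},\xi_y^{-}\}$ tacitly requires continuity of the left-tail index as well is a fair reading of an implicit assumption the paper also makes without comment. So: same decomposition and same lemmas, with your step (b) being the only substantive piece left as a sketch, and it is dispatched in the paper by compactness of $S$ rather than by the finer uniformity analysis you anticipate.
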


There exists a strong connection between the Maximum Domain of Attraction of a distribution, and the existence of its moments (see \cite{embrechts2013modelling}):
\begin{proposition}\label{prop2} If $F_{|X|}$ is the distribution function of a random variable $|X|$, and $F_{|X|} \in MDA(\xi)$ then:
\begin{equation}\label{4.2.1}
\text{i) if } \xi>0,\text{ then }  \mathbb{E}[|X|^r]=\infty, \forall r\in (\frac{1}{\xi},\infty),
\end{equation}
\begin{equation}\label{4.2.2}
\text{ii) if } \xi \leq 0,\text{ then }  \mathbb{E}[|X|^r]<\infty, \forall r\in (0,\infty).
\end{equation}
\end{proposition}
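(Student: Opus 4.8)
Since $|X|\ge 0$, the natural starting point is the layer-cake identity $\mathbb{E}[|X|^r]=\int_0^\infty r\,x^{r-1}\bar F_{|X|}(x)\,dx$, which turns the question into one about the integrability of $x^{r-1}\bar F_{|X|}(x)$ near $x_F$. The contribution of any bounded interval $[0,x_0]$ is automatically finite (as $\bar F_{|X|}\le 1$ and $x^{r-1}$ is integrable at $0$ for $r>0$), so everything reduces to the large-$x$ behaviour of $\bar F_{|X|}$, which is precisely what Theorem \ref{mda_rv} describes case by case in $\xi$.

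\emph{Case $\xi>0$.} Theorem \ref{mda_rv} gives $\bar F_{|X|}(x)=x^{-1/\xi}L(x)$ with $L$ slowly varying (and $x_F=\infty$), so $r\,x^{r-1}\bar F_{|X|}(x)=r\,x^{r-1-1/\xi}L(x)$. Fix $r>1/\xi$ and choose $\epsilon\in(0,\,r-1/\xi)$. The reciprocal $1/L$ is again slowly varying, so Proposition \ref{prop1} applied to $1/L$ gives $x^{-\epsilon}/L(x)\to 0$, i.e.\ $L(x)\ge x^{-\epsilon}$ for all large $x$. Hence $r\,x^{r-1}\bar F_{|X|}(x)\ge r\,x^{r-1-1/\xi-\epsilon}$ eventually, and since the exponent $r-1-1/\xi-\epsilon>-1$ the integral at infinity diverges, so $\mathbb{E}[|X|^r]=\infty$.

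\emph{Case $\xi\le 0$.} If $\xi<0$ the right endpoint $x_F$ is finite (Weibull regime), and if $\xi=0$ with $x_F<\infty$ it is finite by hypothesis; either way $|X|\le x_F$ almost surely and $\mathbb{E}[|X|^r]\le x_F^r<\infty$ for every $r>0$. The remaining sub-case is $\xi=0$ with $x_F=\infty$, where Theorem \ref{mda_rv} gives $\bar F_{|X|}(x)=c(x)\exp\!\big(-\int_w^x a(t)^{-1}dt\big)$ with $c(x)\to c>0$ and $a'(x)\to 0$. From $a'(x)\to 0$ one deduces $a(x)/x\to 0$, so for every $\delta>0$ we have $a(t)\le \delta t$ for all large $t$, whence $\int_w^x a(t)^{-1}dt\ge \delta^{-1}\ln x+O(1)$ and $\bar F_{|X|}(x)\le C\,x^{-1/\delta}$ for large $x$. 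Given $r>0$, taking $\delta$ small enough that $1/\delta>r$ makes $x^{r-1}\bar F_{|X|}(x)\le C\,x^{r-1-1/\delta}$ with exponent $<-1$, so the tail integral converges and $\mathbb{E}[|X|^r]<\infty$.

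\emph{Main obstacle.} The genuinely delicate step is the last sub-case: one must upgrade the qualitative fact $a'(x)\to 0$ to the quantitative estimate $a(x)=o(x)$, so that $a(t)^{-1}$ dominates $(\delta t)^{-1}$ and the exponential factor kills any polynomial, while checking that the bounded prefactor $c(x)$ does not interfere. The $\xi>0$ case is mechanically routine but hinges on the small observation that the reciprocal of a slowly varying function is slowly varying, which converts Proposition \ref{prop1} into the matching lower bound $L(x)\ge x^{-\epsilon}$ needed to force divergence.
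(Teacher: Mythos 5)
Your proof is correct. The paper does not actually prove Proposition \ref{prop2}; it states it as a known fact with a citation to Embrechts et al., so there is no in-paper argument to compare against. Your derivation is the standard one and all the steps check out: the layer-cake reduction to tail integrability, the observation that $1/L$ is again slowly varying so that Proposition \ref{prop1} yields the lower bound $L(x)\ge x^{-\epsilon}$ needed for divergence when $r>1/\xi$, and, in the only delicate sub-case ($\xi=0$, $x_F=\infty$), the upgrade from $a'(x)\to 0$ to $a(x)=o(x)$ via $a(x)=a(w)+\int_w^x a'(t)\,dt$, which forces $\bar F_{|X|}$ to decay faster than any power and hence makes every moment finite. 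The bounded-support cases are handled correctly as well.
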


 This means that, for a model with a positive loss function whose distribution has a shape parameter that is bigger than one, even the first moment of that loss function distribution does not exist. Hence, we would expect that our model has an infinite mean, which would suggest that this model should be eliminated during model ranking. However, if all models possess an infinite mean, it is not advisable to disregard models with smaller medians.
 \newline
\newline
 In Proposition \ref{proposition3}, we showed that if we condition on the testing set, under some assumptions, we can estimate the shape of the total loss distribution, that is the distribution of $W_{{\boldsymbol{V}}}({\boldsymbol{U}})$, by simply investigating the models prediction, without the need for target data. This can also be motivated from the moments of $W_{{\boldsymbol{V}}}({\boldsymbol{U}})$ as shown in Appendix D.

\section{Experiments}\label{experiments}

In this section, we demonstrate the significance of Theorem \ref{otheorem2}. In the first subsection, we show experimental evidence that the estimated shape parameter of the marginal distribution, under the assumption that we have an abundance of sample points, coincides with the maximal shape parameter of individual conditional distributions. In the second subsection, we show that when the sample size is finite, as it is the case in the real world, the method proposed by Theorem \ref{otheorem2} (cross tail estimation) can be necessary to reduce the required sample size for proper tail shape parameter estimation of marginal distributions. Furthermore, in the third subsection, we compare the standard POT and cross tail estimation on real data. For the considered regression scenarios, we notice that when these shape parameters are calculated by cross tail estimation, the magnitude of shape parameters of the distribution of model predictions increases significantly when the model overfits. We also notice that such a relationship does not appear in the case that we use directly the POT method to estimate the aforementioned shape parameters. Finally, in the fourth subsection, we discuss the computational advantages of using cross tail estimation.

\subsection{Validity of Cross Tail Estimation in Practice}\label{s5.1}
The main problem that we tried to tackle in the previous section, was estimating the shape parameters of the tail of distribution $F(x)$:
\begin{equation}\label{5.1}
\begin{split}
F(x)=\int f({\boldsymbol{z}})F_{\boldsymbol{z}}(x) d{\boldsymbol{z}},
\end{split}
\end{equation}
via tail shape estimation of the conditional distributions $F_{\boldsymbol{z}}(x)$. In what follows, we give two experiments showing that this is feasible in practice.
\subsubsection{Experimental Setting}\label{experimental_setting}

For simplicity, we set ${\boldsymbol{z}}$ to be one dimensional, and thus denote the conditional distributions $F_{\boldsymbol{z}}$ as $F_z$, where $z\in \mathbb{R}$. In this case equation (\ref{5.1}) becomes
\begin{equation}\label{5.2}
\begin{split}
F(x)=\int f(z)F_{z}(x) d{z}.
\end{split}
\end{equation}
\newline
First, we define $f(z)$ as a mixture of Gaussian distributions. To do so we choose a mean $\mu_i$ from a uniform distribution in $[-5,5]$ and then a standard deviation $\sigma_i$ from a uniform distribution between $[0,4]$, and together they define a Gaussian distribution $g_i(z)$. We repeat this process for $30$ Gaussian distributions and define $f(z)=\sum_{i=1}^{30} \frac{g_i(z)}{30}$.
\newline
\newline
Second, we define the function ${\xi_z}$ as
\begin{equation}\label{xi_z_imple}
{\xi_z}=\frac{\frac{(nz+2m^2+kz^3)e^{-|z|}+a}{b}+c}{d},
\end{equation}
where $n=1$, $m=2$, $k=2$, $b=5.76$, $a=-3b-3.80$, $d=(\frac{7}{8}\xi_{\max}+\frac{29}{8})^{-1}$ and $c=d\xi_{\max}+3$. The $\xi_{\max}$ in the variables $c, d$  determines the maximum value that the function ${\xi_z}$ takes as long as $\xi_{\max} \in[-4,5]$. More details about the function ${\xi_z}$ are provided in Appendix G. 
\newline
\newline
Third, we define $F_z(x)$ as a generalized Generalized Pareto if  ${\xi_z}\leq0$, otherwise we define it  as  $F_z(x)=1-x^{-\frac{1}{{\xi_z}}}$.
\newline
\newline
The choice of $\xi_{\max}$ completely determines each ${\xi_z}$ and hence each $F_z(x)$, thus it fully defines $F(x)$ in Equation \ref{5.2}. In our experiments the parameter $\xi_{\max}$ takes the following $45$ values $\{-4,-4+0.2,-4+0.4,...,5\}$, that is ${\xi_{j}}=-4+\frac{2j}{10}$, where $j\in\{0,...45\}$. Each choice of $j$ defines a particular $F_j(x)$ on the left side of Equation \ref{5.2}. Also since the maximum ${\xi_{j}}$ determines $\xi_z$ then we denote $\xi_z$ as $\xi_{z,j}$. For each $j$ we repeat the following process $p$ times:

\begin{enumerate}
  \item Define an empty List J and repeat $M$ times the steps (a), (b), (c).
  \begin{enumerate}
  \item Sample a $z$ from the $f(z)$ defined above
  \item For that $z$ calculate $\xi_{z,j}$ (given that $\xi_{\max}={\xi_{j}}$)
  \item For the given  $\xi_{z,j}$ sample a point $x$ from a Generalized Pareto if  $\xi_{z,j}\leq 0$, otherwise sample from $F_z(x)=1-x^{-\frac{1}{\xi_{z,j}}}$. Add this sample to List J.
  \end{enumerate}
    \item Use the Pickands or DEdH estimator on these $M$ samples in List J to estimate the shape parameter of $F_j(x)$. According to Theorem \ref{otheorem2} this estimated value ${\hat{\xi}^k}_{j}$ should be precisely ${\xi_{j}}$.
\end{enumerate}
As guided by the ideas laid in Appendix E, our final estimation of ${\xi_{j}}$ after $p$ repetitions of the process above is ${\hat{\xi}}_{j}=\frac{1}{p}\sum_{k=1}^{p} {\hat{\xi}^k}_{j}$. 
\newline
\newline
In the next subsections we show the results of performing this experiment for each $j$ using the Pickands and the DEdH estimators. 

\subsubsection{Cross tail estimation using the Pickands estimator}
 In this subsection, we show the results of the experiment described in Subsection \ref{experimental_setting}, when the Pickands Estimator is applied. 
\begin{figure}[ht]
\centering
\includegraphics[scale=0.27]{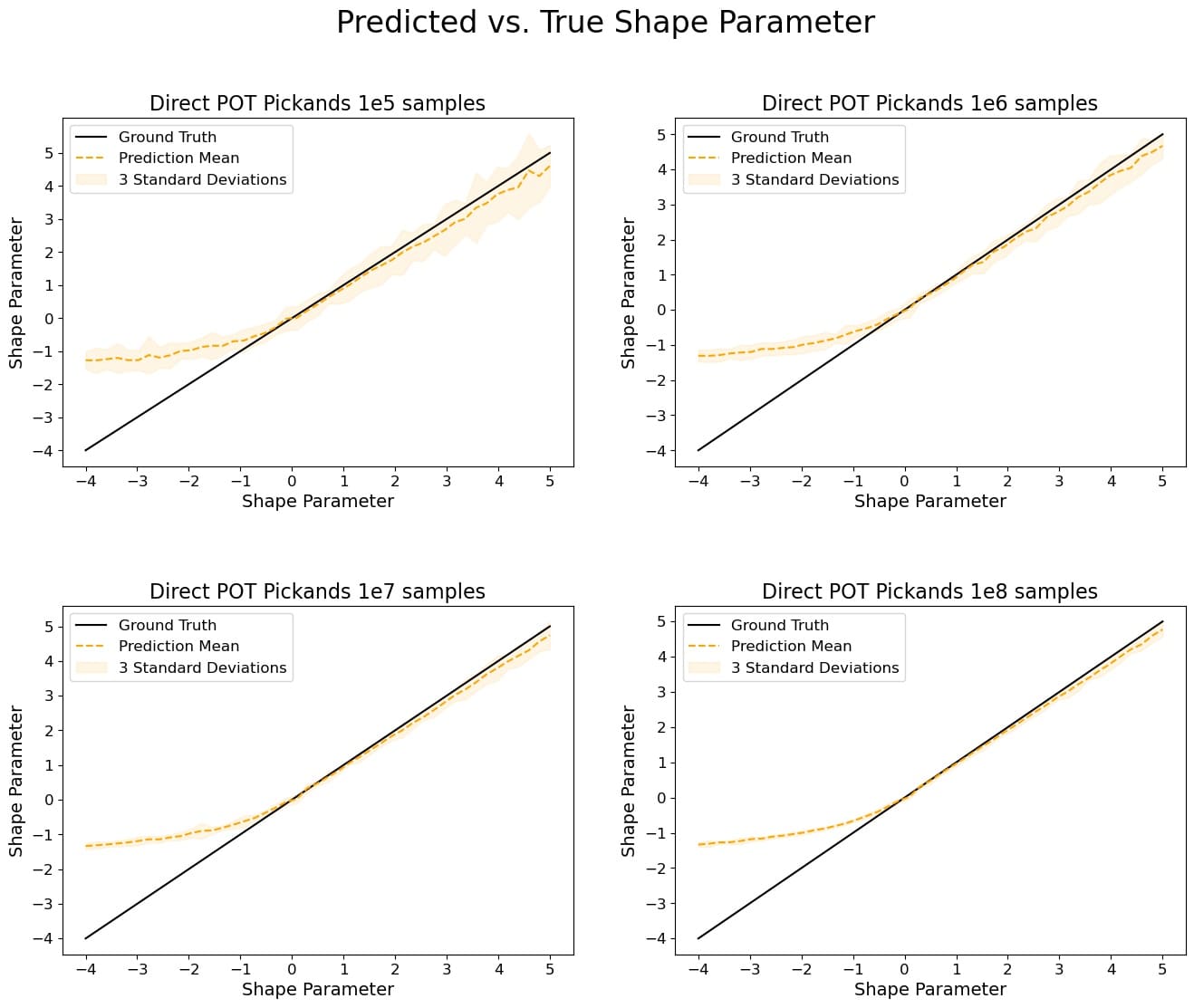}
\caption{ In cases where the maximum tail shape parameter in the mixture of conditional distributions is positive, the estimated shape parameter of the marginal is equal to this maximal value. If this maximum value is negative, the estimated shape parameter is negative. We utilized the Pickands estimator. }
\label{fig5.1.1}
\end{figure}
The results are shown in Figure \ref{fig5.1.1}, where the number $M$ defined in the previous subsection takes the following values $\{10^{5},10^{6},10^{7},10^{8}\}$ and we set $p=10$ . We have executed the experiment $10$ times, and to account for variability across the different runs, we have computed the mean and standard deviation of the results.

\subsubsection{Cross tail estimation using the DEdH estimator}

 In this subsection, we present the results of the experiment described in Subsection \ref{experimental_setting}, when the DEdH Estimator is employed. The results are illustrated in Figure \ref{fig5.1.2}, where the number $M$ defined in the previous subsection takes the values $\{10^{5},10^{6},10^{7},10^{8}\}$ and set $p=10$ . We have executed the experiment 10 times, and to account for variability across the different runs, we have computed the mean and standard deviation of the results.
 
\begin{figure}[ht]
\centering
\includegraphics[scale=0.27]{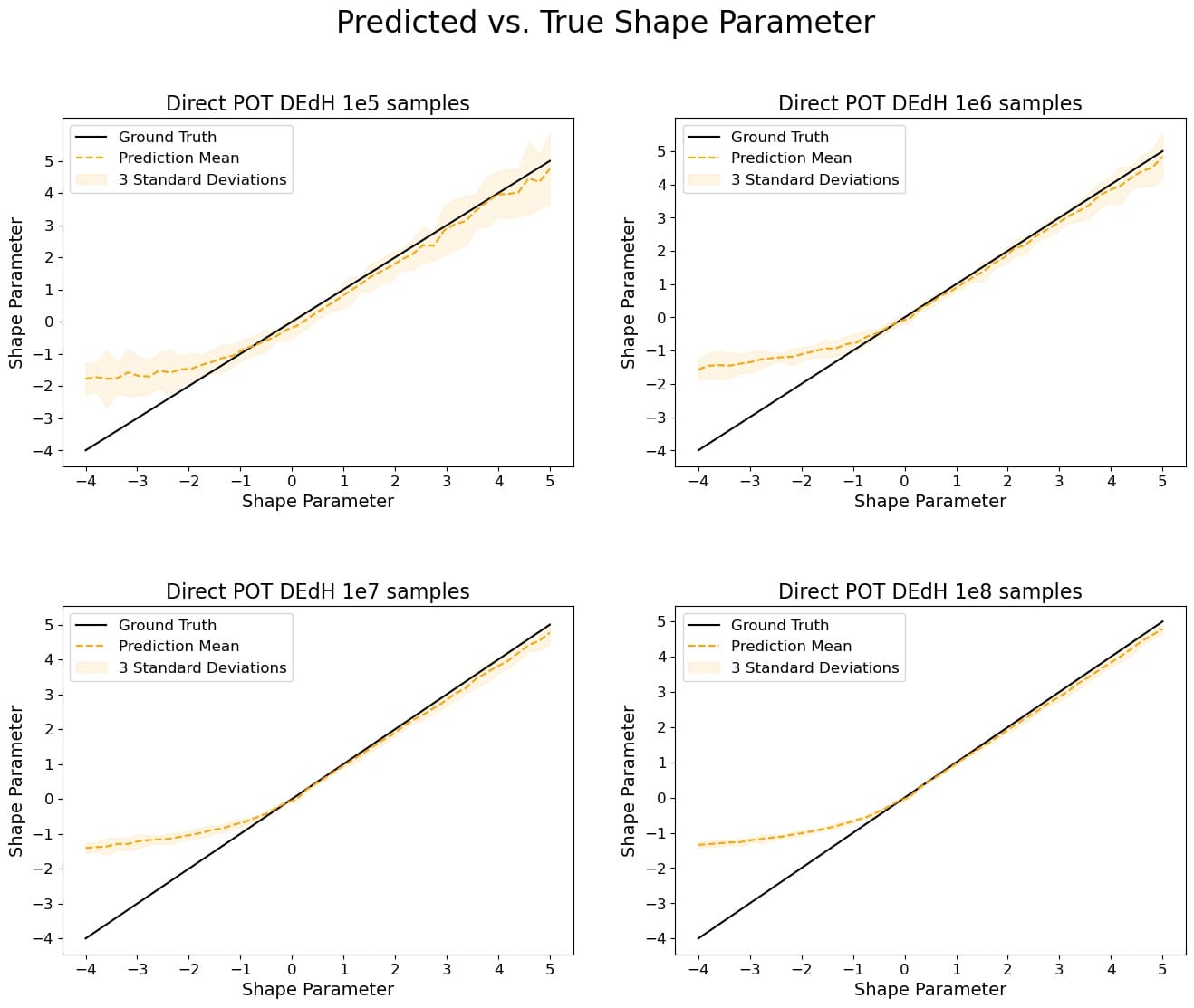}
\caption{In cases where the maximum tail shape parameter in the mixture of conditional distributions is positive, the estimated shape parameter of the marginal is also positive and equal to this maximal value. However, if this maximum value is negative, the estimated shape parameter is also negative. We utilize the DEdH estimator as our estimator of choice.}
\label{fig5.1.2}
\end{figure}

\subsection{Addressing High Variance in the Location of Conditional Distributions: The Necessity of Cross Tail Estimation (CTE)}\label{s5.2}

In subsection \ref{s5.1}, we presented empirical evidence to substantiate Theorem \ref{otheorem2}. Notably, for computational expediency, we elected to set all conditional distributions with a location parameter of zero. This decision was motivated by the fact that, if location parameters were permitted to exhibit significant variability, the direct Peaks Over Threshold (POT) approach would necessitate an unfeasibly large sample size to verify our claims. This issue is addressed in the current subsection, wherein we illustrate that the Conditional Tail Expectation (CTE) approach provides a suitable remedy. Specifically, in subsection \ref{s5.2.1}, we outline modifications to the experimental setup from subsection \ref{s5.1} that allow for variation in the location parameter, and present the experimental results accordingly. In subsection \ref{s5.2.2}, we apply the CTE approach to the same distributions as in subsection \ref{s5.2.1}, and demonstrate that it allows for correct estimation of shape parameters. Additional experiments, in more simplified settings, highlighting the necessity of CTE are provided in Appendix F.

\subsubsection{Applying POT directly when the  location of conditional distributions exhibits substantial variability}\label{s5.2.1}

In order to ensure high variability of the location of conditional distributions $F_z(x)$, we modify step (c) of the sampling process in Subsection \ref{experimental_setting} as follows:

\begin{figure}[ht]
\centering
\includegraphics[scale=0.27]{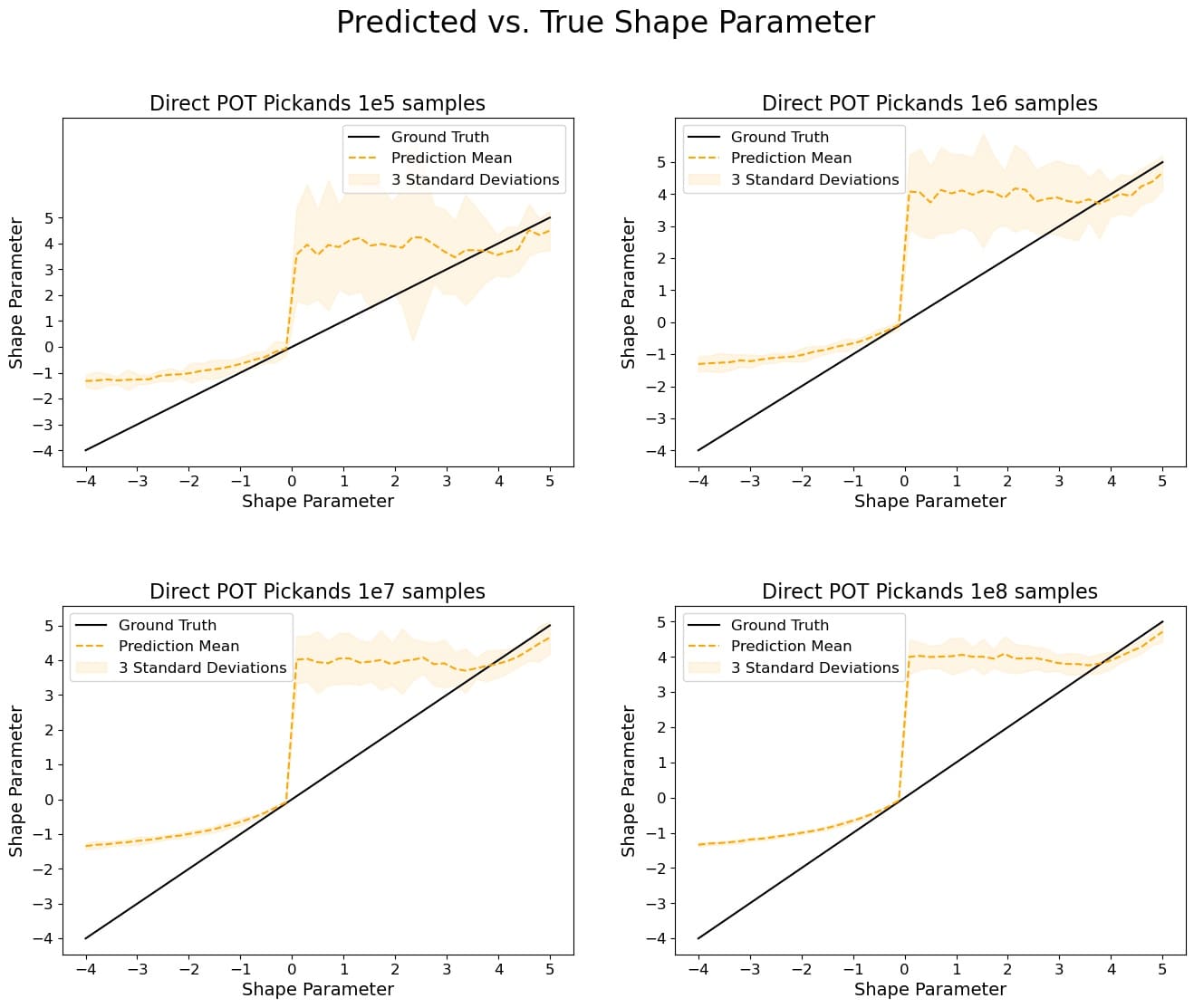}
\caption{The direct application of POT fails to retrieve the true shape of the marginal. We utilize the Pickands estimator as our estimator of choice.}
\label{lexp2_pot_pic}
\end{figure}  
\begin{enumerate}
  \item[(c*)] For the given  ${\xi_{z,j}}$ sample a point $x$ from a Generalized Pareto if  ${\xi_{z,j}}\leq 0$, otherwise sample from $F_z(x)=1-x^{-\frac{1}{\xi_{z,j}}}$. If ${\xi_{z,j}}>0$ translate $x$ by adding $\frac{1}{{\xi_{z,j}}^4}$. Add this sample to List J.
\end{enumerate}
This adaptation ensures that conditional distributions with lower shape parameters are situated at greater distances from the origin, thereby augmenting the probability that their tails will dominate over those that exhibit heavier tails.

The results (Figure \ref{lexp2_pot_pic} and \ref{lexp2_pot_dedh}) show that the estimators predict that the shape parameter of the tail is constantly $4$ as the tail of the marginal is determined by ${\xi_{z,j}}^{-4}$ instead of $1-x^{-\frac{1}{\xi_{z,j}}}$ which merely becomes noise around $\xi_{z,j}^{-4}$. This changes once $\xi_{\max}={\xi_{j}}$ becomes larger than $4$, in which case the tails of the conditional distribution are once again determined by $1-x^{-\frac{1}{\xi_{z,j}}}$.

\begin{figure}[ht]
\centering
\includegraphics[scale=0.27]{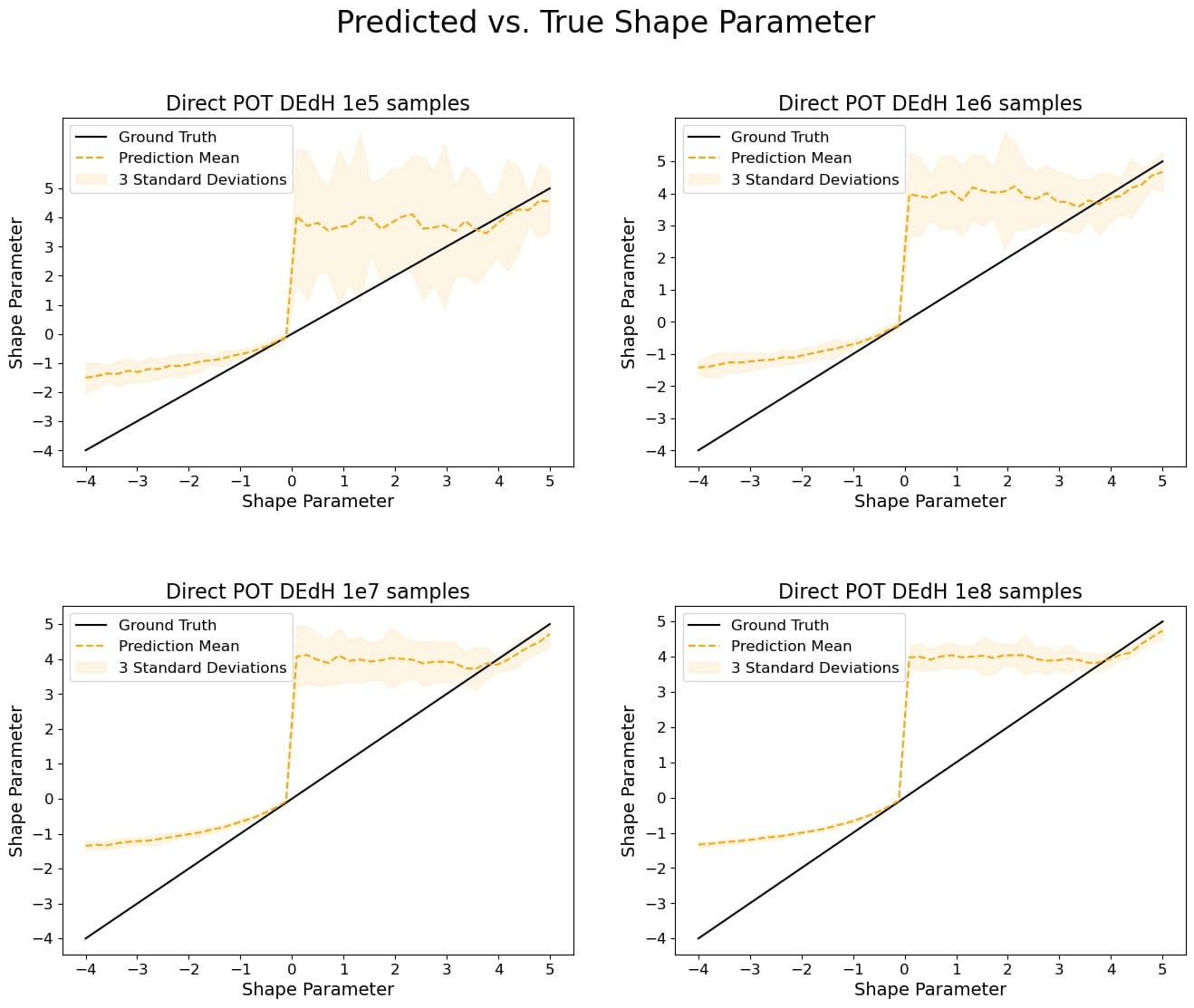}
\caption{The direct application of POT fails to retrieve the true shape of the marginal. We utilize the DEdH estimator as our estimator of choice.}
\label{lexp2_pot_dedh}
\end{figure}  

\subsubsection{Enhancing parameter estimation accuracy through the CTE approach}\label{s5.2.2}

We demonstrate that the CTE method can effectively recover the true shape of the tail of the marginal, even in cases where the conditional distributions exhibit highly varying locations, as was observed in the previous subsection. To ensure objectivity, we define the functions $f(z)$, ${\xi_z}$, and $F_z(x)$ in a consistent manner as before, thereby ensuring that all marginal distributions under consideration are equivalent to those studied in previous cases. As per the definition of the CTE, the sampling procedure is the following:
\begin{figure}[ht]
\centering
\includegraphics[scale=0.27]{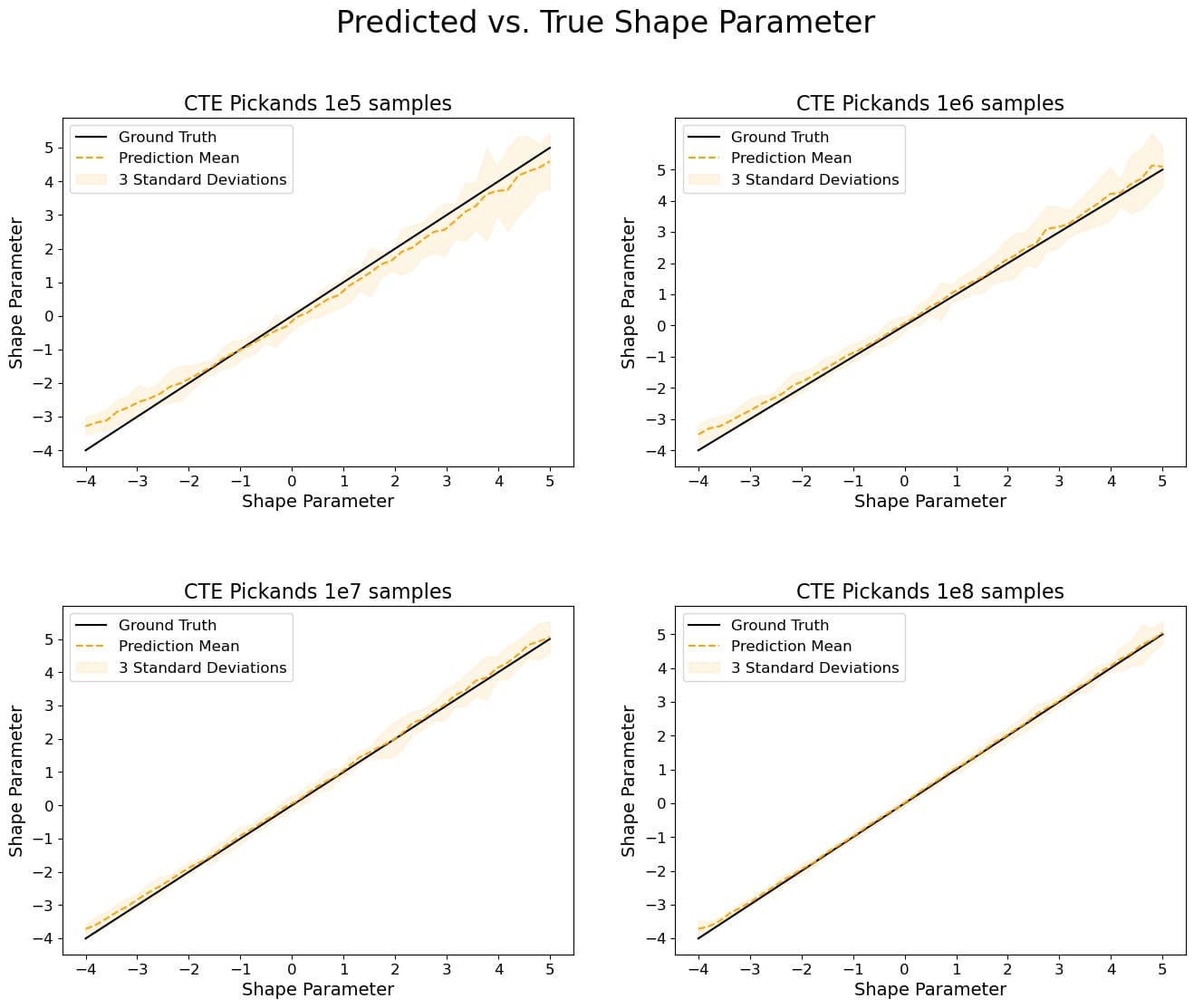}
\caption{Retrieving the true shape of the marginal is possible using CTE. We utilize the Pickands estimator as our estimator of choice.}
\label{lexp2_cte_pic}
\end{figure}  
\begin{enumerate}
  \item Sample K values $z$ from $f(z)$. For each $z$ repeat $p$ times the steps (a), (b), (c).
  \begin{enumerate}
  \item Calculate $\xi_{z,j}$ (given that $\xi_{\max}={\xi_{j}}$)
  \item For the given  $\xi_{z,j}$ sample N points from a Generalized Pareto if  $\xi_{z,j}\leq 0$, otherwise sample from $F_z(x)=1-x^{-\frac{1}{\xi_{z,j}}}$.
  \item Use the Pickands or DEdH estimator on these $N$ samples  to get an estimate ${\hat{\xi}^l}_{z,j}$ of the shape parameter ${\hat{\xi}}_{z,j}$ of $F_{z,j}(x)$. 
  \end{enumerate}
  \item As guided by the ideas laid in Appendix E, our final estimation of ${\xi_{z,j}}$ after $p$ repetitions of the process above is ${\hat{\xi}}_{z,j}=\frac{1}{p}\sum_{l=1}^{p} {\hat{\xi}^l}_{z,j}$. 
    \item We select the maximal ${\hat{\xi}}_{z,j}$ from the $K$ predicted values (corresponding to the $K$ sampled $z$). According to Theorem \ref{otheorem2} this estimated maximal ${\hat{\xi}}_{j}$ should be close to ${\xi_{j}}$.
\end{enumerate}
We set $p=10$ at all times. Furthermore, for the sake of fairness, we sample the same number of points from each marginal distribution as in the previous subsection, that is, we set $KN=M$. Since we set $K=50$, in order for $M$ to take values in $\{1e5,1e6,1e7,1e8\}$, $N$ needs to take values in $\{2e3, 2e4, 2e5, 2e6\}$. We execute the experiment 10 times, and to account for variability across the different runs, we compute the mean and standard deviation of the results. They are shown in Figure \ref{lexp2_cte_pic} and \ref{lexp2_cte_dedh}. Naturally, the more $K$ is increased the more likely we are to sample the $z$ corresponding to the conditional distribution with the maximal shape parameter. Hence, Theorem \ref{otheorem2} provides assurance that as the value of $K$ increases, our estimation progressively converges to the true shape parameter of the marginal distribution.

\begin{figure}[ht]
\centering
\includegraphics[scale=0.27]{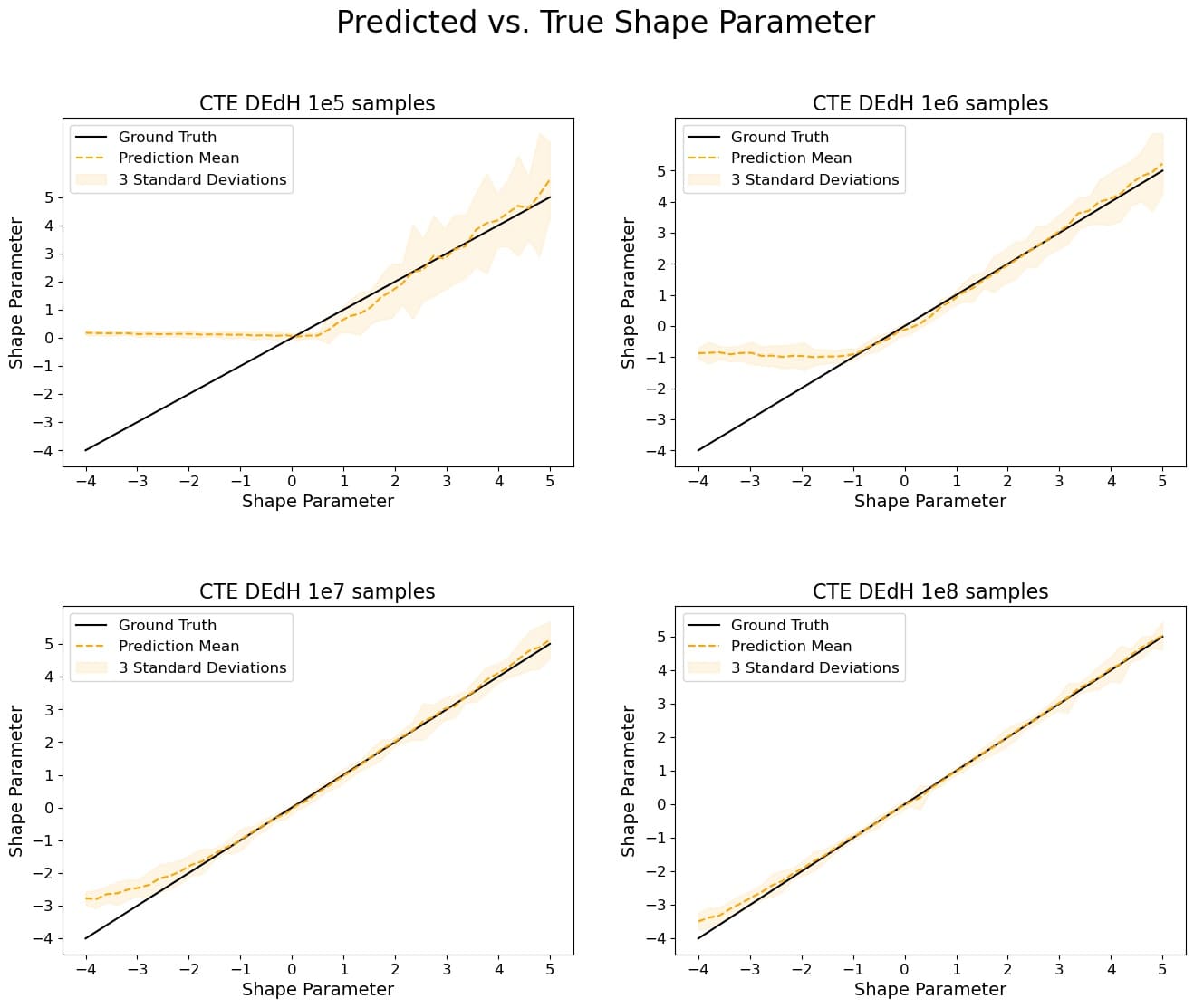}
\caption{Retrieving the true shape of the marginal is possible using CTE. We utilize the DEdH estimator as our estimator of choice.}
\label{lexp2_cte_dedh}
\end{figure}  

\subsection{Model performance inference improvements via cross tail estimation, relative to POT}\label{realdataexp}
In what follows, we show the results of two experiments, where we observe that cross tail estimation can improve the estimation of the shape of the tail in realistic settings. Furthermore, we observe that in these cases, the thickness of the tail is positively correlated with over-fitting, therefore inference regarding the performance of the model is improved when using CTE instead of POT.
\subsubsection{Gaussian Processes}
In this experiment, our data is composed of a one-dimensional time series taken from the UCR Time Series Anomaly Archive \footnote{\url{https://www.cs.ucr.edu/~eamonn/time_series_data_2018/UCR_TimeSeriesAnomalyDatasets2021.zip}} \cite{DBLP:journals/corr/abs-2009-13807}, which we reorganize in windows of size $2$, and use each window to fit a Gaussian process (GP) model in order to predict the next value in the series. Our complete dataset $D$ is composed of $n=1e4$ windows. 
\begin{figure}[ht]
\begin{tabular}{ll}
\includegraphics[scale=0.18]{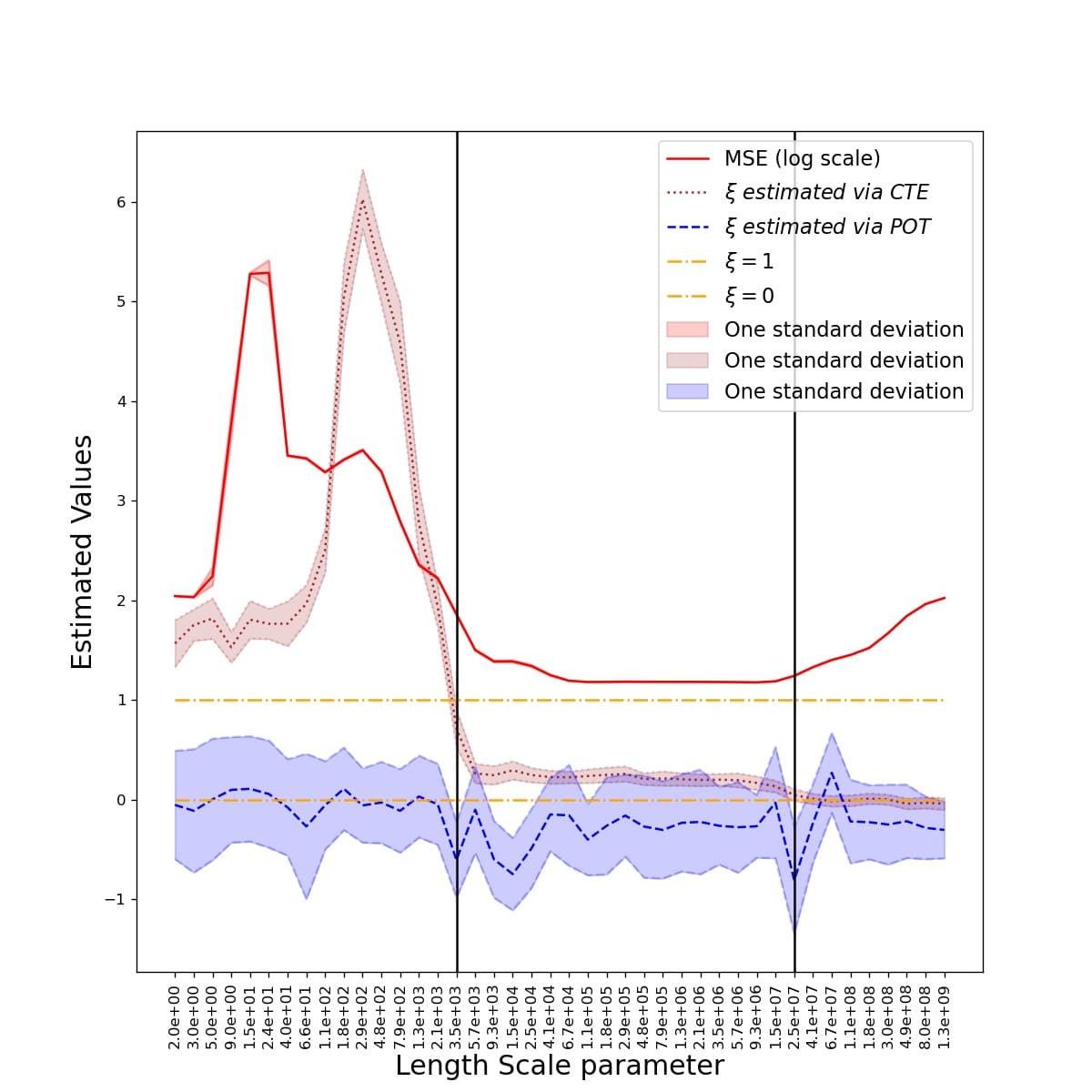}
&
\includegraphics[scale=0.18]{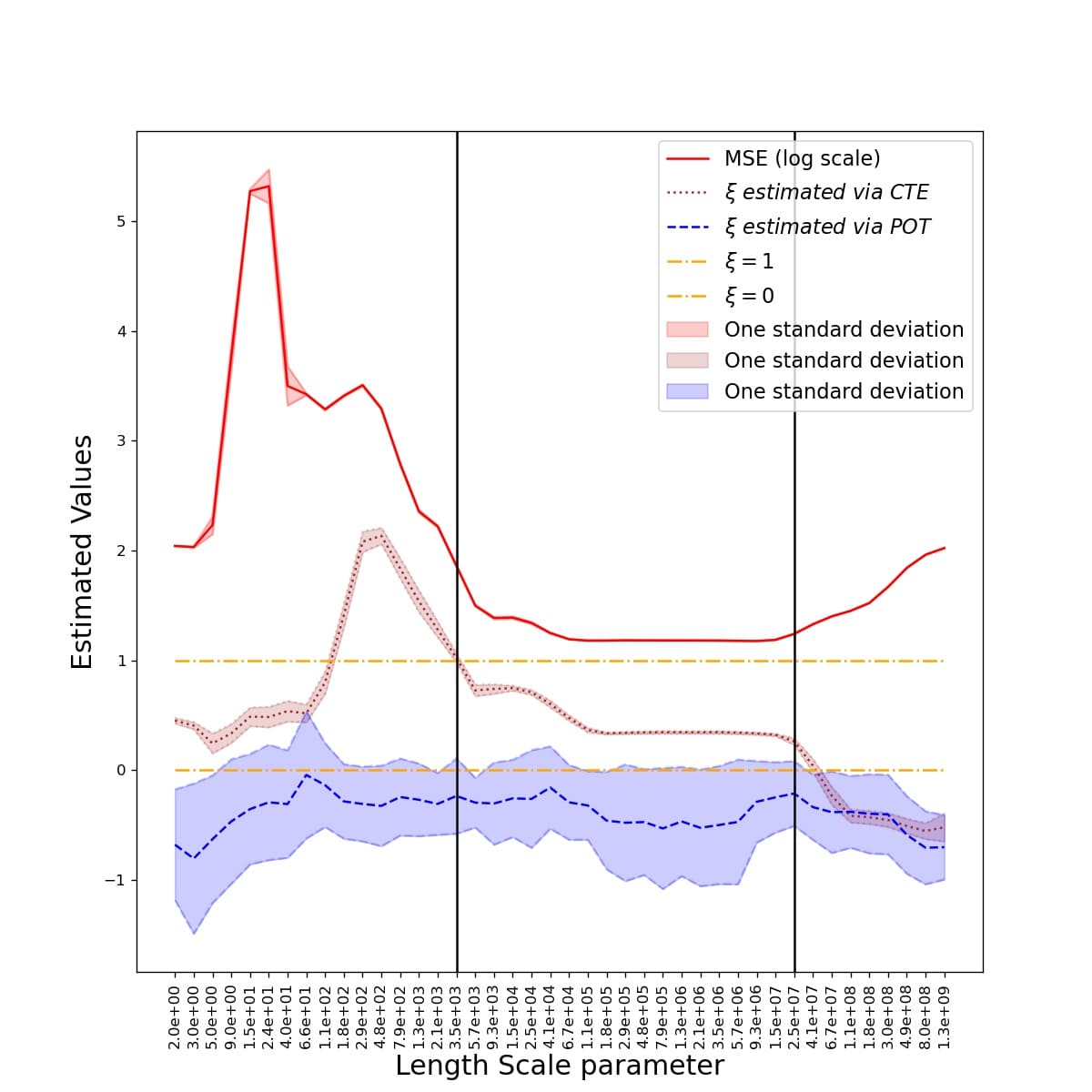}
\end{tabular}
\caption{ Experimental results in the case of testing Gaussian processes. Left: The Pickands estimator is used. Right: The DEdH estimator is used. In both cases we notice that CTE estimates larger shape parameters of the loss function distributions for models which overfit. This is not the case when POT is applied directly. The first black vertical line marks the first model with lower MSE than the model with the smallest length scale parameter (the point where the models stop overfitting). The second black vertical line marks the model in from which MSE starts growing again (the point when models begin underfitting). The MSE is presented in log scale and has been further linearly scaled to fit the plot.}
\label{fig5.3.1}
\end{figure}
On each run we randomly select $340$ points of $D$ for training (denote $D_{i}$), and then group the predictions of the model on the $1e4$ points of $D$ into an array which we denote by $\hat{Y}_i$. Then we split $\hat{Y}_i$ into five equally sized subsets $\hat{Y}_{i,j}$. We proceed to estimate the shape parameter of the tails of the prediction of the model, for given training set $D_{i}$. This is done by applying the Pickands/DEdH estimator to $\hat{Y}_{i,j}$, receiving $\hat{\xi}_{i,j}$ and then as per Appendix E, we get the estimate $\hat{\xi}_i=\frac{1}{5}\sum_{j=1}^5\hat{\xi}_{i,j} $ which corresponds to $\hat{Y}_i$. We repeat this process $1000$ times (for $1000$ choices of the training set $D_i$), and select as our estimation of the shape parameter of the tail of the distribution of our loss function, the maximum individual estimated parameter: $\hat{\xi_i}=\max\{\hat{\xi}_i|i\in [1000]\}$. On the other hand, we also calculate the MSE on the testing set $D\setminus D_{i}$ after the model has been trained on $D_{i}$. 
\newline
\newline
To check the difference of performance of the direct POT of tail shape estimation and cross tail estimation,  we also calculate the shape parameter of the overall distribution of prediction models, through the standard method, by applying Pickands/DEdH estimator on $Y = \bigcup\limits_{i=1}^{1000} \hat{Y}_i$. 
\newline
\newline
These experiments are repeated for length scale parameters given in the $x-$axis of Figure \ref{fig5.3.1} as well as in Appendix H. We repeat every experiment 200 times to account for variability across different runs, we compute the mean and standard deviation of the results. 
\newline
\newline
In Figure \ref{fig5.3.1} , we notice that when the CTE approach is used, the shape parameter is significantly larger for models which have a large MSE. In Appendix H, we illustrate that the MSE is large for small scale parameters due to overfitting (Figure \ref{gridgauss1}). Furthermore, the shape parameter only drops to (under) zero, when the model starts underfitting for length scale parameters bigger than $2.5e7$. In Appendix H (Figure \ref{gridgauss2}), it is shown that for such large values of the length scale parameter, the predictions become roughly constant. 
\newline
\newline
On the other hand, if POT is applied directly, then the estimated shape parameters are not significantly larger for models which overfit compared to those that do not. This is because conditioning on the training set, the predicted values on the test set vary significantly with regards to the their location. Hence, the tail that is estimated by the direct application of the POT approach is sometimes simply the one translated the furthest from the origin. Thus, if there is some negative correlation between the magnitude of the location and the size of the estimated shape parameter across different conditional, then we expect POT to underestimate the true shape parameter of the marginal. This is shown in Appendix H, for the model with the highest estimated shape parameter (290). The variability (sorted) of the estimated shape parameters of the 1000 conditionals for each length scale parameter is given in Figure \ref{variabilitygauss} of Appendix H, together with the corresponding $97th$ percentile (threshold) from each corresponding conditional distribution. We notice that indeed, quite often the difference between locations is large, and that the largest threshold often corresponds to conditional distributions with small, even negative shape parameters. 
\newline
\newline
The outcomes presented herein are robust with regards to the application or non-application of the method explicated in Appendix E in conjunction with the direct Peaks Over Threshold (POT) approach. Furthermore, the findings presented in Figure \ref{fig5.3.1} demonstrate near equivalence in relation to the magnitude of the selected threshold (in this study, we evaluated 99.7 and 99.997 percentiles).

\subsubsection{Polynomial Kernels}
This experiment is almost identical to the previous one, with the only differences being that the models we test now are polynomial kernels, and the set of possible candidate models in this case is defined by the degree of the polynomial kernel. We test polynomial kernels of degree from $1$ to $9$. 
As before, we repeat this experiment 200 times. The results are shown in Figure \ref{fig4.3.2}.

\begin{figure}[ht]
\begin{tabular}{ll}
\includegraphics[scale=0.18]{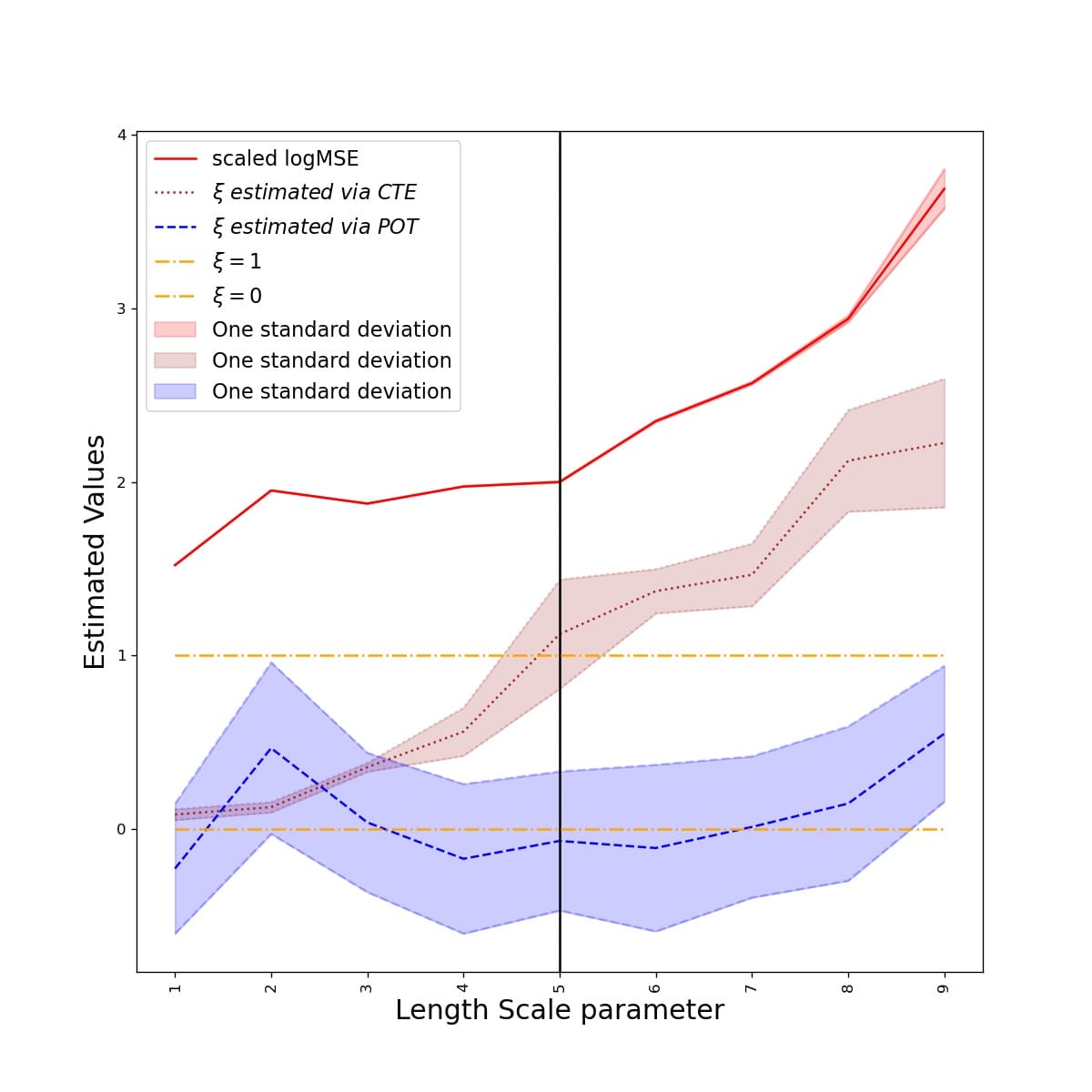}
&
\includegraphics[scale=0.18]{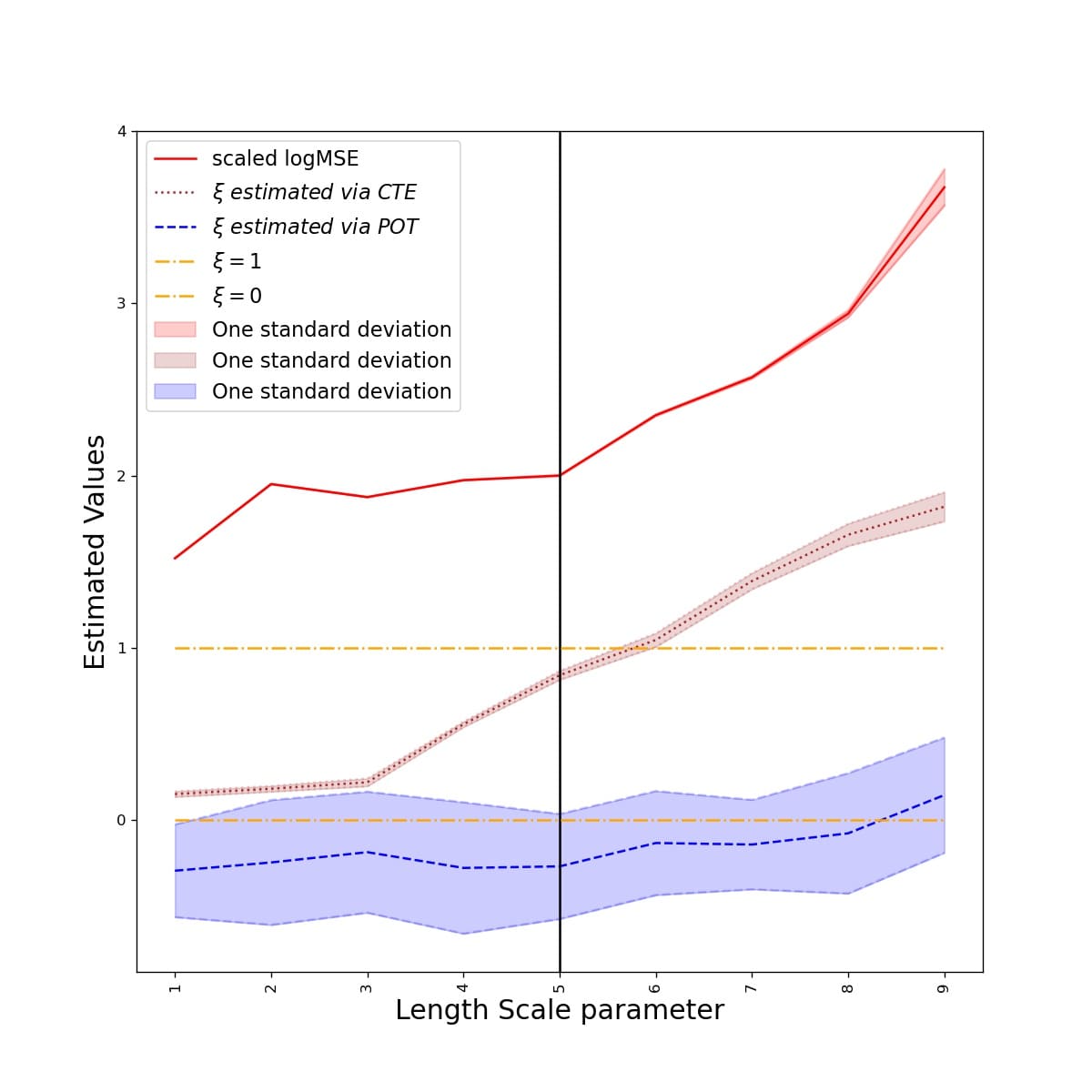}
\end{tabular}
\caption{ Experimental results in the case of testing polynomial kernels. Left: The Pickands estimator is used. Right: The DEdH estimator is used. In both cases we notice that CTE estimates larger shape parameters of the loss function distributions for models which overfit. This is not the case when POT is applied directly. The black vertical line marks the infection point of the MSE.  The MSE is presented in log scale and has been further linearly scaled to fit the plot.}
\label{fig4.3.2}
\end{figure}

\subsection{Computational Simplifications}
Another benefit to using cross tail estimation is the reduction of computational time, as for a given number $m$ of conditional distributions, with $n$ samples for each, instead of joining all testing samples together in an array of size $m*n$, we perform calculations in $m$ arrays of size $n$ in parallel. This becomes useful in practice during shape parameter estimation, as using Pickands estimators requires sorted samples, where best algorithms for sorting require $n \log (n)$ operations for a vector of size $n$. Hence our method which requires $n \log (n)$ operations is much faster in practice than the standard POT approach which requires $mn \log(mn)$, in a setting where $m$ and $n$ are of approximately of the same order. 

\section{Conclusion}

We study the problem of estimating the tail shape of loss function distributions, and explain the complications that arise in performing this task. We notice that such complications arise in general during the estimation of the tail shape of marginal distributions. In order to mitigate such shortcomings, we propose a new method of estimating the shape of the right tails of marginal distributions and  give  theoretical  guarantees  that  the  tail  of  the  marginal  distribution  coincides  with the thickest tail of the set of conditional distributions composing the marginal. We give experimental evidence that our method works in practice, and is necessary in applications with small sample sizes. Using the aforementioned method, we show experimentally that the tails of distribution functions in many cases can have non-exponential decay, as well as that it is possible that not
even their first moment exists. Furthermore, we discover an interesting phenomena regarding the relationship between the overfitting of a model, and the thickness of the tails of its prediction function distribution, in the experiments we conducted.

Potential additional applications of the method we develop include improving classic tail modelling, as well as  the threshold selection for model comparison in anomaly detection \cite{10.1145/3292500.3330672}. Furthermore, cross tail estimation could be used to estimate the existence of the moments of loss function distributions, and thus can be considered as a potential elimination criteria for models whose first moment does not exist.

\acks{This work has been supported by the French government, through the 3IA Côte d’Azur
Investments in the Future project managed by the National Research Agency (ANR) with
the reference number ANR-19-P3IA-0002. The authors are grateful to the OPAL infrastructure from Université Côte d'Azur for providing resources and support.}
\newpage
\appendix
\section*{Appendix A: Proofs}\label{AppendixA}
\label{app:theorem}

\subsection*{Proof of Proposition \ref{prop1}}

We notice that if $L(x)$ converges the statement is trivial. However, if it does not then:
\begin{equation}\label{c1_2}
\begin{split}
\lim_{x \rightarrow \infty}x^{- \epsilon} L(x)=\lim_{x \rightarrow \infty}\frac{L(x)}{x^{\epsilon}}= \lim_{x \rightarrow \infty}\frac{ e^{c(x)}e^{\int_{x_o}^x \frac{u(y)}{y} dy}}{x^{\epsilon}}=\lim_{x \rightarrow \infty}\frac{ e^{c(x)}e^{\int_{x_o}^x \frac{u(y)}{y} dy}}{ e^{\epsilon \log(x)}}=\\ =\lim_{x \rightarrow \infty}{ e^{c(x)}e^{\int_{x_o}^x \frac{u(y)}{y} dy-\epsilon\log(x)}}=\lim_{x \rightarrow \infty}{ e^{c(x)}e^{\log(x)(\frac{\int_{x_o}^x \frac{u(y)}{y} dy}{\log(x)}-\epsilon)}}.
\end{split}
\end{equation}
Using L'Hopital's rule we get:
\begin{equation}\label{c1_4}
 \lim_{x \rightarrow \infty}\frac{ \int_{x_o}^x \frac{u(y)}{y}}{\log(x)}= \lim_{x \rightarrow \infty} \frac{\frac{u(x)}{x}}{\frac{1}{x}}=\lim_{x \rightarrow \infty} u(x)=0,
\end{equation}
therefore 
\begin{equation}\label{c1_5}
\lim_{x \rightarrow \infty} e^{\log(x)(\frac{\int_{x_o}^x \frac{u(y)}{y} dy}{\log(x)}-\epsilon)}=0.
\end{equation}

\subsection*{Proof of Lemma \ref{lemma1}}

 From Theorem \ref{mda_rv}, we get that $$F_1 \in MDA(\xi_1) \iff \bar{F}_1(x)=x^{-\frac{1}{\xi_1}}L_1(x),$$ and $$F_2 \in MDA(\xi_2) \iff \bar{F}_2(x)=x^{-\frac{1}{\xi_2}}L_2(x),$$ where $L_1(x)$ and $L_2(x)$ are slowly varying functions.
\newline
Therefore 
\begin{equation}\label{l1_1}
\lim_{x \rightarrow \infty} \frac{\bar{F}_2(x)}{\bar{F}_1(x)}=\lim_{x \rightarrow \infty}x^{\frac{1}{\xi_1}-\frac{1}{\xi_2}} \frac{L_2(x)}{L_1(x)}=\lim_{x \rightarrow \infty}x^{\alpha} \frac{L_2(x)}{L_1(x)},
\end{equation}
since $$ \xi_1>\xi_2 \implies -\frac{1}{\xi_1}>-\frac{1}{\xi_2} \implies \alpha:=\frac{1}{\xi_1}-\frac{1}{\xi_2}<0.$$
On the other hand $L(x):=\frac{L_2(x)}{L_1(x)}$ is defined in a neighborhood of infinity as $L_1(x)\neq0$, and is also a slowly varying function as $$\lim_{x \rightarrow \infty} \frac{L(ax)}{L(x)}=\lim_{x \rightarrow \infty}\frac{\frac{L_2(ax)}{L_1(ax)}}{\frac{L_2(x)}{L_1(x)}}=\lim_{x \rightarrow \infty}\frac{\frac{L_2(ax)}{L_2(x)}}{\frac{L_1(ax)}{L_1(x)}}=1,$$ and since the quotient of positive measurable functions, is positive and measurable. Therefore, using Corollary 1, Equation (\ref{l1_1}) becomes
\begin{equation}\label{l1_2}
\lim_{x \rightarrow \infty} \frac{\bar{F}_2(x)}{\bar{F}_1(x)}=\lim_{x \rightarrow \infty}x^{\alpha} \frac{L_2(x)}{L_1(x)}=\lim_{x \rightarrow \infty}x^{\alpha} L(x)=0.
\end{equation}

\subsection*{Proof of Lemma \ref{lemma2}}
\begin{enumerate}  
\item  If $\xi_1>0$ and $\xi_2=0$ then 
\begin{equation}\label{l2_1}
\begin{split}
\lim_{x \rightarrow \infty} \frac{\bar{F}_2(x)}{\bar{F}_1(x)}=\lim_{x \rightarrow \infty} \frac{c(x)e^{-\int_w^x \frac{g(t)}{a(t)}dt}}{x^{-\frac{1}{\xi}}L(x)}
=\lim_{x \rightarrow \infty} \frac{c(x)e^{-\log(x)(\frac{\int_w^x \frac{g(t)}{a(t)}dt}{\log(x)}-\frac{1}{\xi})}}{L(x)},
\end{split}
\end{equation}
using L'Hopital's rule:
\begin{equation}\label{l2_2}
\begin{split}
\lim_{x \rightarrow \infty} \frac{\int_w^x \frac{g(t)}{a(t)}dt}{\log(x)}=\lim_{x \rightarrow \infty} \frac{ \frac{g(x)}{a(x)}}{\frac{1}{x}}=\lim_{x \rightarrow \infty}  \frac{x}{a(x)},
\end{split}
\end{equation}
we distinguish two cases:

if $\lim_{x \rightarrow \infty}  a(x)\neq \infty$  then $\lim_{x \rightarrow \infty}  \frac{x}{a(x)} = \infty$,

while if $\lim_{x \rightarrow \infty}  a(x)= \infty$  then using L'Hopital's rule again, we obtain
\begin{equation}\label{l2_additional1}
\lim_{x \rightarrow \infty}  \frac{x}{a(x)} = \lim_{x \rightarrow \infty}  \frac{1}{a'(x)} = \infty.
\end{equation}
Thus, in both cases
\begin{equation}
    =\lim_{x \rightarrow \infty} \frac{c(x)e^{-\log(x)(\frac{\int_w^x \frac{g(t)}{a(t)}dt}{\log(x)}-\frac{1}{\xi})}}{L(x)}
    =\lim_{x \rightarrow \infty} \frac{c(x)x^{-(\frac{\int_w^x \frac{g(t)}{a(t)}dt}{\log(x)}-\frac{1}{\xi})}}{L(x)}=0.
\end{equation}
\end{enumerate}
Statements 2. 3. and 4. are trivial.
\subsection*{Proof of Lemma \ref{lemma3}}
Since $L(x)$ is positive and measurable (linear combination of finite measurable functions), the only part left to prove is that $$\lim_{x\rightarrow\infty}\frac{L(ax)}{L(x)}=1, \forall a>0.$$
First we prove that $$\lim_{x\rightarrow\infty}\frac{L_1(ax)+L_2(ax)}{L_1(x)+L_2(x)}=1, \forall a>0.$$
Indeed, for each $\epsilon>0$, there exist $x_1, x_2$ such that for $x>x_1$ we have $|\frac{L_1(ax)}{L_1(x)}-1|<\epsilon$ and  for $x>x_2$ we have $|\frac{L_2(ax)}{L_2(x)}-1|<\epsilon$. Hence for $x_0=\max\{x_1,x_2\}$, $x>x_0$ implies
$|{L_1(ax)}-{L_1(x)}|<{L_1(x)}\epsilon$ and $|{L_2(ax)}-{L_2(x)}|<{L_2(x)}\epsilon$ therefore $|L_1(ax)+L_2(ax)-(L_1(x)+L_2(x))|=|L_1(ax)-L_1(x)+L_2(ax)-L_2(x)|\leq|L_1(ax)-L_1(x)|+|L_2(ax)-L_2(x)|<(L_1(x)+L_2(x))\epsilon$ hence $|\frac{L_1(ax)+L_2(ax)}{L_1(x)+L_2(x)}-1|<\epsilon$.
\newline
Now, we notice that for every $a_i>0$, we get $\lim_{x\rightarrow\infty}\frac{a_i L_i(ax)}{a_i L_i(x)}=1$, and ${a_i L_i(x)}$ is positive as well as measurable. This implies that $a_1L_1$ and $a_2L_2$ are slowly varying functions, and therefore based of the previous result we get $$\lim_{x\rightarrow\infty}\frac{a_1 L_1(ax)+a_2 L_2(ax)}{a_1 L_1(x)+a_2 L_2(x)}=1, \forall a>0.$$ Using induction finishes the proof of the Lemma.

\subsection*{Proof of Theorem \ref{otheorem1}}
Since if $\xi_{{\boldsymbol{z}}_i}<0$ then $\exists x_0>0$, such that $\forall x>x_0$ we have $F_{{\boldsymbol{Z}}_i}(x)=0$, this means that the tail of the distribution is not affected by $F_{{\boldsymbol{Z}}_i}(x)$. In fact if $\xi_{\max}<0$ then $F$ will have finite support hence $\xi_F\leq0$. Furthermore if $\xi_{\max}=0$ from Lemma \ref{lemma2} we get that $\xi_F\leq0$. Therefore for the case $\xi_{\max}>0$ we only consider the setting  where $\xi_i \geq 0$.
\begin{equation}\label{t1_2}
\bar{F}_u(w)=\frac{1-F(u+w)}{1-F(u)}=\frac{\sum\limits_i^n p_i (1-F_{{\boldsymbol{z}}_i}(u+w))}{\sum\limits_i^n p_i (1-F_{{\boldsymbol{z}}_i}(u))}=\sum\limits_i^n  \frac{ \bar{F}_{{\boldsymbol{z}}_i}(u+w)}{\sum\limits_j^n \frac{p_j}{p_i} \bar{F}_{{\boldsymbol{z}}_j}(u)}
\end{equation}
\begin{equation}\label{t1_3}
=\sum\limits_i^n  \frac{ \bar{F}_{{\boldsymbol{z}}_i}(u+w)}{\bar{F}_{{\boldsymbol{z}}_i}(u)}\frac{ \bar{F}_{{\boldsymbol{z}}_i}(u)}{\sum\limits_j^n \frac{p_j}{p_i} \bar{F}_{{\boldsymbol{z}}_j}(u)}=\sum\limits_i^n \frac{ \bar{F}_{{\boldsymbol{z}}_i}(u+w)}{\bar{F}_{{\boldsymbol{z}}_i}(u)} \frac{ 1}{\sum\limits_j^n \frac{p_j}{p_i} \frac{ \bar{F}_{{\boldsymbol{z}}_j}(u)}{\bar{F}_{{\boldsymbol{z}}_i}(u)}}.
\end{equation}
We denote with $i(\max)$ the index corresponding to $\xi_{\max}$ and finish our proof using Pickand's theorem:
\begin{equation}\label{t1_4}
\lim_{u \rightarrow \infty}\sup_{w\in[0,\infty]}|\bar{F}_u(y)-\bar{G}_{\xi_{\max},g(u)}|=\lim_{u \rightarrow \infty}\sup_{w\in[0,\infty]}|\sum\limits_i^n \frac{ \bar{F}_{{\boldsymbol{z}}_i}(u+w)}{\bar{F}_{{\boldsymbol{z}}_i}(u)} \frac{ 1}{\sum\limits_j^n \frac{p_j}{p_i} \frac{ \bar{F}_{{\boldsymbol{z}}_j}(u)}{\bar{F}_{{\boldsymbol{z}}_i}(u)}}-\bar{G}_{\xi_{\max},g(u)}|
\end{equation}
\begin{equation}\label{t1_5}
=\lim_{u \rightarrow \infty}\sup_{w\in[0,\infty]}|\sum\limits_i^n \frac{ \bar{F}_{{\boldsymbol{z}}_i}(u+w)}{\bar{F}_{{\boldsymbol{z}}_i}(u)} \frac{ 1}{1+\sum\limits^n_{j\neq i} \frac{p_j}{p_i} \frac{ \bar{F}_{{\boldsymbol{z}}_j}(u)}{\bar{F}_{{\boldsymbol{z}}_i}(u)}}-\bar{G}_{\xi_{\max},g(u)}|
\end{equation}
\begin{equation}\label{t1_6}
\begin{split}
& \leq\lim_{u \rightarrow \infty}\sup_{w\in[0,\infty]}| \frac{ \bar{F}_{{\boldsymbol{z}}_{i(\max)}}(u+w)}{\bar{F}_{{\boldsymbol{z}}_{i(\max)}}(u)} \frac{ 1}{1+\sum\limits^n_{j\neq {i(\max)}} \frac{p_j}{p_{i(\max)}} \frac{ \bar{F}_{{\boldsymbol{z}}_j}(u)}{\bar{F}_{{\boldsymbol{z}}_{i(\max)}}(u)}}-\bar{G}_{\xi_{\max},g(u)}|\\&
+\lim_{u \rightarrow \infty}\sup_{w\in[0,\infty]}|\sum\limits_{i\neq i(\max)}^n \frac{ \bar{F}_{{\boldsymbol{z}}_i}(u+w)}{\bar{F}_{{\boldsymbol{z}}_i}(u)} \frac{ 1}{1+\sum\limits^n_{j\neq i} \frac{p_j}{p_i} \frac{ \bar{F}_{{\boldsymbol{z}}_j}(u)}{\bar{F}_{{\boldsymbol{z}}_i}(u)}}|
\end{split}
\end{equation}
\begin{equation}\label{t1_7}
\begin{split}
&\leq\lim_{u \rightarrow \infty}\sup_{w\in[0,\infty]}| \frac{ \bar{F}_{{\boldsymbol{z}}_{i(\max)}}(u+w)}{\bar{F}_{{\boldsymbol{z}}_{i(\max)}}(u)} -\bar{G}_{\xi_{\max},g(u)}| \\&
+\lim_{u \rightarrow \infty}\sup_{w\in[0,\infty]}|\frac{ 1}{1+\sum\limits^n_{j\neq {i(\max)}} \frac{p_j}{p_{i(\max)}} \frac{ \bar{F}_{{\boldsymbol{z}}_j}(u)}{\bar{F}_{{\boldsymbol{z}}_{i(\max)}}(u)}}-1| |\frac{ \bar{F}_{{\boldsymbol{z}}_{i(\max)}}(u+w)}{\bar{F}_{{\boldsymbol{z}}_{i(\max)}}(u)}|\\&
+\lim_{u \rightarrow \infty}\sup_{w\in[0,\infty]}\sum\limits_{i\neq i(\max)}^n |\frac{ \bar{F}_{{\boldsymbol{z}}_i}(u+w)}{\bar{F}_{{\boldsymbol{z}}_i}(u)}| |\frac{ 1}{1+\sum\limits^n_{j\neq i} \frac{p_j}{p_i} \frac{ \bar{F}_{{\boldsymbol{z}}_j}(u)}{\bar{F}_{{\boldsymbol{z}}_i}(u)}}|
\end{split}
\end{equation}
\begin{equation}\label{t1_8}
\begin{split}
&\leq\lim_{u \rightarrow \infty}\sup_{w\in[0,\infty]}| \frac{ \bar{F}_{{\boldsymbol{z}}_{i(\max)}}(u+w)}{\bar{F}_{{\boldsymbol{z}}_{i(\max)}}(u)} -\bar{G}_{\xi_{\max},g(u)}|\\&
+\lim_{u \rightarrow \infty}|\frac{ 1}{1+\sum\limits^n_{j\neq {i(\max)}} \frac{p_j}{p_{i(\max)}} \frac{ \bar{F}_{{\boldsymbol{z}}_j}(u)}{\bar{F}_{{\boldsymbol{z}}_{i(\max)}}(u)}}-1|\\&
+\lim_{u \rightarrow \infty}\sum\limits_{i\neq i(\max)}^n |\frac{ 1}{1+\sum\limits^n_{j\neq i} \frac{p_j}{p_i} \frac{ \bar{F}_{{\boldsymbol{z}}_j}(u)}{\bar{F}_{{\boldsymbol{z}}_i}(u)}}|.
\end{split}
\end{equation}
The first expression,
\begin{equation}\label{t1_9}
\begin{split}
\lim_{u \rightarrow \infty}\sup_{w\in[0,\infty]}| \frac{ \bar{F}_{{\boldsymbol{z}}_{i(\max)}}(u+w)}{\bar{F}_{{\boldsymbol{z}}_{i(\max)}}(u)} -\bar{G}_{\xi_{\max},g(u)}|
\end{split}
\end{equation}
goes to zero due to Pickands Theorem while the expression,
\begin{equation}\label{t1_10}
\begin{split}
\lim_{u \rightarrow \infty}|\frac{ 1}{1+\sum\limits^n_{j\neq {i(\max)}} \frac{p_j}{p_{i(\max)}} \frac{ \bar{F}_{{\boldsymbol{z}}_j}(u)}{\bar{F}_{{\boldsymbol{z}}_{i(\max)}}(u)}}-1|
\end{split}
\end{equation}
converges to $0$ as well because from Lemma \ref{lemma1} we have  $\lim_{u\rightarrow\infty}\frac{ \bar{F}_{{\boldsymbol{z}}_j}(u)}{\bar{F}_{{\boldsymbol{z}}_{i(\max)}}(u)}=0$ for every $j$. Finally the last expression,
\begin{equation}\label{t1_11}
\begin{split}
\lim_{u \rightarrow \infty}\sum\limits_{i\neq i(\max)}^n |\frac{ 1}{1+\sum\limits^n_{j\neq i} \frac{p_j}{p_i} \frac{ \bar{F}_{{\boldsymbol{z}}_j}(u)}{\bar{F}_{{\boldsymbol{z}}_i}(u)}}|
\end{split}
\end{equation} equals 0 since in each sum $\sum\limits^n_{j\neq i} \frac{p_j}{p_i} \frac{ \bar{F}_{{\boldsymbol{z}}_j}(u)}{\bar{F}_{{\boldsymbol{z}}_i}(u)}$, there exists an index $j$ such that $\bar{F}_{{\boldsymbol{z}}_j}(u)=\bar{F}_{{\boldsymbol{z}}_{i(\max)}}(u)$, implying that $\sum\limits^n_{j\neq i} \frac{p_j}{p_i} \frac{ \bar{F}_{{\boldsymbol{z}}_j}(u)}{\bar{F}_{{\boldsymbol{z}}_i}(u)} \rightarrow \infty$.
\newline
In the derivation above we assumed that the $F_{{\boldsymbol{z}}_{i(\max)}}$ which corresponds to $\xi_{\max}$ is unique. In the case that this is not true we notice that for $F_1$ and $F_2$ which share the same corresponding parameter $\xi>0$ we have 
\begin{equation}\label{t1_12}
\begin{split}
p_1 F_1(x)+p_2 F_2(x)=x^{-\frac{1}{\xi}}(p_1 L_1(x)+p_2 L_2(x))=x^{-\frac{1}{\xi}}L(x),
\end{split}
\end{equation}
and since $L(x)>0$, from Lemma \ref{lemma3} we have that $L(x)$ is slowly varying, therefore $p_1 F_1(x)+p_2 F_2(x) \in MDA(\xi)$.

\subsection*{Proof of Proposition \ref{propUniSubPol}}
First, we fix $\delta>0$. We can find a $x(\gamma,\delta)>0$, such that for  $x>x(\gamma,\delta)$, we can bound $x^{-\delta}L_z(x)<\gamma$ for all $z \in A$ simultaneously. This implies that $f_Z(z) x^{-\delta} L_z(x)$ is bounded by $f_z(z)\gamma$.
Since $\int_z f_z(z)\gamma dz=\gamma<\infty$, by dominated convergence we get 
\begin{equation}\label{propUniSubPol_1}
\begin{split} 
\lim_{x\rightarrow\infty} x^{-\delta}\int_A f_Z(z) L_z(x) dz= \lim_{x\rightarrow\infty} \int_A f_Z(z) x^{-\delta} L_z(x) dz=
 \int_A \lim_{x\rightarrow\infty} f_Z(z) x^{-\delta} L_z(x) dz=0.
\end{split}
\end{equation}

\subsection*{Proof of Theorem \ref{otheorem2a}}

We will first assume that $\xi_F>0$. \newline
Since $\bar{F}(x)=x^{-\frac{1}{\xi_F}}L_F(x)$, for every $\epsilon>0$:
\begin{equation}\label{l3_1}
\begin{split}
\frac{\bar{F}(x)}{x^{-\frac{1}{\xi_{lo}-\epsilon}}}=\frac{x^{-\frac{1}{\xi_F}}L_F(x)}{x^{-\frac{1}{\xi_{lo}-\epsilon}}}=\frac{\int_A f_{\boldsymbol{Z}}({\boldsymbol{z}}) x^{-\frac{1}{\xi_{\boldsymbol{z}}}}L_{\boldsymbol{z}}(x)d{\boldsymbol{z}}}{x^{-\frac{1}{\xi_{lo}-\epsilon}}}=\\
\int_A f_{\boldsymbol{Z}}({\boldsymbol{z}})x^{-\frac{1}{\xi_{\boldsymbol{z}}}+\frac{1}{\xi_{lo}-\epsilon}}{L_{\boldsymbol{z}}(x)}d{\boldsymbol{z}}=\int_A f_{\boldsymbol{Z}}({\boldsymbol{z}})x^{\alpha({\boldsymbol{z}})}{L_{\boldsymbol{z}}(x)}d{\boldsymbol{z}}.
\end{split}
\end{equation}
We notice that $\xi_{\boldsymbol{z}}\geq \xi_{lo}>\xi_{lo}-\epsilon \implies -\frac{1}{\xi_{\boldsymbol{z}}}\geq -\frac{1}{\xi_{lo}}>-\frac{1}{\xi_{lo}-\epsilon}$ hence $\alpha({\boldsymbol{z}})=-\frac{1}{\xi_{{\boldsymbol{z}}}}+\frac{1}{\xi_{lo}-\epsilon}>0$.
Considering that
\begin{equation}\label{l3_2}
\begin{split}
\lim_{x\rightarrow\infty}\frac{\bar{F}(x)}{x^{-\frac{1}{\xi_{lo}-\epsilon}}}=\lim_{x\rightarrow\infty} \int_A f_{\boldsymbol{Z}}({\boldsymbol{z}})x^{\alpha({\boldsymbol{z}})}{L_{\boldsymbol{z}}(x)}dz,
\end{split}
\end{equation}
by using Fatou's lemma:
\begin{equation}\label{l3_3}
\begin{split}
\lim_{x\rightarrow\infty} \int_A f_{\boldsymbol{Z}}({\boldsymbol{z}})x^{\alpha({\boldsymbol{z}})}{L_{\boldsymbol{z}}(x)}d{\boldsymbol{z}}\geq \int_A\lim_{x\rightarrow\infty} f_{\boldsymbol{Z}}({\boldsymbol{z}})x^{\alpha({\boldsymbol{z}})}{L_{\boldsymbol{z}}(x)}d{\boldsymbol{z}}=\infty,
\end{split}
\end{equation}
we get
\begin{equation}\label{l3_4}
\lim_{x\rightarrow\infty}\frac{x^{-\frac{1}{\xi_{lo}-\epsilon}}}{\bar{F}(x)}=0,
\end{equation}
implying 
\begin{equation}\label{l3_5}
\lim_{x\rightarrow\infty}\frac{x^{-\frac{1}{\xi_{lo}-\epsilon}}}{x^{-\frac{1}{\xi_F}}L_F(x)}=\lim_{x\rightarrow\infty}\frac{x^{-\frac{1}{\xi_{lo}-\epsilon}+\frac{1}{\xi_F}}}{L_F(x)}=0,
\end{equation}
therefore 
\begin{equation}\label{l3_6}
\xi_{lo}-\epsilon<\xi_F, \forall \epsilon>0 \text{ thus } \xi_{lo}\leq\xi_F.
\end{equation}
Now we turn to prove that $\xi_F\leq\xi_{up}$. As before,
\begin{equation}\label{l3_7}
\begin{split}
\frac{\bar{F}(x)}{x^{-\frac{1}{\xi_{up}+\epsilon}}}=\frac{x^{-\frac{1}{\xi_F}}L_F(x)}{x^{-\frac{1}{\xi_{up}+\epsilon}}}=\frac{\int_A f_{\boldsymbol{Z}}({\boldsymbol{z}}) x^{-\frac{1}{\xi_{\boldsymbol{z}}}}L_{\boldsymbol{z}}(x)d{\boldsymbol{z}}}{x^{-\frac{1}{\xi_{up}+\epsilon}}}=\\
\int_A f_{\boldsymbol{Z}}({\boldsymbol{z}})x^{-\frac{1}{\xi_{\boldsymbol{z}}}+\frac{1}{\xi_{up}+\epsilon}}{L_{\boldsymbol{z}}(x)}d{\boldsymbol{z}}=\int_A f_{\boldsymbol{Z}}({\boldsymbol{z}})x^{\beta({\boldsymbol{z}})}{L_{\boldsymbol{z}}(x)}d{\boldsymbol{z}}.
\end{split}
\end{equation}
We notice that $\xi_{\boldsymbol{z}}\leq \xi_{up}<\xi_{up}+\epsilon \implies -\frac{1}{\xi_{\boldsymbol{z}}}\leq -\frac{1}{\xi_{up}}<-\frac{1}{\xi_{up}+\epsilon}$ hence $\beta(\boldsymbol{z})=-\frac{1}{\xi_{\boldsymbol{z}}}+\frac{1}{\xi_{up}+\epsilon}<-\delta<0$. 
This last inequality, combined with the fact that the family $\{L_{\boldsymbol{z}}(x)|x\in \mathbb{R}\}$ is $\gamma$-uniformly sub-polynomial, implies that 
\begin{equation}\label{l3_7*}
\begin{split} 
f_{\boldsymbol{Z}}({\boldsymbol{z}})x^{\beta({\boldsymbol{z}})}{L_{\boldsymbol{z}}(x)}\leq f_{\boldsymbol{Z}}({\boldsymbol{z}})x^{-\delta}{L_{\boldsymbol{z}}(x)}\leq f_{\boldsymbol{Z}}({\boldsymbol{z}})\gamma,
\end{split}
\end{equation}
for some $\gamma>0$.
Since $\int_{\boldsymbol{z}} f_{\boldsymbol{Z}}({\boldsymbol{z}})\gamma d{\boldsymbol{z}}=\gamma<\infty$, by dominated convergence
\begin{equation}\label{l3_8}
\begin{split} 
\lim_{x\rightarrow\infty}  \frac{\bar{F}(x)}{x^{-\frac{1}{\xi_{up}+\epsilon}}}=\lim_{x\rightarrow\infty} \int_A f_{\boldsymbol{Z}}({\boldsymbol{z}})x^{\beta({\boldsymbol{z}})}{L_{\boldsymbol{z}}(x)}d{\boldsymbol{z}}
\end{split}
\end{equation}

\begin{equation}\label{l3_9}
\begin{split}
\lim_{x\rightarrow\infty} \int_A f_{\boldsymbol{Z}}({\boldsymbol{z}})x^{\beta({\boldsymbol{z}})}{L_{\boldsymbol{z}}(x)}d{\boldsymbol{z}}= \int_A\lim_{x\rightarrow\infty} f_{\boldsymbol{z}}({\boldsymbol{z}})x^{\beta({\boldsymbol{z}})}{L_{\boldsymbol{z}}(x)}d{\boldsymbol{z}}=0,
\end{split}
\end{equation} meaning
\begin{equation}\label{l3_10}
\lim_{x\rightarrow\infty}\frac{\bar{F}(x)}{x^{-\frac{1}{\xi_{up}+\epsilon}}}=0,
\end{equation}
which implies
\begin{equation}\label{l3_11}
\lim_{x\rightarrow\infty}\frac{x^{-\frac{1}{\xi_F}}L_F(x)}{x^{-\frac{1}{\xi_{up}+\epsilon}}}=\lim_{x\rightarrow\infty}x^{\frac{1}{\xi_{up}+\epsilon}-\frac{1}{\xi_F}}{L_F(x)}=0,
\end{equation}
therefore we get
\begin{equation}\label{l3_12}
\xi_{up}+\epsilon>\xi_F, \forall \epsilon>0 \text{ hence } \xi_F\leq\xi_{up}.
\end{equation}
Now we prove that indeed $\xi_F>0$. It is simple to show that $\xi_F$ cannot be negative. Indeed, if $\xi_F$ is negative, it means that $F$ has finite support which is not possible as for each fixed $x$, we have $F_{\boldsymbol{z}}(x)>0,\forall {\boldsymbol{z}}\in A$, therefore $\forall x \in \mathbb{R}, F(x)>0$.
\newline
Proving that $\xi_F\neq0$ is slightly less trivial. For every distribution $G_0\in MDA(0)$ and for $\epsilon<\xi_{lo}$
\begin{equation}\label{l3_13}
\begin{split}
\frac{\bar{F}(x)}{\bar{G}_{0}(x)}=\frac{\bar{F}(x)}{x^{-\frac{1}{\epsilon}}}\frac{x^{-\frac{1}{\epsilon}}}{\bar{G}_{0}(x)}=\frac{\int_A f_{\boldsymbol{Z}}({\boldsymbol{z}}) x^{-\frac{1}{\xi_{\boldsymbol{z}}}}L_{\boldsymbol{z}}(x)d{\boldsymbol{z}}}{x^{-\frac{1}{\epsilon}}}\frac{x^{-\frac{1}{\epsilon}}}{\bar{G}_{0}(x)}.
\end{split}
\end{equation}
As before we can prove that the first fraction $\frac{\bar{F}(x)}{x^{-\frac{1}{\epsilon}}}\rightarrow\infty$. The expression $\frac{x^{-\frac{1}{\epsilon}}}{\bar{G}_{0}(x)}$ goes to $\infty$ as well due to Lemma \ref{lemma2}, thus
\begin{equation}\label{l3_14}
\begin{split}
\lim_{x\rightarrow\infty}\frac{\bar{F}(x)}{\bar{G}_{0}(x)}=\infty.
\end{split}
\end{equation}
If $\xi_F$ was $0$, then for some $G_0\in MDA(0)$ we would have 
\begin{equation}\label{l3_15}
\begin{split}
\lim_{x\rightarrow\infty}\frac{\bar{F}(x)}{\bar{G}_{0}(x)}=\lim_{x\rightarrow\infty}1=1,
\end{split}
\end{equation}
hence $\xi_F\neq0$.
\newline
\newline
Finally we prove that, if $\xi_{\boldsymbol{z}}$ is continuous in ${\boldsymbol{z}}$ and $\xi_{\max}$ exists, then we have $\xi_F=\xi_{\max}$.
We will first separate $A$ in two sets $A_1, A_2$, where $A_1=\{{\boldsymbol{z}}|\xi_{\max}-\lambda\leq \xi_{\boldsymbol{z}} \leq \xi_{\max} \}$ and $A_2=\{{\boldsymbol{z}}|\xi_{lo} \leq \xi_{\boldsymbol{z}} < \xi_{\max}-\lambda\}$. Since $\xi_{\boldsymbol{z}}$ is continuous, then the pre-image of each of the measurable sets $[\xi_{\max}-\lambda,\xi_{\max}],[\xi_{lo},\xi_{\max}-\lambda)$ will be measurable. In addition, since $[\xi_{\max}-\lambda,\xi_{\max}]$ and $[\xi_{lo},\xi_{\max}-\lambda)$ contain an open set, then so will $A_1$ and $A_2$, implying that $p_i=\mathbb{P}(A_i)>0$, where $i \in \{1,2\}$. Thus,

\begin{equation}\label{integral_separ}
\begin{split}
\bar{F}(x)=\int_A f_{\boldsymbol{Z}}({\boldsymbol{z}}) \bar{F}_z(x)d{\boldsymbol{z}}=
p_1\int_{A_1} \frac{f_{\boldsymbol{Z}}({\boldsymbol{z}})}{p_1}\bar{F}_{\boldsymbol{z}}(x)d{\boldsymbol{z}}+p_2\int_{A_2} \frac{f_{\boldsymbol{Z}}({\boldsymbol{z}})}{p_2} \bar{F}_{\boldsymbol{z}}(x)d{\boldsymbol{z}} \\=p_1 \bar{F}_1(x)+p_2 \bar{F}_2(x).
\end{split}
\end{equation}
From the first part of the Theorem: $\xi_1\in [\xi_{\max}-\lambda,\xi_{\max}]$, and $\xi_2\in [\xi_{lo},\xi_{\max}-\lambda]$, where $F_i\in MDA(\xi_i),\ i=1, 2$. On the other hand Theorem \ref{otheorem1} implies that $\xi_F=\xi_1 $, therefore $\xi_F \in [\xi_{\max}-\lambda,\xi_{\max}]$ for all $\lambda>0$. We conclude that $\xi_F=\xi_{\max}$.

\subsection*{Proof of Lemma \ref{lemma5}}

We assume that $\xi_F>\epsilon$. Then as in the earlier derivations, due to dominated convergence and Lemmas \ref{lemma1} and \ref{lemma2}, for any $\delta>0$, we get:
\begin{equation}\label{l5_1}
\begin{split} 
\lim_{x\rightarrow\infty} \frac{x^{-\frac{1}{\xi_F}}L_F(x)}{x^{-\frac{1}{\epsilon+\delta}}}=\lim_{x\rightarrow\infty}\frac{\bar{F}(x)}{x^{-\frac{1}{\epsilon+\delta}}}=\lim_{x\rightarrow\infty} \int_A f_{\boldsymbol{Z}}({\boldsymbol{z}})\frac{\bar{F}_{\boldsymbol{z}}(x)}{x^{-\frac{1}{\epsilon+\delta}}}d{\boldsymbol{z}}\\
=\lim_{x\rightarrow\infty} \int_{A^+} f_{\boldsymbol{Z}}({\boldsymbol{z}})\frac{\bar{F}_{\boldsymbol{z}}(x)}{x^{-\frac{1}{\epsilon+\delta}}}d{\boldsymbol{z}}+\lim_{x\rightarrow\infty} \int_{A^-} f_{\boldsymbol{Z}}({\boldsymbol{z}})\frac{\bar{F}_{\boldsymbol{z}}(x)}{x^{-\frac{1}{\epsilon+\delta}}}d{\boldsymbol{z}}\\
=\int_{A^+} \lim_{x\rightarrow\infty}  f_{\boldsymbol{Z}}({\boldsymbol{z}})\frac{x^{-\frac{1}{\xi_{\boldsymbol{z}}}}}{x^{-\frac{1}{\epsilon+\delta}}}L_{\boldsymbol{z}}(x)d{\boldsymbol{z}} + \int_{A^-} \lim_{x\rightarrow\infty}  f_{\boldsymbol{Z}}({\boldsymbol{z}})\frac{\bar{F}_{\boldsymbol{z}}(x)}{x^{-\frac{1}{\epsilon+\delta}}}d{\boldsymbol{z}}=0.\\
\end{split}
\end{equation}
therefore $\xi_F<\epsilon+\delta, \forall \delta>0$, contradicting our assumption $\xi_F>\epsilon$.

\subsection*{Proof of Theorem \ref{otheorem2}}

The proof is similar to that of the last statement in Theorem \ref{otheorem2a}.
We will first separate $A$ in two sets $A_1, A_2$, where $A_1=\{{\boldsymbol{z}}|\xi_{\max}-\lambda\leq \xi_{\boldsymbol{z}} \leq \xi_{\max} \}$ and $A_2=\{{\boldsymbol{z}}| \xi_{\boldsymbol{z}} < \xi_{\max}-\lambda\}$. Since $\xi_{\boldsymbol{z}}$ is continuous, then the pre-image of each of the measurable sets $[\xi_{\max}-\lambda,\xi_{\max}],(-\infty,\xi_{\max}-\lambda)$, will be measurable. In addition, since $[\xi_{\max}-\lambda,\xi_{\max}]$ and $(-\infty,\xi_{\max}-\lambda)$ contain an open set, then so will $A_1$ and $A_2$, implying that $p_i=\mathbb{P}(A_i)>0$, where $i \in \{1,2\}$.

\begin{equation}\label{c2_1}
\begin{split}
\bar{F}(x)=\int_A f_{\boldsymbol{Z}}({\boldsymbol{z}}) \bar{F}_z(x)d{\boldsymbol{z}}=
p_1\int_{A_1} \frac{f_{\boldsymbol{Z}}({\boldsymbol{z}})}{p_1}\bar{F}_{\boldsymbol{z}}(x)d{\boldsymbol{z}}+p_2\int_{A_2} \frac{f_{\boldsymbol{Z}}({\boldsymbol{z}})}{p_2} \bar{F}_{\boldsymbol{z}}(x)d{\boldsymbol{z}} \\=p_1 \bar{F}_1(x)+p_2 \bar{F}_2(x).
\end{split}
\end{equation}
Based on Theorem \ref{otheorem2a} and Lemma \ref{lemma5}: $\xi_1=\xi_{\max}$, and $\xi_2\in (-\infty,\xi_{\max}-\lambda]$, where $F_i\in MDA(\xi_i),\ i=1, 2$. From Theorem \ref{otheorem1}, we conclude that $\xi_F=\xi_{\max}$.
\
The last statement in the Theorem, that is, if  $\xi_{\max} \leq0$  then $\xi_F \leq0$, is simply Corollary \ref{corollarynonpositive}.

\subsection*{Proof of Proposition \ref{proposition2}}

In the case that $\xi_X>0$, based on our assumptions there exists $L(x)$ such that 
\begin{equation}\label{p2_1}
\mathbb{P}(X>x)=\bar{F}_X(x)=x^{-\frac{1}{\xi_X}}L_1(x).
\end{equation}
Therefore
\begin{equation}\label{p2_2}
\bar{F}_Y(x)=\mathbb{P}(Y>x)=\mathbb{P}(X^\alpha>x)=\mathbb{P}(X>x^{\frac{1}{\alpha}})=({x^{\frac{1}{\alpha}}})^{-\frac{1}{\xi_X}}L_1(x^\frac{1}{\alpha})=x^{-\frac{1}{\alpha\xi_X}}L_2(x).
\end{equation}
We conclude that $Y \in MDA(\alpha\xi_X)$.
On the other hand if $\xi_X\leq 0$ then $\xi_Y\leq 0$, because if $\xi_Y>0$, then from the first part we would have $\xi_X=\frac{1}{\alpha}\xi_Y>0$.

\subsection*{Proof of Proposition \ref{proposition3}}
We will first prove the case when $p=1$. If we fix $y$ and denote with $\xi_y^{h-}, \xi_y^{h+}$ the shape parameters of the left and right tail of $p(\hat{f}_{\boldsymbol{V}}({\boldsymbol{X}})\big|y)$, then assuming that at least one of them is positive, from Proposition 21 we know that the tail shape parameter of $p(|\hat{f}_{\boldsymbol{V}}({\boldsymbol{X}})|\big|y)$ is $\xi_y^{h}=\max\{\xi_y^{h-}, \xi_y^{h+}\}$. We notice now that $\xi_y^{h-}, \xi_y^{h+}$ are the right and left tail shape parameters of $p(-\hat{f}_{\boldsymbol{V}}({\boldsymbol{X}})\big|y)$, therefore they are the right and left tail shape parameters of the distribution $p(y-\hat{f}_{\boldsymbol{V}}({\boldsymbol{X}})\big|y)$. Due to this, if we denote with $\xi_y^g$ the tail shape parameter of $p(|y-\hat{f}_{\boldsymbol{V}}({\boldsymbol{X}})|\big|y)$, using Proposition 21 once again we have that $\xi_y^g=\max\{\xi_y^{g+}, \xi_y^{g-}\}=\max\{\xi_y^{h-}, \xi_y^{h+}\}=\xi_y^{h}$, where $\xi_y^{g-}, \xi_y^{g+}$ are the left and right shape parameters of $p(y-\hat{f}_{\boldsymbol{V}}({\boldsymbol{X}})|y)$. If both $\xi_y^{h-}, \xi_y^{h+}$ are non-positive then from Proposition 21, $\xi_y^h$ is non-positive, and furthermore $\xi_y^g$ is non-positive, otherwise we could go in the reverse direction and prove that $\xi_y^g>0$ implies that either $\xi_y^{g-}=\xi_y^{h+}$ is positive, or that $\xi_y^{g+}=\xi_y^{h-}$ is positive. 
\newline
Now, we denote by $G_y(s)$ the distribution of $|y-\hat{f}_{\boldsymbol{V}}({\boldsymbol{X}})|$ given $y$, and prove that the family $\{G_y(s)| y \in S\}$ has stable cross-tail variability. For each $y$ we denote with $t_0(y)$ the smallest value after which the sub-polynomial assumption is satisfied by $F_y(t)$. Similarly we define $s_0(y)$ for $G_y(s)$. Since the family $\{F_y(t)| y \in S\}$ has stable cross-tail variability, then each such $t_0(y)$ exists, and furthermore the set $\{t_0(y)| y \in S\}$ is bounded from above. Since each $s_0(y)$ is only displaced by a magnitude of $|y|$ from $t_0(y)$, and since the set $S$ is bounded, then we can conclude that $\{s_0(y)| y \in S\}$ is bounded from above.
\newline
We denote $\xi^g, \xi^h$ the tail shape parameters  of $|Y-\hat{f}_{\boldsymbol{V}}({\boldsymbol{X}})|$ and $|\hat{f}_{\boldsymbol{V}}({\boldsymbol{X}})|$ respectively. Using Theorem \ref{otheorem2} twice we get that if there is at least one $\xi_y^h=\xi_y^g>0$ then $\xi^h=\max\{\xi_y^h|y\in S\}=\max\{\xi_y^g|y\in S\}=\xi^g>0$, otherwise $\xi^h\leq 0, \xi^g\leq 0$.
\newline
Finally we finish the proof by applying Proposition 22 on $|Y-\hat{f}_{\boldsymbol{V}}({\boldsymbol{X}})|$ and $|\hat{f}_{\boldsymbol{V}}({\boldsymbol{X}})|$. 

\section*{Appendix B: Examples where the regularity conditions do not hold}

Below we give examples where the regularity conditions do not hold: 
\newline
\newline
\textbf{Example 1}: Let $f_U(u)$ be a uniform distribution, and $g_u(w)$ an exponential distribution with parameter $\frac{1}{u}$. Clearly, the expectation of $g_u(w)$ at each $u \in (0,1)$ exists. However for 
\newline
\begin{equation}\label{eg1_1}
h(w)=\int_{0}^{1}f_U(u) g_u(w)du= \int_{0}^{1}ue^{-uw}du
\end{equation} 
the expectation is
\begin{equation}\label{eg1_2}
\int_{0}^{\infty}\int_{0}^{1}wf_U(u) g_u(w)du dw=\int_{0}^{1}\int_{0}^{\infty}wue^{-uw}dwdu=\int_{0}^{1}\frac{1}{u}du
\end{equation} 
In this example, we can see that even though all the distributions $g_u(w)$ have shape parameter $0$, the shape parameter of $h(w)$ is bigger or equal to one. This is because the beginning of the exponential behaviour of the tail is delayed indefinitely across the elements of the family, violating the $\gamma$-uniform sub-polynomial assumption. 
\newline
\newline
Below we give an example of a family of slowly-varying functions  $\{L_z(x)| z\in A\}$, where $A$ is compact and $L_z(x)$ is continuous in $x$ and $z$, but $\{L_z(x)| z\in A\}$ is not $\gamma$-uniformly sub-polynomial. In this case, the non slowly-varying behaviour (non sub-polinomiality) of $L_z(x)$, or in other words, the tail of $F_z(x)$, is postponed indefinitely across the family of $\{F_z(x)|z \in A\}$
\newline
\newline
\textbf{Example 2}: Let $L_z(x)$, for $z \in [0,1]$, be defined as below:
\begin{align}\label{eg2_1}
    L_z(x)=
    \begin{cases}
     1 + zx^{4-(z-\frac{1}{x})^2} & \text{for $x \in (1,\frac{1}{z})$}
      \\
     1 + \frac{1}{z^3}  & \text{for $x \in (\frac{1}{z},\infty)$}
    \end{cases}  
\end{align} when $z\neq0$ and $L_0=1 \text{ for } x \in (\frac{1}{z},\infty)$.
For $x^{-1}$ we define $F_z(x)=x^{-1}L_z(x)$, that is:
\begin{align}\label{eg2_21}
    F_z(x)=
    \begin{cases}
     x^{-1} + zx^{3-(z-\frac{1}{x})^2} & \text{for $x \in (1,\frac{1}{z})$}
      \\
     x^{-1} + \frac{1}{z^3}x^{-1}  & \text{for $x \in (\frac{1}{z},\infty)$}
    \end{cases}  
\end{align} when $z\neq0$ and $F_0=x^{-1} \text{ for } x \in (\frac{1}{z},\infty)$. One can check that $F_z(x)$ and $L_z(x)$ are continuous in $z$. On the other hand for a given $z$, $F_z(\frac{1}{z})=z+z^{-2}$, meaning that $F_z(\frac{1}{z})$ tends to infinity, when $z$ tends to zero. Therefore $\{L_z(x)| z\in A\}$ is not $\gamma$-uniformly sub-polynomial.

\section*{Appendix C: Examples where the regularity conditions hold}

Below we give examples where the regularity conditions do hold: 
\newline
\newline
\textbf{Example 3}: Let $\bar{F}_z(x)=x^{-z}=x^{-\frac{1}{\frac{1}{z}=\xi_z}}$ for $z \in (1,\infty)$, and let $\bar{F}(x)=e\int_1^{\infty} e^{-z}\bar{F}_z(x) dz$. Then $\bar{F}(x)=x^{-1}\frac{1}{1+\ln{x}}=x^{-1}L(x)$, where $L(x)=\frac{1}{\ln{x}}$ is slowly varying as both $1$ and $\ln{x}$ are slowly varying.
\newline
\newline
\textbf{Example 4}: Let $\bar{F}_z(x)=x^{-z}\ln{x^z}$ for $z \in (1,2)$, and let $\bar{F}(x)=\int_1^{2} \bar{F}_z(x) dz$. Then $\bar{F}(x)=x^{-1}-2x^{-2}+x^{-1}\frac{1}{\ln{x}}-x^{-2}\frac{1}{\ln{x}}=x^{-1}(1-2x^{-1}+\frac{1}{\ln{x}}-x^{-1}\frac{1}{\ln{x}})=x^{-1}L(x)$, where $L(x)=1-2x^{-1}+\frac{1}{\ln{x}}-x^{-1}\frac{1}{\ln{x}}$ is slowly varying.

\section*{Appendix D: Moment based motivation}

In Proposition \ref{proposition3}, we showed that under certain conditions, we could estimate the shape of the tail of the distribution of $W_{{\boldsymbol{V}}}({\boldsymbol{U}})$  without using test labels. This can also be motivated from the moments of $W_{{\boldsymbol{V}}}({\boldsymbol{U}})$. Indeed, conditioning on the test label $y$ we have
\begin{equation}\label{4.2.7}
\mathbb{E}[W^{p}_{{\boldsymbol{V}}}({\boldsymbol{U}})|Y=y]=E_{\boldsymbol{V}}[({y}-\hat{f}_{\boldsymbol{V}}(\boldsymbol{x}))^{p}|y]
\end{equation}
\begin{equation}\label{4.2.8}
=\sum\limits_{k=0}^{p} {\binom{p}{k}}y^k (-1)^{p-k}E_{\boldsymbol{V}}[\hat{f}_{\boldsymbol{V}}^{p-k}(\boldsymbol{x})|y] 
\end{equation}

We can see that for test label $y$, if the moment $p$ of $\hat{f}_{\boldsymbol{V}}(\boldsymbol{x})$ given $y$ exists then the moment $p$ of $W_{{\boldsymbol{V}}}(u)$ given $y$ exists. If each $E_{\boldsymbol{V}}[\hat{f}_{\boldsymbol{V}}^j(\boldsymbol{x})|y]$, $j\in\{1,...,p\}$ changes continuously with $y$ then $\mathbb{E}[W^{p}_{{\boldsymbol{V}}}({\boldsymbol{U}})|y]$ is continuous with respect to $y$. Further assuming that the support of $Y$ is compact, then moment $p$ of $W_{{\boldsymbol{V}}}({\boldsymbol{U}})$, that is, $\mathbb{E}[W^{p}_{{\boldsymbol{V}}}({\boldsymbol{U}})]=\mathbb{E}_y\mathbb{E}[W^{p}_{{\boldsymbol{V}}} ({\boldsymbol{U}})|Y=y]$ will exist as well. 
\newline
\newline
Under these conditions, if $\hat{f}_{\boldsymbol{V}}(\boldsymbol{x})$ is a non-negative function, then the existence of $\mathbb{E}[\hat{f}_{\boldsymbol{V}}^{p}(\boldsymbol{x})]=\mathbb{E}_y\mathbb{E}[\hat{f}_{\boldsymbol{V}}^{p}(\boldsymbol{x})|y]$ guarantees the existence of $\mathbb{E}[\hat{f}_{\boldsymbol{V}}^{p}(\boldsymbol{x})|y]$ for almost all $y$, thus it ensures the existence of $\mathbb{E}[W^{p}_{{\boldsymbol{V}}}({\boldsymbol{U}})]$.

\section*{Appendix E: Reducing the variability of the estimated shape parameters}\label{reducvar}
 It is proven in \cite{10.1214/aos/1176347396}, that under certain conditions on $k$ (in particular that $\frac{k(n)}{n}\rightarrow0$ as $n\rightarrow\infty$) the Pickands Estimator has an asymptotically Gaussian distribution: $\sqrt{k(n)}(\hat{\xi}^{(P)}_{k,n}-\xi)\xrightarrow{d}\mathcal{N}(0, \sigma^2(\xi))$. This implies that for large $n$, we roughly have $\hat{\xi}^{(P)}_{k,n}\sim \mathcal{N}(\xi, \frac{\sigma^2(\xi)}{k(n)})$. Minding the size of $n$, we can split the $n$ samples into $m$ groups such that $n=m\frac{n}{m}$, and such that we still have roughly $\hat{\xi}^{(P)}_{k,\frac{n}{m}}\sim \mathcal{N}(\xi, \frac{\sigma^2(\xi)}{k(\frac{n}{m})})$. Since we can estimate $\hat{\xi}^{(P)}_{k,\frac{n}{m}}$ for each of the $m$ groups we can define the average estimation as $\hat{\xi}^{(P),avg}_{k,\frac{n}{m}}=\frac{1}{m}\sum_{i=1}^m \hat{\xi}^{(P),i}_{k,\frac{n}{m}}$. Under the assumption that samples from such groups are independent, we get that $\hat{\xi}^{(P),avg}_{k,\frac{n}{m}}\sim \mathcal{N}(\xi, \frac{\sigma^2(\xi)}{m k(\frac{n}{m})})$. Since $k(n)=o(n)$, we can choose to reduce the variance 'linearly' by keeping $\frac{n}{m}$ constant and increasing $m$, instead of increasing the sub-linear $k(n)$. This becomes quite apparent if we set $k(n)=\log{n}$ or $k(n)=\sqrt{n}$. Indeed, for $k(n)=\log{n}$, the ratio between the variances of the direct approach and our approach is 

 \begin{equation}
     \frac{m \log{\frac{n}{m}}}{\log{n}}=\frac{m \log{\frac{n}{m}}}{\log{m}+\log{\frac{n}{m}}}=\frac{m C}{\log{m}+C}\rightarrow\infty 
 \end{equation}
as $m\rightarrow\infty$.

Similarly for $k(n)=\sqrt{n}$,
 \begin{equation}
     \frac{m \sqrt{\frac{n}{m}}}{\sqrt{n}}=\sqrt{m}\rightarrow\infty 
 \end{equation}
 as $m\rightarrow\infty$. Here we can see that even if we fix $m$ and then allow each group with size $\frac{n}{m}$ to grow as $n$ increases, the variance is still $\sqrt{m}$ times smaller using our approach. 

The asymptotically Gaussian distribution property holds in the case of the DEdH estimator if one knows that $\xi>0$ (Hill estimator, \cite{10.1214/aos/1176346804}). Furthermore, both estimators $H^{(1)}_{k,n}$ and $H^{(2)}_{k,n}$ in Definition \ref{dedh_estimator_def} jointly possess this property, \cite{10.1214/aos/1176347397}.

\section*{Appendix F: The inadequacy of the direct POT usage on mixture distributions}

In this section, we illustrate two cases where cross tail estimation is necessary for proper tail shape estimation.
\begin{figure}[ht]
\begin{tabular}{ll}
\includegraphics[scale=0.18]{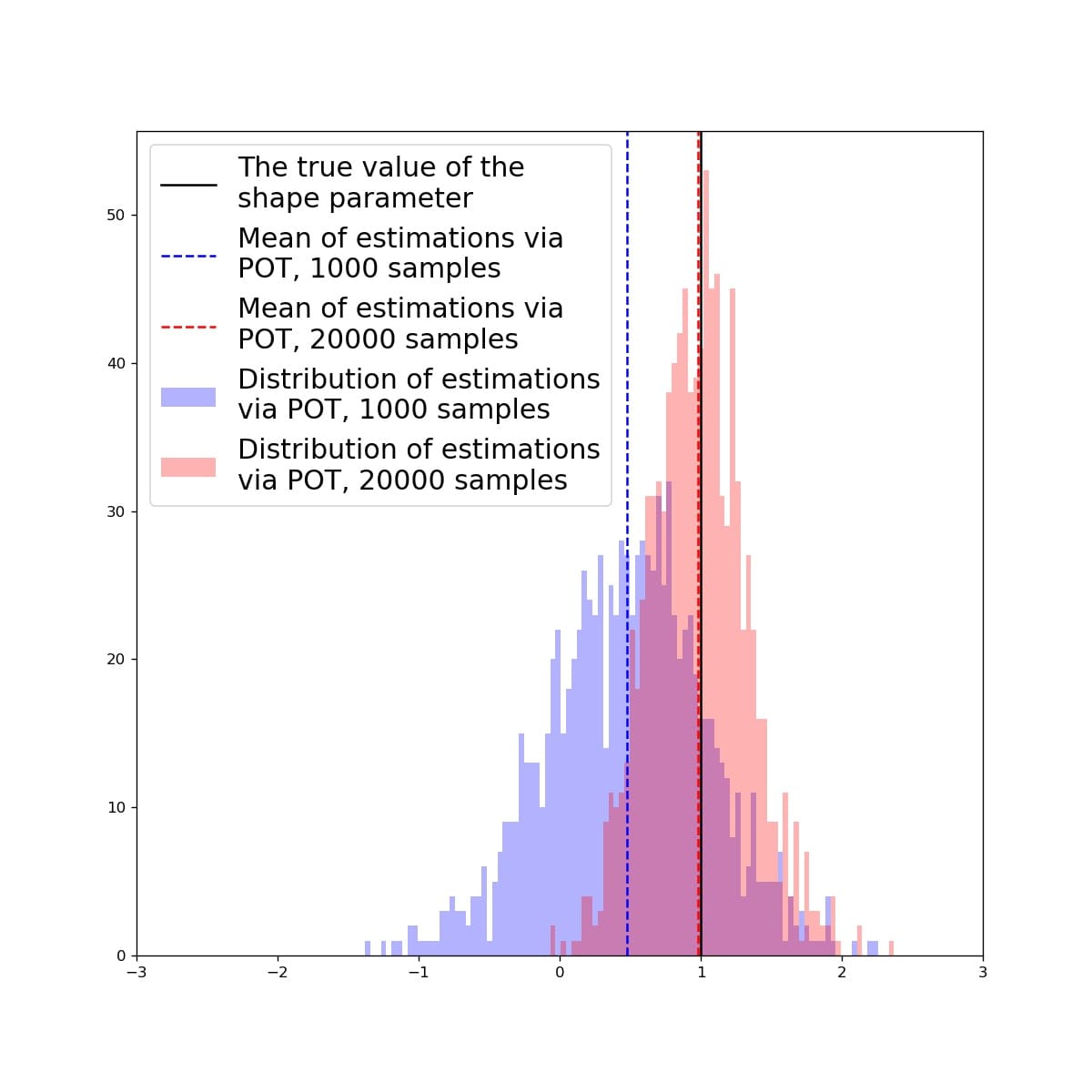}
&
\includegraphics[scale=0.18]{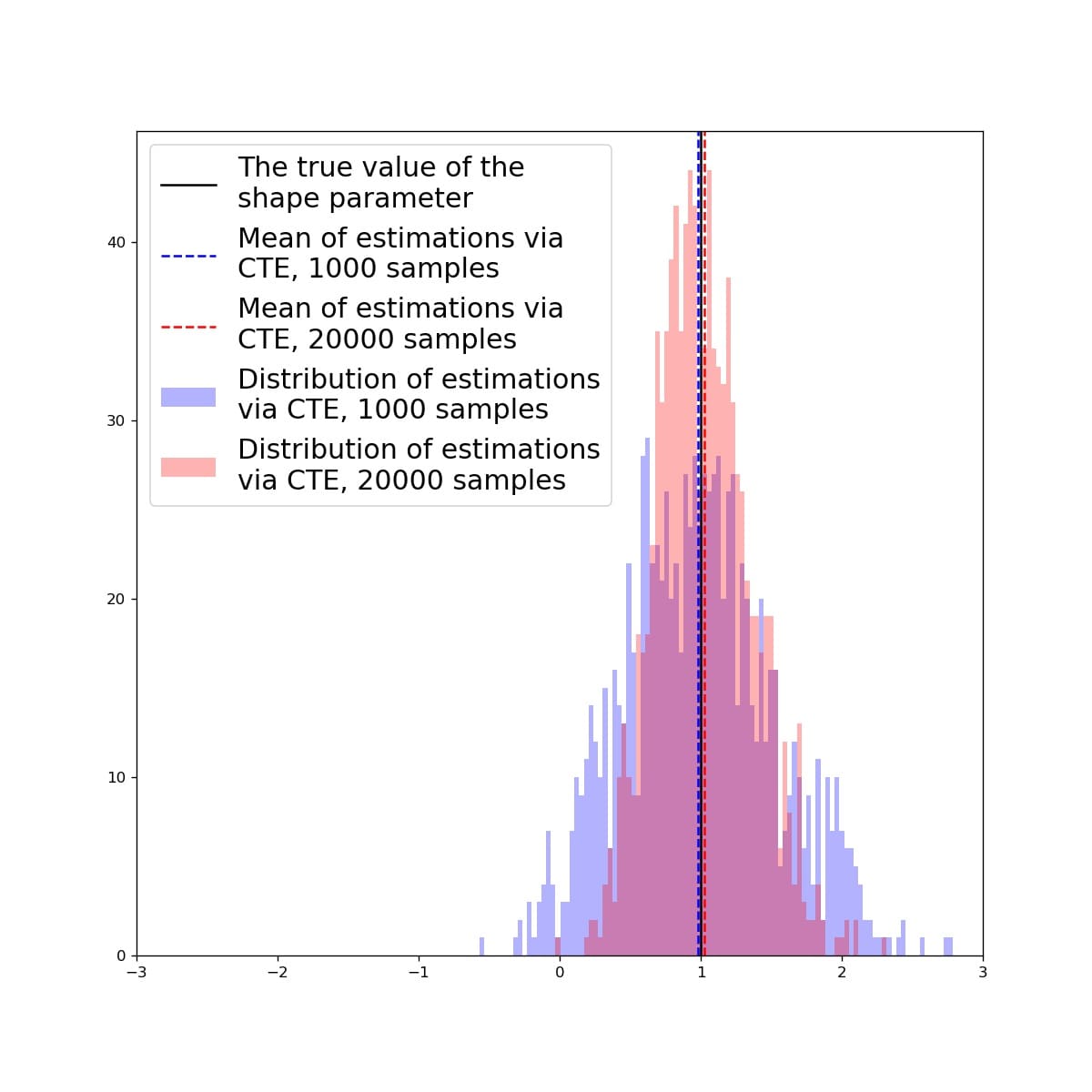}
\end{tabular}
\caption{Standard estimation of the shape parameter of the tails by simply applying the Pickands' Estimator, on average, gives poor results on fewer data (left). Cross tail estimation (CTE) gives the correct estimation on average. (right).}
\label{fig4.2.1}
\end{figure}
\subsection*{Uniform Case}

In our experimental procedure, we randomly select samples adhering to two distinct power law distributions. Each of these distributions has a unique characteristic shape parameter - one has a shape parameter of $1$, while the other possesses a shape parameter of $0.5$. For our random sampling process, we afford equal probability, precisely $50\%$, to both these distributions. This means there is an identical chance of picking a sample from either of these power law distributions, each with their respective shape parameters. 

When we examine an experimental set of $10^3$ sampled points from each of these distributions, the resulting pattern becomes apparent as shown in Figure \ref{fig4.2.1} (left). We find that if we amalgamate all the sampled data points from both distributions into a unified array, and subsequently apply Pickands Estimator on this consolidated data set, the process yields a sub-optimal estimation of the distribution tail. The outcome is unsatisfactory as it fails to reveal the accurate shape of the tail, thereby defeating the purpose of the estimation.

However, we discover that there is a noticeable enhancement in the quality of the estimation when we bolster the sample size from the initial $10^3$ to a considerably larger size of $2*10^4$. This increase in sample size permits us to retrieve the true shape of the distribution tail. 

 Using CTE however, we find that a sample size of just $10^3$ proves to be adequate in obtaining a satisfactory estimation of the distribution tail. As illustrated in Figure \ref{fig4.2.1} (right), this method leads to an accurate estimation with a substantially smaller sample size. Therefore, our method introduces an efficient pathway towards achieving accurate estimations with fewer resources, thereby demonstrating its potential superiority over the traditional Pickands Estimator.

\subsection*{Non-Uniform Case}

Similarly, in the second experiment, we sample with $ 20\%$ probability from a distribution with power law tails with shape parameter $1$, and with $80\%$ probability probability from a distribution with power law tails with shape parameter $0.5$. 
\begin{figure}[ht]
\begin{tabular}{ll}
\includegraphics[scale=0.18]{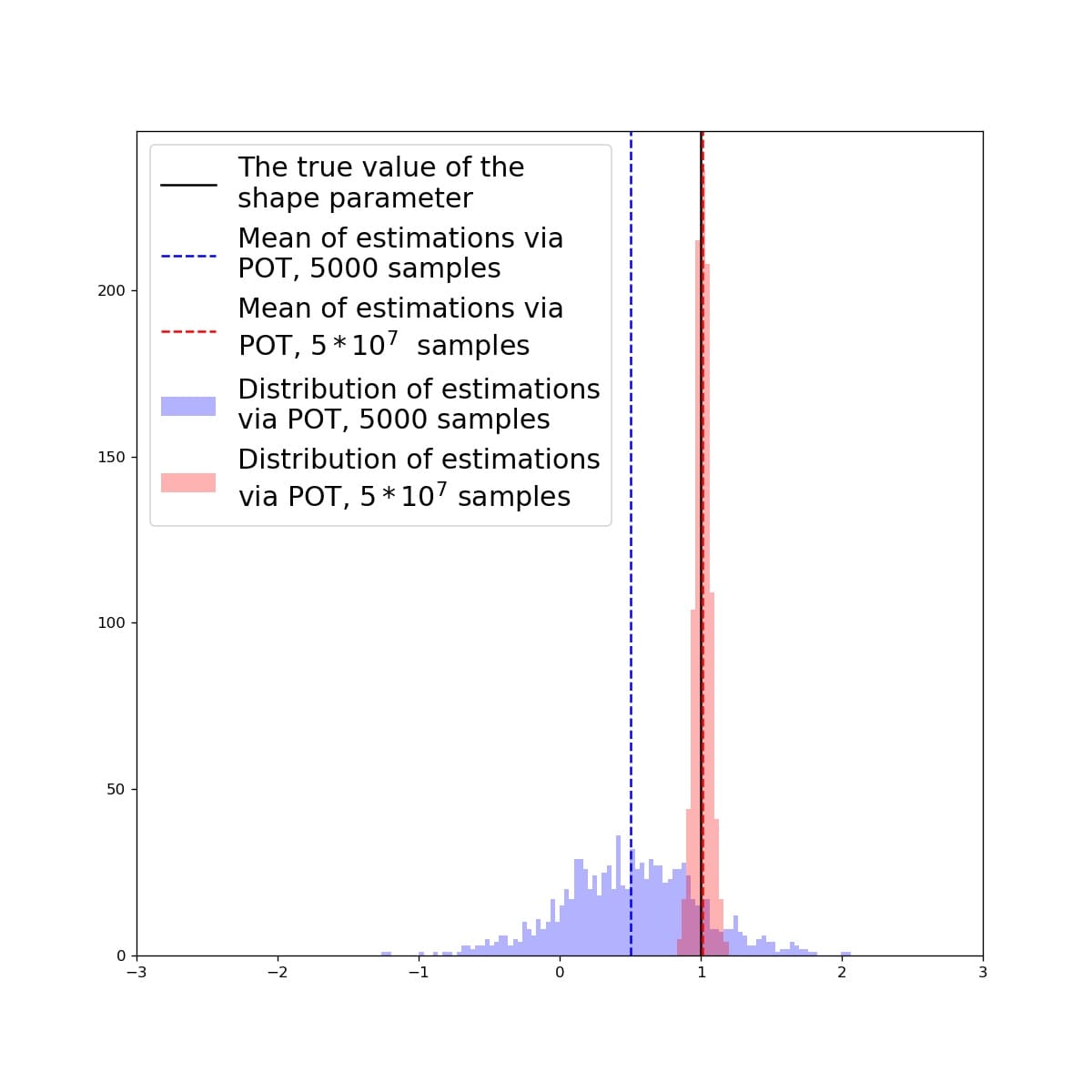}
&
\includegraphics[scale=0.18]{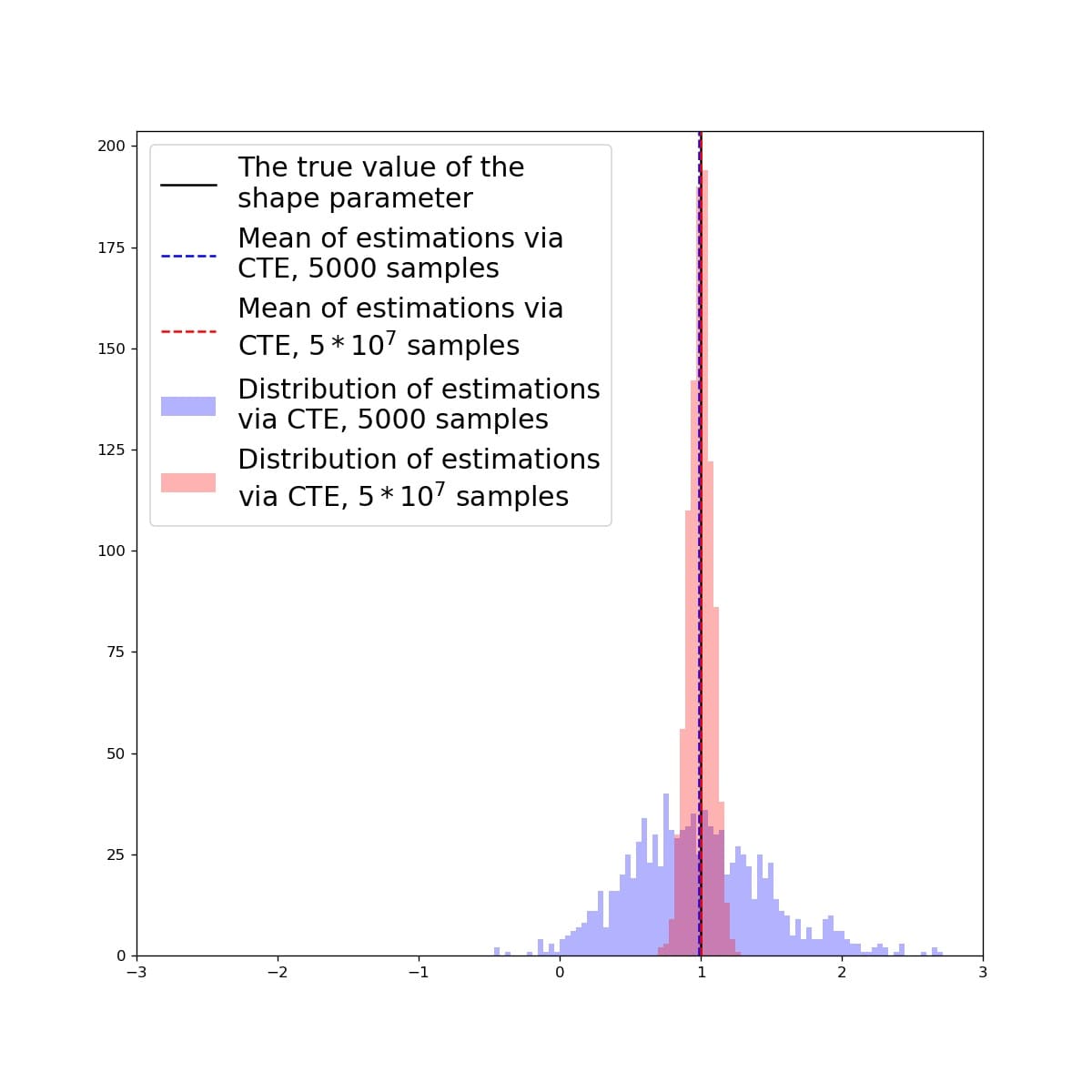}
\end{tabular}
\caption{ Standard estimation of the shape parameter of the tails by simply applying the Pickands' Estimator, on average, gives poor results on fewer data (left). Cross tail estimation (CTE) gives the correct estimation on average. (right).}
\label{fig4.2.2}
\end{figure}
 
When sampling $5*10^3$ points from each distribution, Figure \ref{fig4.2.2}, we are not able to properly estimate the tail if we join all the samples together in a common array and then apply the Pickands' Estimator. But, if we increase the sample size from $5*10^3$ to $5*10^7$, we manage to retrieve the the true tail shape of the mixture.
However, using our method, $5*10^3$ samples are already sufficient to get a proper estimation.

\section*{Appendix G: Additional details with regards to Section 5.1}\label{experiment1_details}

Below we provide Figure \ref{evolution_ksi} which illustrates how ${\xi_z}$ evolves depending on the $\xi_{\max}$ which is given as input. The parameter $\xi_{\max}$ takes the following $45$ values $\{-4,-4+0.1,-4+0,2,...,5\}$.

\begin{figure}[H]
\centering
\includegraphics[scale=0.22]{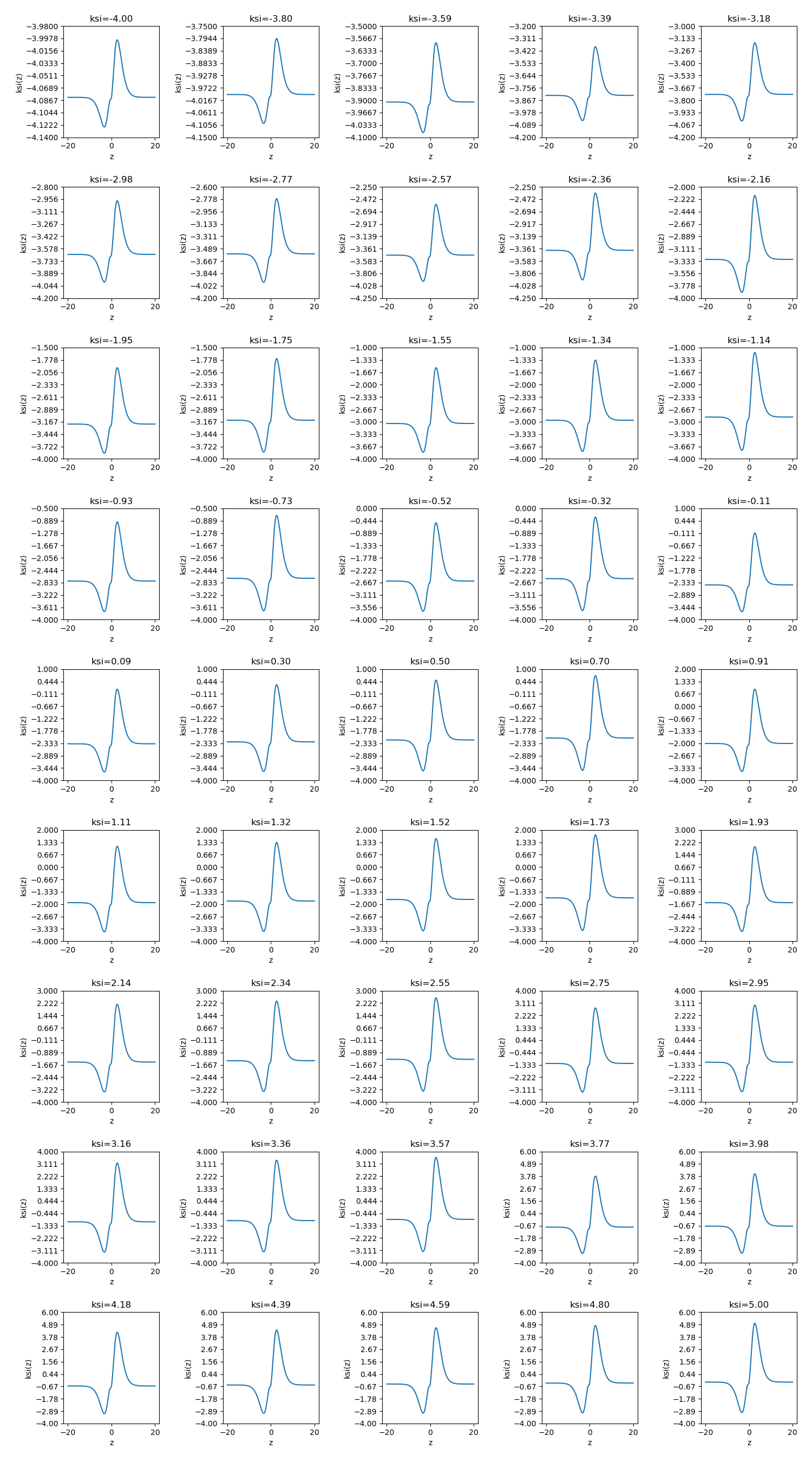}
\caption{The evolution of ${\xi_z}$ depending on the value of $\xi_{\max}$.}
\label{evolution_ksi}
\end{figure}  

\section*{Appendix H: Additional details with regards to Section 5.3}\label{experiment3_details}

\begin{figure}[H]
\centering
\includegraphics[scale=0.135]{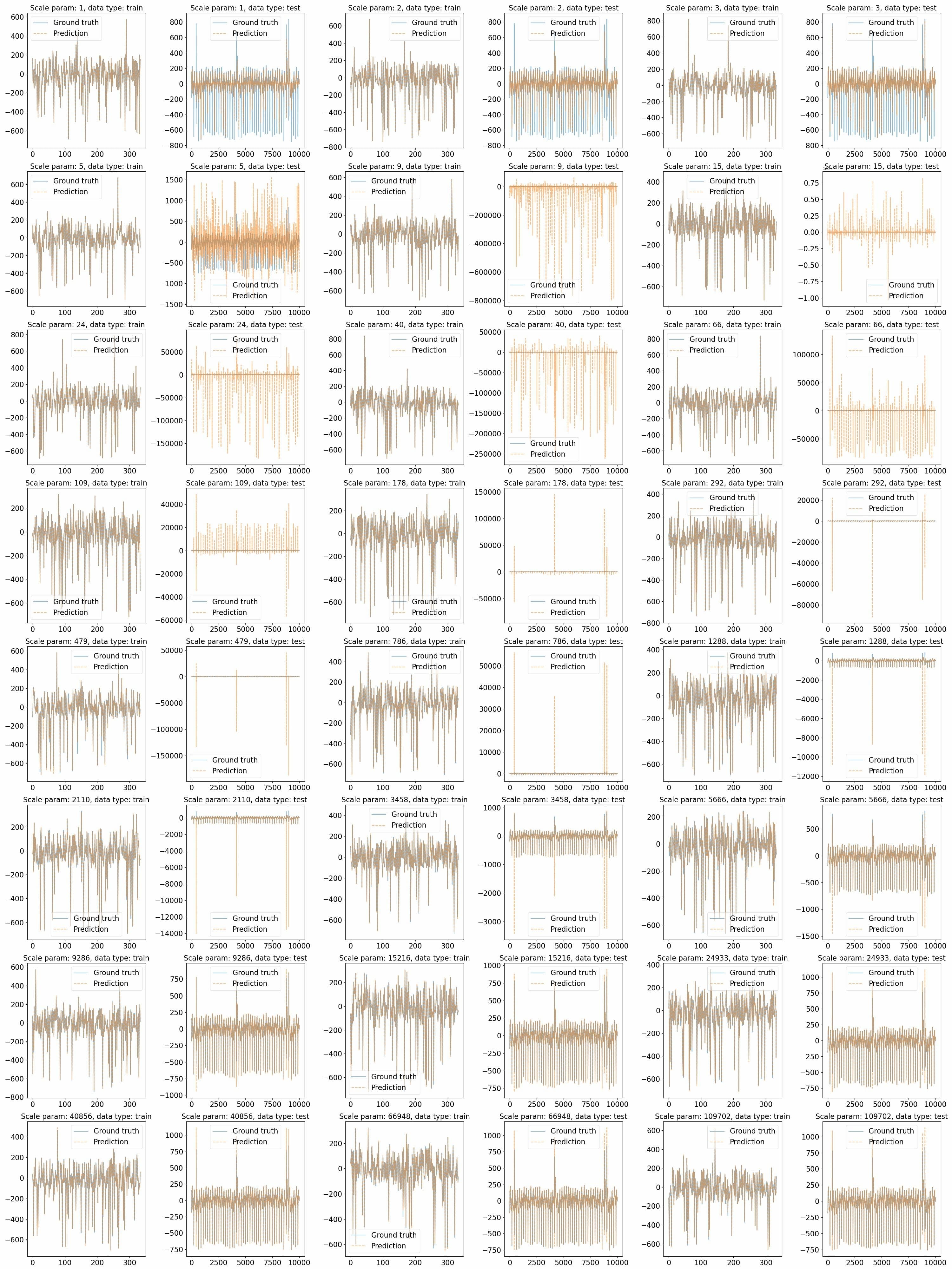}
\caption{The performance of Gaussian process on train and test data depending on the length scale parameter. First half of the cases.}
\label{gridgauss1}
\end{figure}  

\begin{figure}[H]
\centering
\includegraphics[scale=0.14]{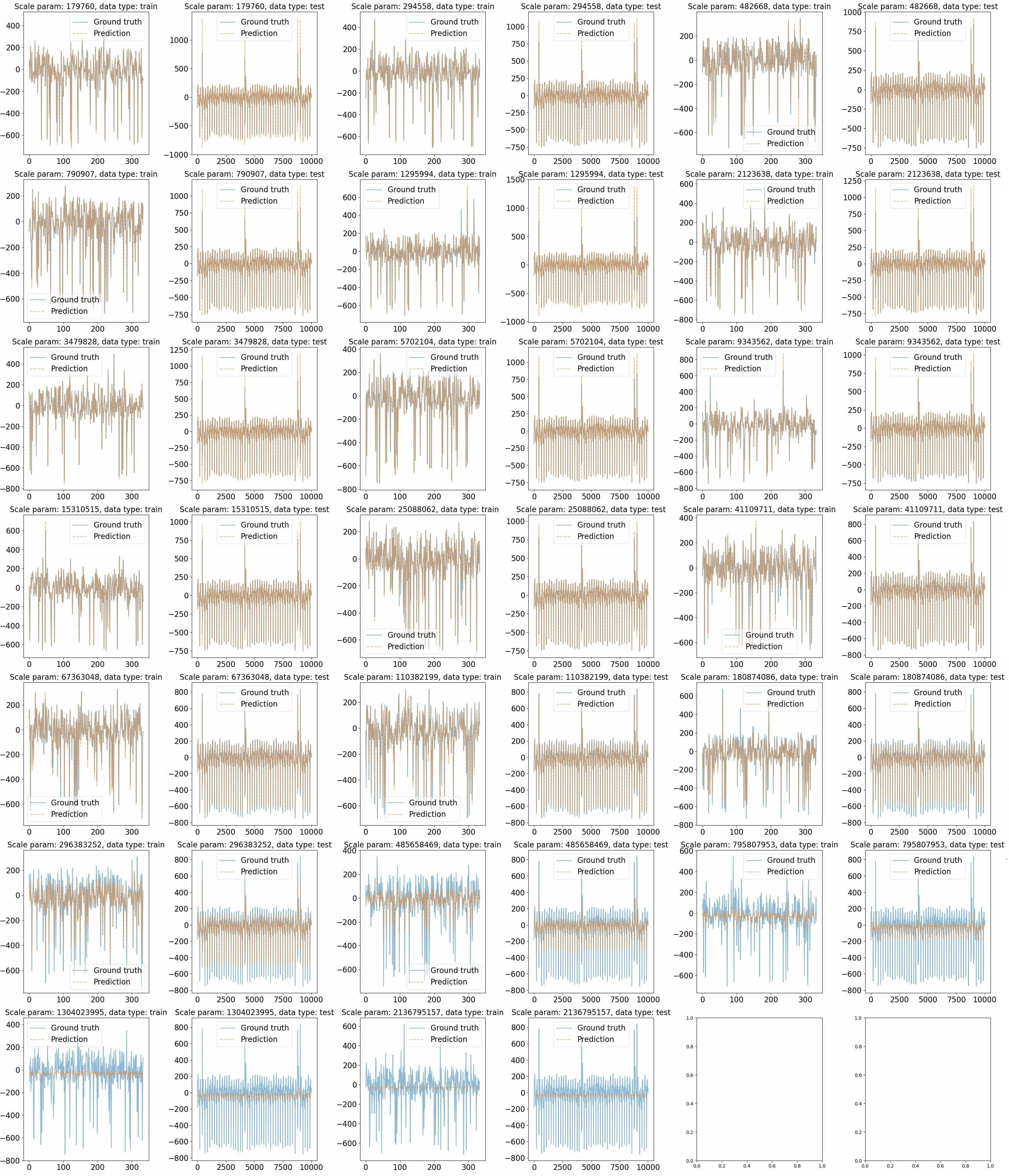}
\caption{The performance of Gaussian process on train and test data depending on the length scale parameter. Second half of the cases.}
\label{gridgauss2}
\end{figure}  

\begin{figure}[H]
\centering
\includegraphics[scale=0.18]{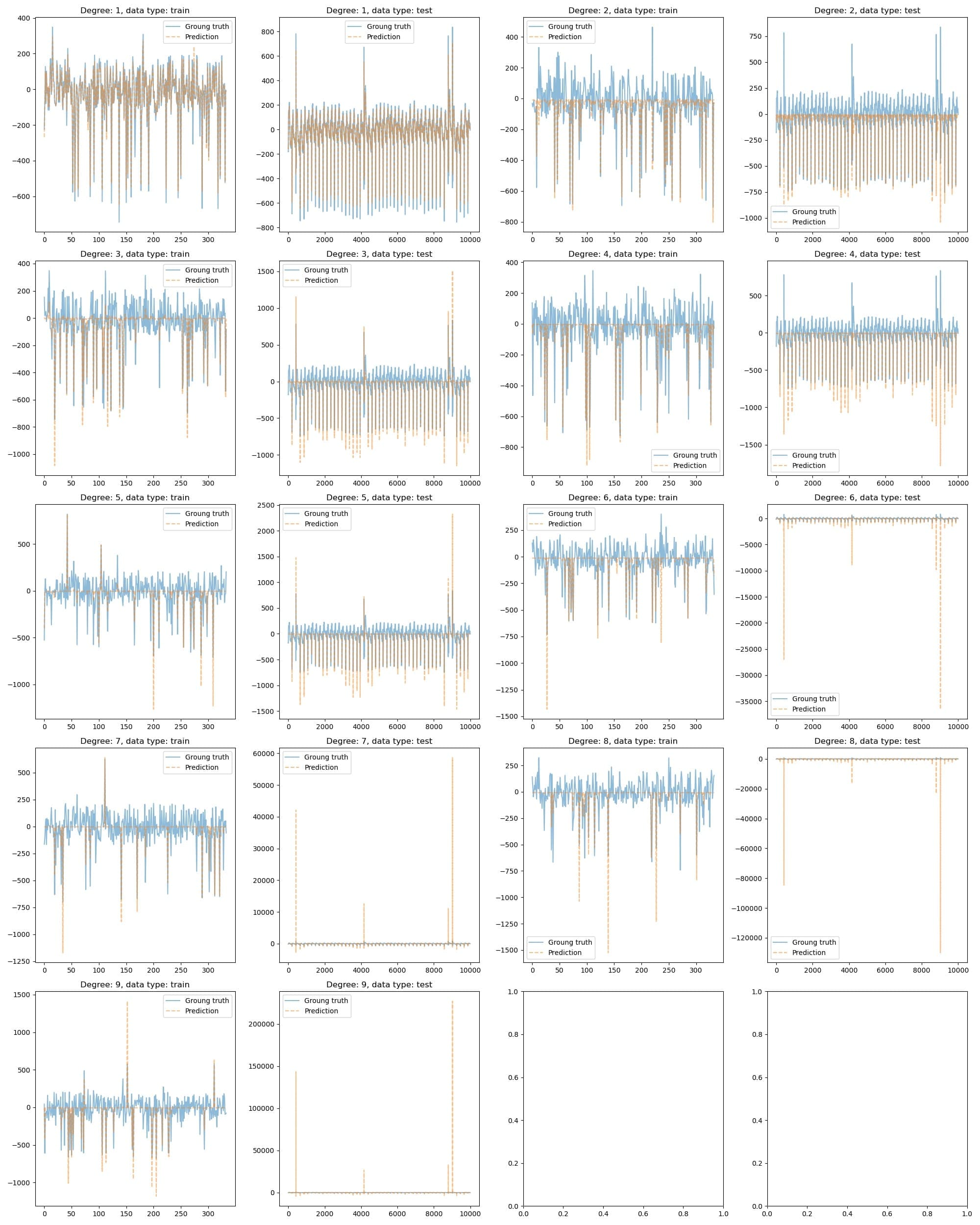}
\caption{The performance of polynomial kernels on train and test data depending on the degree.}
\label{gridpoly}
\end{figure}  

\begin{figure}[H]
\centering
\includegraphics[scale=0.125]{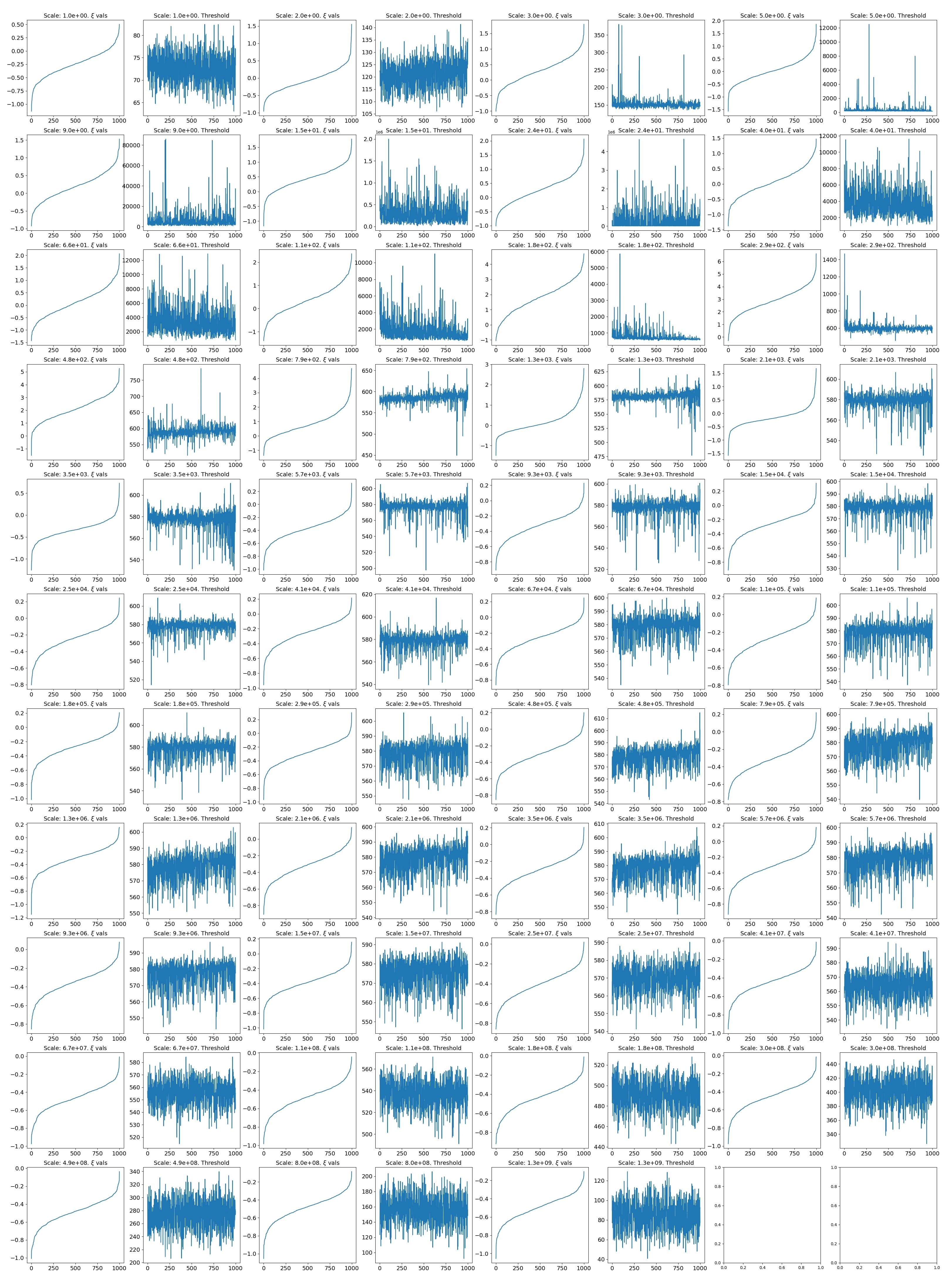}
\caption{For each length scale parameter of the Gaussian Process, we present the variability (sorted) of the estimated shape parameters across 1000 conditional distributions (defined by the choice of training sets). Jointly, we also present the 97th percentile of the conditional distributions corresponding to each estimated shape parameter. }
\label{variabilitygauss}
\end{figure}  

\begin{figure}[H]
\centering
\includegraphics[scale=0.18]{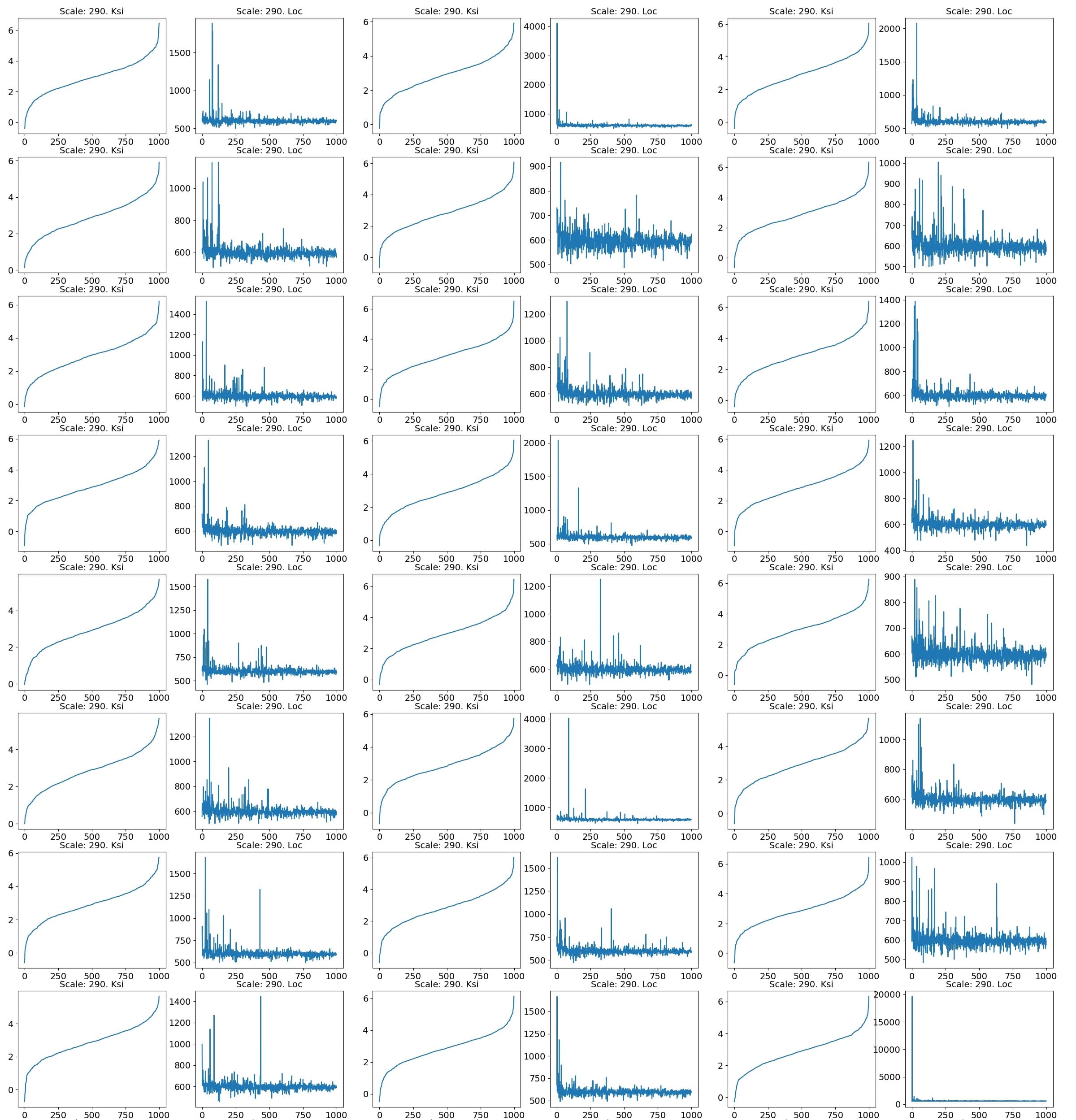}
\caption{We run 32 times the Gaussian Process experiment for length scale parameter value of 290. On each run, we calculate thresholds (sorted) of the 1000 conditional distributions determined by the 1000 choices of the training set, as well as their corresponding shape tail parameters. We see that higher thresholds correspond to lower shape parameters}
\label{290gauss}
\end{figure}  

\begin{figure}[H]
\centering
\includegraphics[scale=0.34]{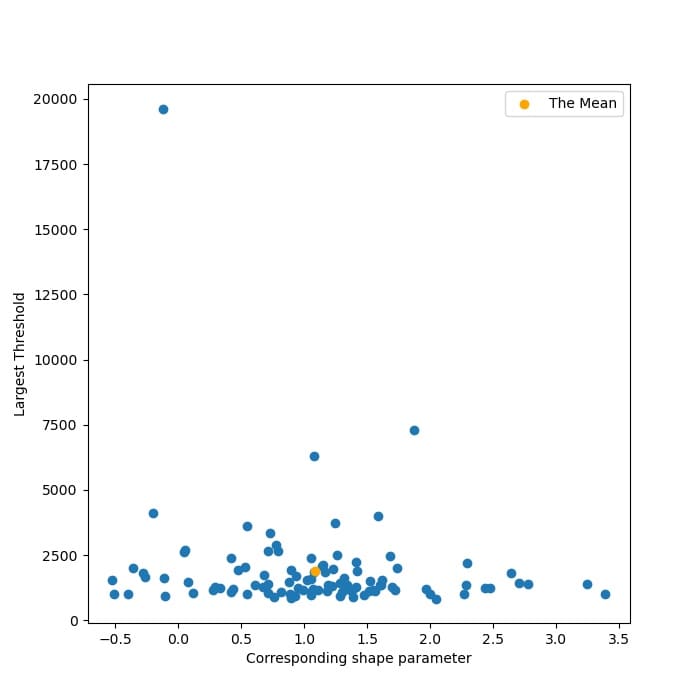}
\caption{For each run, in the experiment of Figure \ref{290gauss} the maximum threshold and the corresponding shape parameter are selected. Here we present the scatter plot of 100 such points, received from 100 runs. We see that high thresholds correspond to lower shape parameters.}
\label{290gaussscatter}
\end{figure}  

\bibliography{main}

\begin{thebibliography}{23}
\providecommand{\natexlab}[1]{#1}
\providecommand{\url}[1]{\texttt{#1}}
\expandafter\ifx\csname urlstyle\endcsname\relax
  \providecommand{\doi}[1]{doi: #1}\else
  \providecommand{\doi}{doi: \begingroup \urlstyle{rm}\Url}\fi

\bibitem[Akaike(1974)]{1100705}
H.~Akaike.
\newblock A new look at the statistical model identification.
\newblock \emph{IEEE Transactions on Automatic Control}, 19\penalty0
  (6):\penalty0 716--723, 1974.

\bibitem[Akaike(1973)]{akaike1973information}
Hirotogu Akaike.
\newblock \emph{Information Theory and an Extension of the Maximum Likelihood
  Principle}, pages 199--213.
\newblock Springer New York, New York, NY, 1973.

\bibitem[Arlot and Massart(2009)]{JMLR:v10:arlot09a}
Sylvain Arlot and Pascal Massart.
\newblock Data-driven calibration of penalties for least-squares regression.
\newblock \emph{Journal of Machine Learning Research}, 10\penalty0
  (10):\penalty0 245--279, 2009.
\newblock URL \url{http://jmlr.org/papers/v10/arlot09a.html}.

\bibitem[Balkema and de~Haan(1974)]{10.1214/aop/1176996548}
A.~A. Balkema and L.~de~Haan.
\newblock {Residual Life Time at Great Age}.
\newblock \emph{The Annals of Probability}, 2\penalty0 (5):\penalty0 792 --
  804, 1974.

\bibitem[Birge and Massart(1995)]{10.1214/aos/1176324452}
Lucien Birge and Pascal Massart.
\newblock {Estimation of Integral Functionals of a Density}.
\newblock \emph{The Annals of Statistics}, 23\penalty0 (1):\penalty0 11 -- 29,
  1995.
\newblock \doi{10.1214/aos/1176324452}.
\newblock URL \url{https://doi.org/10.1214/aos/1176324452}.

\bibitem[Burnham and Anderson(2007)]{burnham2007model}
K.P. Burnham and D.R. Anderson.
\newblock \emph{Model Selection and Multimodel Inference: A Practical
  Information-Theoretic Approach}.
\newblock Springer New York, 2007.
\newblock ISBN 9780387224565.

\bibitem[Cochran(2007)]{cochran2007sampling}
W.G. Cochran.
\newblock \emph{Sampling Techniques, 3Rd Edition}.
\newblock A Wiley publication in applied statistics. Wiley India Pvt. Limited,
  2007.
\newblock ISBN 9788126515240.

\bibitem[Davis and Resnick(1984)]{10.1214/aos/1176346804}
Richard Davis and Sidney Resnick.
\newblock {Tail Estimates Motivated by Extreme Value Theory}.
\newblock \emph{The Annals of Statistics}, 12\penalty0 (4):\penalty0 1467 --
  1487, 1984.
\newblock \doi{10.1214/aos/1176346804}.
\newblock URL \url{https://doi.org/10.1214/aos/1176346804}.

\bibitem[de~Haan and Ferreira(2007)]{de2007extreme}
L.~de~Haan and A.~Ferreira.
\newblock \emph{Extreme Value Theory: An Introduction}.
\newblock Springer Series in Operations Research and Financial Engineering.
  Springer New York, 2007.
\newblock ISBN 9780387344713.

\bibitem[Dekkers et~al.(1989)Dekkers, Einmahl, and
  Haan]{10.1214/aos/1176347397}
A.~L.~M. Dekkers, J.~H.~J. Einmahl, and L.~De Haan.
\newblock {A Moment Estimator for the Index of an Extreme-Value Distribution}.
\newblock \emph{The Annals of Statistics}, 17\penalty0 (4):\penalty0 1833 --
  1855, 1989.
\newblock \doi{10.1214/aos/1176347397}.
\newblock URL \url{https://doi.org/10.1214/aos/1176347397}.

\bibitem[Dekkers and Haan(1989)]{10.1214/aos/1176347396}
Arnold L.~M. Dekkers and Laurens~De Haan.
\newblock {On the Estimation of the Extreme-Value Index and Large Quantile
  Estimation}.
\newblock \emph{The Annals of Statistics}, 17\penalty0 (4):\penalty0 1795 --
  1832, 1989.
\newblock \doi{10.1214/aos/1176347396}.
\newblock URL \url{https://doi.org/10.1214/aos/1176347396}.

\bibitem[Embrechts et~al.(2013)Embrechts, Kl{\"u}ppelberg, and
  Mikosch]{embrechts2013modelling}
P.~Embrechts, C.~Kl{\"u}ppelberg, and T.~Mikosch.
\newblock \emph{Modelling Extremal Events: for Insurance and Finance}.
\newblock Stochastic Modelling and Applied Probability. Springer Berlin
  Heidelberg, 2013.
\newblock ISBN 9783540609315.

\bibitem[{Fisher} and {Tippett}(1928)]{1928PCPS...24..180F}
R.~A. {Fisher} and L.~H.~C. {Tippett}.
\newblock {Limiting forms of the frequency distribution of the largest or
  smallest member of a sample}.
\newblock \emph{Proceedings of the Cambridge Philosophical Society},
  24\penalty0 (2):\penalty0 180, January 1928.

\bibitem[Galambos and Seneta(1973)]{10.2307/2038824}
J.~Galambos and E.~Seneta.
\newblock Regularly varying sequences.
\newblock \emph{Proceedings of the American Mathematical Society}, 41\penalty0
  (1):\penalty0 110--116, 1973.
\newblock ISSN 00029939, 10886826.

\bibitem[Gnedenko(1943)]{10.2307/1968974}
B.~Gnedenko.
\newblock Sur la distribution limite du terme maximum d'une serie aleatoire.
\newblock \emph{Annals of Mathematics}, 44\penalty0 (3):\penalty0 423--453,
  1943.
\newblock ISSN 0003486X.

\bibitem[Hurvich and Tsai(1989)]{10.1093/biomet/76.2.297}
Clifford~M. Hurvich and Chih-Ling Tsai.
\newblock {Regression and time series model selection in small samples}.
\newblock \emph{Biometrika}, 76\penalty0 (2):\penalty0 297--307, 06 1989.
\newblock ISSN 0006-3444.

\bibitem[Mikosch et~al.(1999)Mikosch, EURANDOM European Institute~for
  Statistics, and their Applications]{mikosch1999regular}
T.~Mikosch, Operations~Research EURANDOM European Institute~for Statistics,
  Probability, and their Applications.
\newblock \emph{Regular Variation, Subexponentiality and Their Applications in
  Probability Theory}.
\newblock EURANDOM report. Eindhoven University of Technology, 1999.

\bibitem[Neyman(1934)]{10.2307/2342192}
Jerzy Neyman.
\newblock On the two different aspects of the representative method: The method
  of stratified sampling and the method of purposive selection.
\newblock \emph{Journal of the Royal Statistical Society}, 97\penalty0
  (4):\penalty0 558--625, 1934.
\newblock ISSN 09528385.

\bibitem[Pickands(1975)]{10.1214/aos/1176343003}
James Pickands.
\newblock {Statistical Inference Using Extreme Order Statistics}.
\newblock \emph{The Annals of Statistics}, 3\penalty0 (1):\penalty0 119 -- 131,
  1975.

\bibitem[Schwarz(1978)]{10.1214/aos/1176344136}
Gideon Schwarz.
\newblock {Estimating the Dimension of a Model}.
\newblock \emph{The Annals of Statistics}, 6\penalty0 (2):\penalty0 461 -- 464,
  1978.

\bibitem[Su et~al.(2019)Su, Zhao, Niu, Liu, Sun, and
  Pei]{10.1145/3292500.3330672}
Ya~Su, Youjian Zhao, Chenhao Niu, Rong Liu, Wei Sun, and Dan Pei.
\newblock Robust anomaly detection for multivariate time series through
  stochastic recurrent neural network.
\newblock In \emph{Proceedings of the 25th ACM SIGKDD International Conference
  on Knowledge Discovery \& Data Mining}, KDD '19, page 2828–2837, New York,
  NY, USA, 2019. Association for Computing Machinery.
\newblock ISBN 9781450362016.

\bibitem[Sugiura(1978)]{doi:10.1080/03610927808827599}
Nariaki Sugiura.
\newblock Further analysts of the data by {A}kaike' s information criterion and
  the finite corrections.
\newblock \emph{Communications in Statistics - Theory and Methods}, 7\penalty0
  (1):\penalty0 13--26, 1978.

\bibitem[Wu and Keogh(2020)]{DBLP:journals/corr/abs-2009-13807}
Renjie Wu and Eamonn~J. Keogh.
\newblock Current time series anomaly detection benchmarks are flawed and are
  creating the illusion of progress.
\newblock \emph{CoRR}, abs/2009.13807, 2020.

\end{thebibliography}
\newpage

\end{document}